%% file: main.tex
\documentclass[pdflatex,sn-mathphys-num]{sn-jnl}

\usepackage{graphicx}
\usepackage{multirow}
\usepackage{amsmath,amssymb,amsfonts,amsthm}
\usepackage[title]{appendix}
\usepackage{xcolor}
\usepackage{textcomp}
\usepackage{manyfoot}
\usepackage{booktabs}
\usepackage{listings}

\usepackage{etoolbox,xspace}
\usepackage{thmtools,thm-restate}
\usepackage[sans]{dsfont}
\makeatletter
\@ifundefined{newcounteralias}{}{%
  \renewcommand\thmt@autorefsetup{%
    \@xa\def\csname\thmt@envname autorefname\@xa\endcsname
      \@xa{\thmt@thmname}%
  }%
}
\makeatother
\theoremstyle{thmstyleone}
\newtheorem{theorem}{Theorem}
\newtheorem{lemma}[theorem]{Lemma}

\newtheorem{corollary}[theorem]{Corollary}

\theoremstyle{thmstyletwo}

\newtheorem{remark}{Remark}

\theoremstyle{thmstylethree}
\newtheorem{definition}{Definition}

\def\tsc#1{\csdef{#1}{\textsc{\lowercase{#1}}\xspace}}
\tsc{WGM}
\tsc{QE}

\newcommand{\Var}{\text{Var}}

\newcommand{\x}{\mathbf{x}}
\renewcommand{\Pr}{\text{Pr}}
\newcommand{\R}{\mathbb{R}}

\begin{document}

% ---------- Full title only ----------
\title{Accurate Trace Estimation with Fewer Random Bits via Recursive TensorSketch}

% ---------- Authors ----------
\author*[1]{%
  \fnm{Mohammad Azhar} \sur{Khan}}
\email{cs24mtech12006@iith.ac.in}

\author[1]{%
  \fnm{Rameshwar} \sur{Pratap}~%
  }
\email{rameshwar@cse.iith.ac.in}

\author[1]{%
  \fnm{Amit} \sur{Sharma}~%
}
\email{cs24resch02002@iith.ac.in}

% ---------- Shared affiliation ----------
\affil[1]{%
  \orgname{Indian Institute of Technology Hyderabad},
  \orgaddress{%
    \street{Kandi},
    \city{Sangareddy},
    \postcode{502284},
    \state{Telangana},
    \country{India}%
  }%
}

% ---------- Abstract ----------
\abstract{%
We consider the problem of estimating the trace of an implicit matrix
$\mathbf{A} \in \R^{d^p\times d^p}$ that can only be accessed through
matrix-vector products queries. The \textit{Hutchinson trace estimator}%
~\cite{Girard1987algorithme, article-hutchinson} is a classical sketching
method for this problem. Their estimator,
$H_{m}(\mathbf{A}) = \frac{1}{m}
\sum_{i=1}^{m} {\mathbf{z}^{(i)}}^T \mathbf{A} \mathbf{z}^{(i)}, \quad
\text{where } \ {\mathbf{z}^{(i)}}\in \mathbb{R}^{d^p}$,
and $z^{(i)}_j \in \mathcal{N}(0, 1), j\in [d^p]$, satisfies the following
guarantees: (i)
$\mathbb{E}[H_{m}(\mathbf{A})]=\operatorname{tr}(\mathbf{A})$, and (ii)
$\mathrm{Var}[H_{m}(\mathbf{A})]=\frac{2}{m}||\mathbf{A}||_F^2$.
Generating one query vector $\mathbf{z}^{(i)}$ requires $O(d^p)$ random
bits; thus, $m$ queries require $O(md^p)$ random bits, which can be
prohibitive in large-scale applications. Recent work by Meyer et
al.~\cite{meyer2025hutchinsonsestimatorbadkroneckertraceestimation}
proposes a variant of the Hutchinson trace estimator in which each query
vector in $\mathbb{R}^{d^p}$ is constructed as the Kronecker product of
$p$ random vectors in $\mathbb{R}^d$, requiring $O(mpd)$ random bits for
$m$ query vectors. The estimator
of~\cite{meyer2025hutchinsonsestimatorbadkroneckertraceestimation} is
unbiased; however, its variance grows exponentially with $p$. In this
work, we address this limitation by proposing a sketching-based estimator
that requires $O\!\big(p (d + m)\log m\big)$ random bits, yields an
unbiased estimate of the trace, and simultaneously achieves a variance
bound that grows polynomially with $p$.
}

% ---------- Keywords ----------
\keywords{Trace estimation, Randomized Algorithms, Numerical Linear Algebra,
  Sketching Algorithms, Implicit linear operators}

\maketitle

% ---------- Main manuscript ----------
% These files contain section text only, without another preamble
% or document environment.
%
% For the template's single-file submission requirement, replace
% these input commands with the corresponding section contents.

\input{introduction}

\input{related_work}

\input{background}
\input{real_rts_analysis}
\input{complex_rts_analysis}
\input{conclusion}

% ---------- Appendix ----------
\appendix
\input{appendix}

% ---------- Bibliography ----------
\bibliography{reference}

\end{document}

%% file: introduction.tex
\section{Introduction}

A central problem in scientific computing is the estimation of the trace of a large matrix $\mathbf{A} \in \mathbb{R}^{d^p\times d^p}$ when explicit access to its entries is restricted. Instead, the matrix is accessible only through an oracle that returns matrix–vector products $\mathbf{A}\mathbf{x}$ for arbitrary vectors $\mathbf{x} \in \mathbb{R}^{d^p}$. Under this restricted access model, the goal is to develop efficient algorithms that approximate the \textit{trace} of matrix $\mathbf{A}$ while minimizing the number of oracle queries.  The \textit{matrix–vector} oracle model, also referred to as the \textit{implicit matrix} model, is a widely adopted computational framework in the numerical linear algebra community~\cite{sun2021querying, chen2023krylov, DBLP:journals/nla/HalikiasT24,DBLP:conf/stoc/BakshiCW22,DBLP:journals/siammax/PerssonCK22,DBLP:conf/sosa/MeyerMMW21}. The trace estimation problem in \textit{implicit matrix} model can be solved exactly using $D=d^p$ oracle queries by using the standard basis vectors $\mathbf{e}_1, \mathbf{e}_2, \ldots, \mathbf{e}_D$ via the following estimator $\operatorname{tr}(\mathbf{A})=\sum_{i=1}^D \mathbf{e}_i^T \mathbf{A}\mathbf{e}_i$. Each term in the summation corresponds to a single diagonal entry of $\mathbf{A}$, leading to a total of $O(d^p)$ matrix–vector queries. However, the computational cost associated with such a large number of oracle queries is prohibitive. 

\noindent The seminal algorithm due to Girard and Hutchinson~\cite{Girard1987algorithme, article-hutchinson}, known as the \textit{Hutchinson trace estimator}, provides an efficient approximation for trace estimation. 
Given an implicit matrix $\mathbf{A} \in \mathbb{R}^{d^p \times d^p}$, the Hutchinson trace estimator is  defined as, 
$
    H(\mathbf{A})=  {\mathbf{z}}^T \mathbf{A} \mathbf{z},
$
where $\mathbf{z} \in \mathbb{R}^{d^p}$ with i.i.d.\ entries  ${z}_i\sim \mathcal{N}(0, 1)$, for $i\in [d^p]$. The estimator satisfies the following guarantee 
$
\mathbb{E}[H(\mathbf{A})]=\operatorname{tr}(\mathbf{A}), \quad \mathrm{Var}[H(\mathbf{A})]=2||\mathbf{A}||_F^2.
$ Furthermore, to reduce the variance, the above procedure is repeated independently 
$m$ times, and the final estimator is defined as the mean of these 
$m$ estimators, that is, 
\begin{align}\label{huts_estimator_eq}
    H_{m}(\mathbf{A}) = \frac{1}{m} \sum_{i=1}^{m} {\mathbf{z}^{(i)}}^T \mathbf{A} \mathbf{z}^{(i)}, \quad \text{where } \ {\mathbf{z}^{(i)}}\in \mathbb{R}^{d^p}, z^{(i)}_j \in \mathcal{N}(0, 1), \text{~and~}j\in [d^p].
\end{align}

\noindent The estimator satisfies the following guarantee 
\begin{align}\label{huts_estimator_eq_guarantees}
    \mathbb{E}[H_{m}(\mathbf{A})]=\operatorname{tr}(\mathbf{A}), \quad \mathrm{Var}[H_{m}(\mathbf{A})]=\frac{2}{m}||\mathbf{A}||_F^2.
\end{align}

Subsequent work further improved the sample-complexity analysis of classical trace estimators.~\cite{roosta2015improved} derived sharper bounds for Gaussian, Rademacher, and unit-vector estimators, including a Hutchinson bound without the rank-dependent term appearing in the earlier analysis. Under the quadratic-form query model,~\cite{wimmer2014optimal} characterized optimal linear nonadaptive estimators and established lower bounds for multiplicative trace approximation. More recently, ~\cite{jiang2021optimal} studied nearly optimal high-probability trace-estimation sketches under matrix-vector access. These works primarily seek to reduce the number of oracle queries, whereas our work studies the complementary objective of reducing the randomness required to construct queries in kronecker-structured spaces.

 The Hutchinson trace estimator, as stated in Equation~\eqref{huts_estimator_eq}, requires $m$ random vectors $\mathbf{z}^{(i)} \in \mathbb{R}^{d^p}$. Consequently, the total number of random bits required by the estimator is $O(md^p)$. Structured random queries based on Kronecker-structured random vectors for trace estimation were proposed by~\cite{bujanovic2021norm}.
 Building on this idea,~\cite{meyer2025hutchinsonsestimatorbadkroneckertraceestimation} addresses the challenge of random bits and suggests an estimator that requires significantly fewer random bits. Their estimator construct a query vector $\x \in \mathbb{R}^{d^p}$ as a Kronecker product of $p$ independent random vectors in $\mathbb{R}^d$, that is, $\mathbf{x} = \mathbf{x}_1 \otimes \cdots \otimes \mathbf{x}_p,$ where  $\mathbf{x}_i \in \mathbb{R}^d$ for $i\in [p]$.
Therefore, generating a single random vector $\mathbf{x}$ requires $O(dp)$ random bits, and the final estimator - formed by averaging $m$ such estimators - requires $O(mdp)$ random bits, in contrast to the $O(md^p)$ random bits required by the Hutchinson trace estimator~\cite{article-hutchinson}.
However, the main limitation of their approach is that the variance of their estimator grows exponentially with $p$, that is, $O\left(\frac{3^p}{m} \left(\operatorname{tr}\left(\mathbf{A}\right)\right)^2\right)$ - making the estimator less accurate. This motivates the problem considered in this paper, which we state as follows:

\textit{\textbf{Problem Statement}: Given an implicit matrix $\mathbf{A} \in \mathbb{R}^{d^p \times d^p}$, the goal is to design a trace estimation algorithm that requires asymptotically fewer random bits and simultaneously provides an accurate trace estimation.}

We draw inspiration from recent advances in sketching techniques to address this problem. In particular,~\cite{doi:10.1137/1.9781611975994.9} introduced \texttt{Recursive TensorSketch}, an efficient sketching method for approximating high-degree polynomial kernels. Their approach enables effective compression of polynomial kernels using a sketching dimension that scales only polynomially with the degree of the kernel function. We leverage \texttt{Recursive TensorSketch} to design a trace estimator for an implicit matrix $\mathbf{A} \in \mathbb{R}^{d^p \times d^p}$. We prove that the proposed estimator is unbiased, requires asymptotically fewer random bits than the classical Hutchinson trace estimator~\cite{article-hutchinson}, and admits variance bounds with only polynomial dependence on the  $p$. This constitutes an exponential improvement in the dependence on $p$ over the variance bounds established for Kronecker-Hutchinson estimators in~\cite{meyer2025hutchinsonsestimatorbadkroneckertraceestimation}. We summarize our key contributions as follows:\\

\noindent \textbf{Our Contribution:}
\begin{itemize}
    \item We propose a novel trace estimator for an implicit matrix $\mathbf{A} \in \mathbb{R}^{d^p \times d^p}$. Our estimator, $\operatorname{tr}\big(\Pi^{p} \mathbf{A} (\Pi^{p})^\top\big)$ (see Definition~\ref{def:rts_trace_estimator}), leverages the \texttt{Recursive TensorSketch} matrix $\Pi^{p} \in \mathbb{R}^{m \times d^p}$ proposed by~\cite{doi:10.1137/1.9781611975994.9}.
    \item We show that the proposed estimator is unbiased and derive a variance bound that is a polynomial of degree $ p$. Further, when the input matrix $\mathbf{A}$ is a Positive Semi-Definite (PSD) matrix then the variance of our estimator achieves exponential improvement over the estimator proposed in~\cite{meyer2025hutchinsonsestimatorbadkroneckertraceestimation}. Furthermore, the number of random bits required by our estimator is $O\!\big(p (d + m)\log m\big)$, which is asymptotically better to that of required in~\cite{meyer2025hutchinsonsestimatorbadkroneckertraceestimation}. Also, it is exponentially smaller than the Hutchinson trace estimator ~\cite{article-hutchinson}, which requires $O(md^p)$ random bits.
    \item We further propose a complex-valued analogue of our estimator (Definition~\ref{def:complex_rts_trace_estimator}), in which the entries of the \texttt{Recursive TensorSketch} matrix $\Pi^{p} \in \mathbb{C}^{m \times d^p}$ are sampled from complex random variables. We show that this variant achieves variance that is exponentially smaller than that of~\cite{meyer2025hutchinsonsestimatorbadkroneckertraceestimation} conditioned that the input matrix is a PSD matrix, while simultaneously requiring asymptotically fewer random bits.
\end{itemize}

There are two complementary approaches to reducing the amount of randomness required by randomized sketching algorithms. One approach is to redesign the sketching construction so that its random choices are shared through an underlying structure. This is the approach pursued in this paper through \texttt{Recursive TensorSketch}. A different approach is to retain an existing sketching construction while reducing its randomness by implementing the underlying hash functions using tabulation-based hashing~\cite{DBLP:conf/stoc/CarterW77}. Prior work has shown that simple and double tabulation hashing can provide strong concentration guarantees for a variety of randomized algorithms and data structures, including Minwise Independent Permutations and Cuckoo Hashing~\cite{patrascu2011power,thorup2013simple}. However, hash functions generated by tabulation hashing are generally not $4$-wise independent and, therefore, cannot be directly used in standard trace estimation algorithms~\cite{Girard1987algorithme, article-hutchinson}, where such independence is required by the analysis. Nevertheless, tabulation hashing may offer an alternative approach for reducing the random seed required by existing trace-estimation sketches. An interesting direction for future work is to investigate whether such implementations can preserve the required JL moment properties and variance guarantees.

\noindent 
Trace estimation is a fundamental primitive with numerous large-scale applications. 
Hutchinson trace estimator~\cite{article-hutchinson} has been used extensively as a key subroutine in various applications such as sublinear-time spectral density estimation~\cite{10.1145/3519935.3520009}, 
faster eigenvalue approximation~\cite{10.1145/3564246.3585102},  counting triangles in large graphs~\cite{10.1109/ICDM.2008.72,avron2010counting},  approximating spectral sums \cite{DBLP:journals/siamsc/HanMAS17}, and estimating    $||\mathbf{A}||_F$  (using the well-known identity $||\mathbf{A}||_F^2= \operatorname{tr} \left(\mathbf{A}^T\mathbf{A} \right))$ to name a few. Our proposed estimator can be plugged into these applications in place of~\cite{article-hutchinson} to yield a randomness-efficient algorithm with almost the same accuracy.

%% file: related_work.tex
\section{Related Work}
% Requires \usepackage{graphicx}

\begin{table}[h]
    \centering
    \scriptsize
    \renewcommand{\arraystretch}{1.3}
    \setlength{\tabcolsep}{3pt}
    \resizebox{\textwidth}{!}{%
    \begin{tabular}{
        |p{3.5cm}
        |p{5.0cm}
        |p{3.8cm}
        |p{5.2cm}|
    }
        \hline
        \textbf{Estimator}
        & \textbf{Variance}
        & \textbf{Randomness}
        & \textbf{Bound on \# samples for an
          $(\varepsilon,\delta)$-approx.} \\
        \hline

        Hutchinson (Gaussian)
        \cite{10.1145/1944345.1944349}
        & $\displaystyle
          =\frac{2}{m}\|\mathbf A\|_F^2$
        & $O(md^p)$
        & $\displaystyle
          20\varepsilon^{-2}
          \ln\!\left(\frac{2}{\delta}\right)$ \\
        \hline

        Hutchinson (Rademacher)
        \cite{10.1145/1944345.1944349}
        & $\displaystyle
          =\frac{2}{m}
          \left(
              \|\mathbf A\|_F^2
              -
              \sum_{i=1}^{n}A_{ii}^2
          \right)$
        & $O(md^p)$
        & $\displaystyle
          6\varepsilon^{-2}
          \ln\!\left(\frac{2r}{\delta}\right)$ \\
        \hline

        Normalized Rayleigh quotient
        \cite{10.1145/1944345.1944349}
        & $= \frac{d^p}{m} \left(\sum_i^{d^p} A_{ii}^2 - (\operatorname{tr}(\mathbf{A}))^2 \right)$ 
        & $O(md^p)$
        & $\displaystyle
          \frac{n^2\kappa_f^2(\mathbf A)}
               {2r^2\varepsilon^2}
          \ln\!\left(\frac{2}{\delta}\right)$ \\
        \hline

        Unit vector estimator
        \cite{10.1145/1944345.1944349}
        & $= \frac{d^p}{m} \left(\sum_i^{d^p} A_{ii}^2 - (\operatorname{tr}(\mathbf{A}))^2 \right)$
        & $O(mp\log d)$
        & $\displaystyle
          \frac{r_D^2(\mathbf A)}
               {2\varepsilon^2}
          \ln\!\left(\frac{2}{\delta}\right)$ \\
        \hline

        Mixed unit vector estimator
        \cite{10.1145/1944345.1944349}
        & --
        & $O(mp\log d)$
        & $\displaystyle
          8\varepsilon^{-2}
          \ln\!\left(\frac{4n^2}{\delta}\right)
          \ln\!\left(\frac{4}{\delta}\right)$ \\
        \hline

        Kronecker-Hutchinson (real)
        \cite{meyer2025hutchinsonsestimatorbadkroneckertraceestimation}
        & $\displaystyle
          \le
          \frac{3^p}{m}
          \bigl(\operatorname{tr}(\mathbf A)\bigr)^2$
        & $O(mpd)$
        & $\displaystyle
              \frac{3^p}{\varepsilon^2}
              \ln\!\frac{1}{\delta}$ \\
        \hline

        Kronecker-Hutchinson (complex)
        \cite{meyer2025hutchinsonsestimatorbadkroneckertraceestimation}
        & $\displaystyle
          \le
          \frac{2^p}{m}
          \bigl(\operatorname{tr}(\mathbf A)\bigr)^2$
        & $O(mpd)$
        & $\displaystyle
              \frac{2^p}{\varepsilon^2}
              \ln\!\frac{1}{\delta}$ \\
        \hline

        Recursive TensorSketch (real) [this paper]
        & $\displaystyle
          \le
          \left(
              \frac{10p}{m}
              +
              \frac{100p^2}{m^2}
          \right)
          \bigl(\operatorname{tr}(\mathbf A)\bigr)^2$
        & $O\!\bigl(p(d+m)\log m\bigr)$
        & $\displaystyle
          \frac{20p}{\varepsilon^2\delta}$ \\
        \hline

        Recursive TensorSketch (complex) [this paper]
        & $\displaystyle
          \le
          \left(
              \frac{4p}{m}
              +
              \frac{16p^2}{m^2}
          \right)
          \bigl(\operatorname{tr}(\mathbf A)\bigr)^2$
        & $O\!\bigl(p(d+m)\log m\bigr)$
        & $\displaystyle
          \frac{8p}{\varepsilon^2\delta}$ \\
        \hline
    \end{tabular}%
    }

\caption{
Comparison of trace estimators for a fixed nonzero symmetric positive
semidefinite matrix
$\mathbf A\in\mathbb R^{d^p\times d^p}$, where
$n=d^p$, $r=\operatorname{rank}(\mathbf A)$,
$\kappa_f(\mathbf A)
=\lambda_{\max}(\mathbf A)/\lambda_{\min}^{+}(\mathbf A)$, and
$r_D(\mathbf A)
=n\max_i A_{ii}/\operatorname{tr}(\mathbf A)$.
An $(\varepsilon,\delta)$-approximation $\widehat t$ satisfies
$\Pr[|\widehat t-\operatorname{tr}(\mathbf A)|
\le\varepsilon\operatorname{tr}(\mathbf A)]\ge1-\delta$.
The table highlights the trade-off among variance, randomness, and
the number of samples required for an
$(\varepsilon,\delta)$-approximation. Classical Hutchinson estimators
require $O(md^p)$ randomness, whereas Kronecker-Hutchinson estimators
reduce this requirement to $O(mpd)$ but incur an exponential
dependence on $p$ in both variance and sample complexity. In
contrast, the proposed Recursive TensorSketch estimators require
$O\!\bigl(p(d+m)\log m\bigr)$ randomness and have polynomial, rather
than exponential, dependence on $p$. 
}
    \label{tab:related_work_trace_estimators}
\end{table}

Trace estimation has a long history in randomized numerical linear algebra. The seminal algorithm by Girard and Hutchinson~\cite{Girard1987algorithme, article-hutchinson}, known as the Hutchinson trace estimator, provides an efficient method for approximating the trace. Given an implicit matrix $\mathbf{A} \in \mathbb{R}^{d^p \times d^p}$, the Hutchinson trace estimator is defined as $H(\mathbf{A}) = \mathbf{z}^\top \mathbf{A} \mathbf{z}$, where $\mathbf{z} \in \mathbb{R}^{d^p}$ is a random vector with i.i.d.\ entries, typically drawn from $\mathcal{N}(0,1)$ or a Rademacher distribution. The estimator satisfies $\mathbb{E}[H(\mathbf{A})] = \operatorname{tr}(\mathbf{A})$ and $\operatorname{Var}(H(\mathbf{A})) = 2\|\mathbf{A}\|_F^2$. To reduce the variance, the estimator is repeated independently $m$ times. Let $\mathbf{z}^{(1)}, \dots, \mathbf{z}^{(m)} \in \mathbb{R}^{d^p}$ be independent copies of $\mathbf{z}$, and define $H_m(\mathbf{A}) = \frac{1}{m} \sum_{i=1}^{m} (\mathbf{z}^{(i)})^\top \mathbf{A} \mathbf{z}^{(i)}$. Then, $\mathbb{E}[H_m(\mathbf{A})] = \operatorname{tr}(\mathbf{A})$ and $\operatorname{Var}(H_m(\mathbf{A})) = \frac{2}{m}\|\mathbf{A}\|_F^2$. Each query requires generating a random vector $\mathbf{z} \in \mathbb{R}^{d^p}$, which uses $O(d^p)$ random bits, leading to a total randomness of $O(md^p)$ for $m$ samples. 
 
In the classical setting, trace estimators are based on the form $X := \mathbf{z}^\top \mathbf{A} \mathbf{z}$, where $\mathbf{z}$ is a random query vector. Avron and Toledo~\cite{10.1145/1944345.1944349} study several such estimators that differ in the choice of the distribution of $\mathbf{z}$. In particular, they analyze the Hutchinson estimator under different choices of the query vector $\mathbf{z}$, including the case where its entries are \textit{i.i.d.} $\mathcal{N}(0,1)$, the variant with \textit{i.i.d.} Rademacher entries, and unit-vector-based estimators in which $\mathbf{z}$ is sampled uniformly from the standard basis.
They also study a mixed unit-vector estimator of the form 
$X_M := \mathbf{e}^\top \mathbf{F} \mathbf{A} \mathbf{F}^\top \mathbf{e}$, 
where each $\mathbf{e}$ is sampled uniformly from the standard basis vectors, and $\mathbf{F}$ is a fixed orthogonal transform (e.g. Hadamard matrix). 
Their work provides high-probability guarantees and highlights the trade-off between variance and the number of random bits used in these estimators.

% A recent work of Meyer et al.~\cite{meyer2025hutchinsonsestimatorbadkroneckertraceestimation} shows that the Kronecker Hutchinson estimator suffers from an exponential dependence on $p$ for variance. 
% In particular, when the estimator is averaged over $m$ independent samples, the variance of the real-valued version scales as $O\!\left(\frac{3^p}{m}\right)$, while the complex-valued version improves this to $O\!\left(\frac{2^p}{m}\right)$. 
% Although their approach reduces the randomness required to $O(mpd)$ by constructing query vectors as Kronecker products of $p$ independent vectors in $\mathbb{R}^d$, the exponential dependence on $p$ makes the estimator increasingly inaccurate as $p$ grows.

A recent work by Meyer et al.~\cite{meyer2025hutchinsonsestimatorbadkroneckertraceestimation} proposed a Kronecker-structured trace estimator to reduce the number of random bits required for trace estimation. Their query vector $\mathbf{x} \in \mathbb{R}^{d^p}$ is constructed as the Kronecker product of $p$ independent random vectors in $\mathbb{R}^d$, namely, $\mathbf{x} = \mathbf{x}_1 \otimes \cdots \otimes \mathbf{x}_p$, where $\mathbf{x}_i \in \mathbb{R}^d$ for each $i \in [p]$. Consequently, generating a single query vector $\mathbf{x}$ requires only $O(dp)$ random bits, and an estimator obtained by averaging $m$ independent samples requires $O(mdp)$ random bits. This is substantially smaller than the $O(md^p)$ random bits required by the classical Hutchinson trace estimator~\cite{article-hutchinson}. However, this reduction in randomness comes at the cost of increased variance. In particular, the variance of the estimator scales as $O\!\left(\frac{3^p}{m}\right)$, whereas a complex-valued variant improves this dependence to $O\!\left(\frac{2^p}{m}\right)$. Thus, although the Kronecker-structured approach significantly reduces the randomness requirement by constructing each query vector from $p$ independent vectors in $\mathbb{R}^d$, the exponential dependence of the variance on $p$ can make the estimator increasingly inaccurate as $p$ grows.

% Inspired by the work of Ahle et al.~\cite{doi:10.1137/1.9781611975994.9}, we propose a trace estimator based on \texttt{Recursive TensorSketch}. 
% For an implicit matrix $\mathbf{A} \in \mathbb{R}^{d^p \times d^p}$, our estimator $T(\mathbf{A}) := \operatorname{tr}\!\left(\Pi^{p} \mathbf{A} (\Pi^{p})^\top\right)$ is unbiased and admits a variance bound of $O\!\left(\frac{3}{m}\right)$, which is independent of $p$. 
% This improves exponentially the variance guarantees over Kronecker-Hutchinson estimators whose variance scales as $O\!\left(\frac{3^p}{m}\right)$ and $O\!\left(\frac{2^p}{m}\right)$ in the real and complex settings, respectively. 
% Moreover, the randomness complexity of our estimator is $O\!\big(p(d+m)\log m\big)$, which is asymptotically smaller than the $O(mpd)$ randomness required by Kronecker-Hutchinson estimator~\cite{meyer2025hutchinsonsestimatorbadkroneckertraceestimation} and exponentially smaller than the $O(md^p)$ randomness required by classical Hutchinson estimator~\cite{Girard1987algorithme, article-hutchinson}. 
% We further extend our framework to a complex-valued setting, obtaining improved variance bounds while retaining the same asymptotic randomness.

% In this work, we address this challenge by proposing an estimator that requires asymptotically fewer random bits than the estimator of~\cite{meyer2025hutchinsonsestimatorbadkroneckertraceestimation}, while maintaining a variance that grows only polynomially with $p$. 

In this work, we address this challenge by proposing an estimator that requires asymptotically fewer random bits while ensuring that its variance grows only polynomially with $p$.
Our work is inspired by the work of~\cite{doi:10.1137/1.9781611975994.9}, which proposed a recursive sketching algorithm \texttt{Recursive TensorSketch} for compressing polynomial kernels. We show that \texttt{Recursive TensorSketch} can also be leveraged to design a trace estimator that significantly reduces the number of random bits required while maintaining low variance.
% Our Inspired by the work of~\cite{doi:10.1137/1.9781611975994.9}, we propose a trace estimator based on \texttt{Recursive TensorSketch}. 
For an implicit matrix $\mathbf{A} \in \mathbb{R}^{d^p \times d^p}$, our estimator $T(\mathbf{A}) := \operatorname{tr}\!\left(\Pi^{p} \mathbf{A} (\Pi^{p})^\top\right)$ is unbiased and admits a variance bound of $O\!\left(\left(\frac{10p}{m} + \frac{100p^2}{m^2}\right)\bigl(\operatorname{tr}(A)\bigr)^2\right)$, which is polynomial of $p$. 
% Furthermore, when $\mathbf{A}$ is positive semi-definite, the relation $\|\mathbf{A}\|_F^2 \leq (\operatorname{tr}(\mathbf{A}))^2$ (see Remark~\ref{remark_on_trace_and_frobenius_relation}) implies that our bound is at most $O\!\left(\frac{3(\operatorname{tr}(\mathbf{A}))^2}{m}\right)$. 
Our estimator yields a exponential improvement over Kronecker-Hutchinson estimators~\cite{meyer2025hutchinsonsestimatorbadkroneckertraceestimation}, whose variance scales as $O\!\left(\frac{3^p(\operatorname{tr}(\mathbf{A}))^2}{m}\right)$ and $O\!\left(\frac{2^p(\operatorname{tr}(\mathbf{A}))^2}{m}\right)$ in the real and complex settings, respectively.
Moreover, the randomness complexity of our estimator is $O\!\big(p(d+m)\log m\big)$, which is asymptotically smaller than the $O(mpd)$ randomness required by Kronecker-Hutchinson estimator~\cite{meyer2025hutchinsonsestimatorbadkroneckertraceestimation} and exponentially smaller than the $O(md^p)$ randomness required by classical Hutchinson estimator~\cite{Girard1987algorithme, article-hutchinson}. 
We further extend our framework to a complex-valued setting, obtaining improved variance bounds with fewer random bits than the corresponding estimator of~\cite {meyer2025hutchinsonsestimatorbadkroneckertraceestimation}. 

A standard way to evaluate a randomized trace estimator is through an
$(\varepsilon,\delta)$-approximation guarantee
~\cite{10.1145/1944345.1944349}. For a fixed nonzero positive
semidefinite matrix $\mathbf{A}$, an estimator $\widehat{t}$ is called
an $(\varepsilon,\delta)$-approximation of
$\operatorname{tr}(\mathbf{A})$ if
\begin{align}
    \Pr\!\left[
        \left|
            \widehat{t}-\operatorname{tr}(\mathbf{A})
        \right|
        \leq
        \varepsilon\operatorname{tr}(\mathbf{A})
    \right]
    \geq
    1-\delta.
\end{align}
Here, $\varepsilon$ specifies the allowed relative error and $\delta$
specifies the failure probability. This guarantee is important because
it translates variance or concentration bounds into a required sample
or sketch size, thereby allowing different trace estimators to be
compared in terms of accuracy.
We summarize our comparison with the baseline methods in Table~\ref{tab:related_work_trace_estimators}, which highlights the trade-offs among variance, randomness, and the number of samples required to obtain an $(\varepsilon,\delta)$-approximation.

%% file: background.tex
\section{Background}
\textbf{Notation.}
We denote vectors by lowercase bold letters (e.g., $\mathbf{x}$) and matrices by uppercase bold letters (e.g., $\mathbf{M}$). For a matrix $\mathbf{M} \in \mathbb{R}^{n \times n}$, $\mathrm{tr}(\mathbf{M})$ denotes its trace and $\|\mathbf{M}\|_F$ its Frobenius norm. We write $\mathbf{M} \succeq 0$ to indicate that $\mathbf{M}$ is symmetric positive semi-definite. For a positive integer $d$, we denote $[d] := \{1,2,\dots,d\}$. Kronecker product is denoted by $\otimes$, and for vectors $\mathbf{x}_1, \dots, \mathbf{x}_p \in \mathbb{R}^d$, we write $\mathbf{x} = \mathbf{x}_1 \otimes \cdots \otimes \mathbf{x}_p \in \mathbb{R}^{d^p}$. We use $\mathbb{E}[\cdot]$ and $\mathrm{Var}(\cdot)$ to denote expectation and variance, respectively. Throughout the paper, $i \in [m]$ indexes sketch dimensions. For a complex vector or matrix in the field $ \mathbb{C}$ , we denote by $(\cdot)^*$ its conjugate transpose. Finally, $\mathds{1}[\cdot]$ denotes the indicator function.
% {\color{red} 152-157 can be removed if space crunch..}
%{\color{red} State both gaussian and rademacher estimator }
We first state the classical Hutchinson Trace Estimator and its concentration guarantee. 
\begin{theorem}[Hutchinson Trace Estimator~\cite{Girard1987algorithme, article-hutchinson}]
\label{lem:hutchinson}
Let $\mathbf{A} \in \mathbb{R}^{d^p \times d^p}$ be any implicit matrix. 
Then, the trace estimator is defined as
\(
H(\mathbf{A}) := \mathbf{z}^\top \mathbf{A} \mathbf{z}, \text{ where $\mathbf{z} \in \mathbb{R}^{d^p}$ such that  $z_i\sim\mathcal{N}(0,1)$.}
\)

% \begin{itemize}
%     \item If the entries of $\mathbf{y}$ are i.i.d.\ Rademacher random variables, i.e.,
%     \(
%     \mathbb{P}(y_i = +1) = \mathbb{P}(y_i = -1) = \tfrac{1}{2},
%     \)
%     then
%     \(
%     \mathbb{E}[H(\mathbf{M})] = \operatorname{tr}(\mathbf{M}), \quad 
%     \operatorname{Var}(H(\mathbf{M})) 
%     = 2\left( \|\mathbf{M}\|_F^2 - \sum_{i=1}^n M_{ii}^2 \right).
%     \)

%     \item If the entries of $\mathbf{y}$ are i.i.d.\ Gaussian random variables, i.e., $y_i \sim \mathcal{N}(0,1)$, then
%     \(
%     \mathbb{E}[H(\mathbf{M})] = \operatorname{tr}(\mathbf{M}), \quad 
%     \operatorname{Var}(H(\mathbf{M})) 
%     = 2\|\mathbf{M}\|_F^2.
%     \)
% \end{itemize}

% \medskip
% \noindent Now consider 

% an implicit matrix $\mathbf{A} \in \mathbb{R}^{d^p \times d^p}$. 

%To reduce variance, the estimator is repeated $m$ times independently. 
Let $\mathbf{z}^{(1)}, \dots, \mathbf{z}^{(m)} \in \mathbb{R}^{d^p}$ be \ {i.i.d.}  copies of $\mathbf{z}$, then the final estimator is defined as follows
\begin{align}\label{m_times_hut_estm}
    H_m(\mathbf{A}) &= \frac{1}{m} \sum_{i=1}^{m} \big(\mathbf{z}^{(i)}\big)^\top \mathbf{A} \mathbf{z}^{(i)}.\\
    \text{Then, }\quad \mathbb{E}[H_m(\mathbf{A})] = \operatorname{tr}(\mathbf{A}),& \quad 
\operatorname{Var}(H_m(\mathbf{A})) = \frac{2}{m}\|\mathbf{A}\|_F^2.
\end{align}
\end{theorem}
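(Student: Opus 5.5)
The plan is to prove the single-query statements first, namely $\mathbb{E}[H(\mathbf{A})]=\operatorname{tr}(\mathbf{A})$ and $\operatorname{Var}(H(\mathbf{A}))=2\|\mathbf{A}\|_F^2$, and then pass to $H_m$ using independence. Set $n=d^p$. Since $\mathbf{z}^\top\mathbf{A}\mathbf{z}=\mathbf{z}^\top\mathbf{S}\mathbf{z}$ with $\mathbf{S}=(\mathbf{A}+\mathbf{A}^\top)/2$, we may take $\mathbf{A}$ symmetric. This reduction changes neither the trace nor the quadratic form. Strictly, for nonsymmetric $\mathbf{A}$ the variance comes out as $2\|\mathbf{S}\|_F^2\le 2\|\mathbf{A}\|_F^2$, and equality holds in the symmetric case. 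I will state the result for symmetric $\mathbf{A}$, which is the setting of the paper's PSD comparisons.

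\textbf{Expectation.} Expand $H(\mathbf{A})=\sum_{j,k}A_{jk}z_jz_k$. Linearity of expectation together with $\mathbb{E}[z_jz_k]=\mathds{1}[j=k]$ gives $\mathbb{E}[H(\mathbf{A})]=\sum_j A_{jj}=\operatorname{tr}(\mathbf{A})$.

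\textbf{Variance.} I would avoid the fourth-moment bookkeeping by using rotation invariance of the Gaussian. Write $\mathbf{A}=\mathbf{U}\boldsymbol{\Lambda}\mathbf{U}^\top$ with $\mathbf{U}$ orthogonal. Then $\mathbf{w}=\mathbf{U}^\top\mathbf{z}\sim\mathcal{N}(0,\mathbf{I}_n)$, so
\[
H(\mathbf{A})=\sum_{j=1}^n\lambda_j w_j^2,
\]
where the $w_j^2$ are independent $\chi^2_1$ variables with variance $\mathbb{E}[w^4]-1=3-1=2$. Independence then gives
\[
\operatorname{Var}(H(\mathbf{A}))=2\sum_j\lambda_j^2=2\|\mathbf{A}\|_F^2.
\]
As a cross-check, the direct route uses Isserlis' formula $\mathbb{E}[z_jz_kz_lz_r]=\delta_{jk}\delta_{lr}+\delta_{jl}\delta_{kr}+\delta_{jr}\delta_{kl}$. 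It yields $\mathbb{E}[H^2]=\operatorname{tr}(\mathbf{A})^2+\operatorname{tr}(\mathbf{A}\mathbf{A}^\top)+\operatorname{tr}(\mathbf{A}^2)$, which equals $\operatorname{tr}(\mathbf{A})^2+2\|\mathbf{A}\|_F^2$ for symmetric $\mathbf{A}$.

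\textbf{Averaging.} The summands $(\mathbf{z}^{(i)})^\top\mathbf{A}\mathbf{z}^{(i)}$ are i.i.d. copies of $H(\mathbf{A})$. Linearity gives $\mathbb{E}[H_m(\mathbf{A})]=\operatorname{tr}(\mathbf{A})$. Variances of independent variables add, and the factor $1/m$ scales the variance by $1/m^2$, so
\[
\operatorname{Var}(H_m(\mathbf{A}))=\frac{1}{m^2}\cdot m\cdot 2\|\mathbf{A}\|_F^2=\frac{2}{m}\|\mathbf{A}\|_F^2.
\]

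The only real obstacle is the variance computation: either the fourth-moment combinatorics, or justifying the symmetric reduction. The orthogonal-diagonalization argument handles both cleanly.
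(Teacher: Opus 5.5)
The paper does not prove this theorem; it quotes it as a classical background result from Girard and Hutchinson, so there is no internal argument to compare against. Your proof is the standard one and is correct. Linearity with $\mathbb{E}[z_jz_k]=\delta_{jk}$ gives unbiasedness. The orthogonal change of variables $\mathbf{w}=\mathbf{U}^\top\mathbf{z}$ reduces $H(\mathbf{A})$ to $\sum_j\lambda_j w_j^2$, a sum of independent $\chi^2_1$ terms each with variance $2$. Averaging $m$ i.i.d.\ copies then divides the variance by $m$. Your Isserlis cross-check, $\mathbb{E}[H^2]=(\operatorname{tr}\mathbf{A})^2+\|\mathbf{A}\|_F^2+\operatorname{tr}(\mathbf{A}^2)$, is also right.

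Restricting to symmetric $\mathbf{A}$ is not a weakness of your proof; it is a correction the statement needs. As written (``any implicit matrix'', with equality), the variance identity is false. For $\mathbf{A}=\begin{pmatrix}0&1\\0&0\end{pmatrix}$ one has $H(\mathbf{A})=z_1z_2$, so $\operatorname{Var}(H(\mathbf{A}))=1$, while $2\|\mathbf{A}\|_F^2=2$. In general the variance equals $\|\mathbf{A}\|_F^2+\operatorname{tr}(\mathbf{A}^2)=2\|\mathbf{S}\|_F^2\le 2\|\mathbf{A}\|_F^2$, as you observed. So the statement holds with equality for symmetric $\mathbf{A}$ and only as an upper bound otherwise. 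This does no harm to the paper, which uses the result only for PSD matrices.

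Your proof also relies, correctly, on a hypothesis that is only implicit in the theorem as typeset: the entries of $\mathbf{z}$ must be i.i.d., i.e.\ $\mathbf{z}\sim\mathcal{N}(\mathbf{0},\mathbf{I})$. The statement only says $z_i\sim\mathcal{N}(0,1)$. Marginal normality alone is not enough: if all entries coincide, then even unbiasedness fails.
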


\begin{theorem}[High-Probability Error Bound~\cite{avron2010counting}]
\label{thm:avron_toledo_main}
Let $\mathbf{A} \succeq 0$ and let $H_{m}(\mathbf{A})$ be the estimator defined in Equation~\eqref{m_times_hut_estm} using 
Rademacher or Gaussian vectors. Then for any $\varepsilon, \delta \in (0,1)$,
it suffices to choose
\(
m = O\!\left( \frac{\log(1/\delta)}{\varepsilon^2} \right)
\)
samples to guarantee
\begin{align}
\mathrm{Pr}\!\left(
\left| H_{m}(\mathbf{A}) - \operatorname{tr}(\mathbf{A}) \right| 
\le \varepsilon \, \operatorname{tr}(\mathbf{A})
\right) \ge 1 - \delta.
\end{align}
\end{theorem}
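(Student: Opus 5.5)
To prove Theorem~\ref{thm:avron_toledo_main}, I will use a Chernoff-type bound for quadratic forms. It applies once $\mathbf{A}$ is diagonalized, and it bypasses the variance-only (Chebyshev) route, which would give a $1/\delta$ rather than a $\log(1/\delta)$ dependence.

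\textbf{Gaussian case.} Since $\mathbf{A}\succeq 0$ is symmetric, write $\mathbf{A}=\mathbf{U}\Lambda\mathbf{U}^\top$ with eigenvalues $\lambda_j\ge 0$. By rotational invariance, $\mathbf{U}^\top\mathbf{z}^{(i)}$ is again standard Gaussian. Hence
\begin{align*}
mH_m(\mathbf{A})=\sum_{i=1}^m\sum_{j}\lambda_j g_{ij}^2,
\end{align*}
where the $g_{ij}$ are i.i.d.\ $\mathcal{N}(0,1)$. This is a nonnegative combination of independent $\chi^2_1$ variables, whose moment generating function is $\mathbb{E}[e^{t g^2}]=(1-2t)^{-1/2}$ for $t<1/2$.

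Apply Markov's inequality to $e^{t\,mH_m}$ and optimize over $t$. Using the elementary bound $-\tfrac12\log(1-2x)\le x+2x^2$ for small $x$, together with
\begin{align*}
\sum_j\lambda_j^2\le \lambda_{\max}\operatorname{tr}(\mathbf{A})\le(\operatorname{tr}(\mathbf{A}))^2,
\end{align*}
gives
\begin{align*}
\Pr\big[H_m(\mathbf{A})\ge(1+\varepsilon)\operatorname{tr}(\mathbf{A})\big]\le \exp(-c\,m\varepsilon^2)
\end{align*}
for an absolute constant $c$. The lower tail is handled the same way with $t<0$ and is easier, since $\log(1+2s)\ge 2s-2s^2$. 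A union bound over the two tails then shows that $m\ge C\varepsilon^{-2}\log(2/\delta)$ suffices.

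\textbf{Rademacher case.} I will reduce this to a sub-exponential tail bound. Write
\begin{align*}
\mathbf{z}^\top\mathbf{A}\mathbf{z}-\operatorname{tr}(\mathbf{A})=\sum_{k\neq l}A_{kl}z_kz_l,
\end{align*}
which is a centered Rademacher chaos. Stacking the $m$ copies gives a block-diagonal matrix $\mathbf{B}=\tfrac1m\operatorname{diag}(\mathbf{A},\dots,\mathbf{A})$, to which I will apply the Hanson--Wright inequality. It gives a deviation exceeding $\varepsilon\operatorname{tr}(\mathbf{A})$ with probability at most
\begin{align*}
2\exp\!\Big(-c\min\Big\{\tfrac{\varepsilon^2(\operatorname{tr}\mathbf{A})^2}{\|\mathbf{B}\|_F^2},\ \tfrac{\varepsilon\operatorname{tr}\mathbf{A}}{\|\mathbf{B}\|_2}\Big\}\Big).
\end{align*}
Here $\|\mathbf{B}\|_F^2=\|\mathbf{A}\|_F^2/m\le(\operatorname{tr}\mathbf{A})^2/m$ and $\|\mathbf{B}\|_2=\|\mathbf{A}\|_2/m\le\operatorname{tr}(\mathbf{A})/m$, both using $\mathbf{A}\succeq0$. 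For $\varepsilon<1$ the minimum is therefore at least $m\varepsilon^2$, and the same choice of $m$ works.

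\textbf{Main obstacle.} The delicate step is using positive semidefiniteness to control $\|\mathbf{A}\|_F$ and $\|\mathbf{A}\|_2$ by $\operatorname{tr}(\mathbf{A})$, which is what makes the bound relative. I also need to keep the MGF expansion valid by restricting $t\le 1/(4\lambda_{\max})$. With $t\approx\varepsilon/(4\operatorname{tr}\mathbf{A})$ this holds, since $\lambda_{\max}\le\operatorname{tr}(\mathbf{A})$ and $\varepsilon<1$.
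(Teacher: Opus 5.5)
Your proof is correct. The paper gives no proof of this theorem; it is quoted as background from the Avron--Toledo line of work, so the useful comparison is with that source.

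In the Gaussian case your route is essentially the standard one: diagonalize, reduce to a nonnegative combination of independent $\chi^2_1$ variables, and apply a Chernoff bound using $\sum_j\lambda_j^2\le(\operatorname{tr}\mathbf{A})^2$. Carrying your constants through, $t=\varepsilon/(4\operatorname{tr}\mathbf{A})$ gives an upper tail of $e^{-m\varepsilon^2/8}$, and $t=-\varepsilon/(2\operatorname{tr}\mathbf{A})$ gives a lower tail of $e^{-m\varepsilon^2/4}$. So $m\ge 8\varepsilon^{-2}\ln(2/\delta)$ suffices, which is sharper than the $20\varepsilon^{-2}\ln(2/\delta)$ in Table~\ref{tab:related_work_trace_estimators}.

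The Rademacher case is where you genuinely differ. The cited analysis writes $H_m(\mathbf{A})=\sum_r\lambda_r\,\frac{1}{m}\sum_i(\mathbf{u}_r^\top\mathbf{z}^{(i)})^2$, applies a JL-type tail bound to each eigenvector, and takes a union bound. That is why the paper's own table lists $6\varepsilon^{-2}\ln(2r/\delta)$ with $r=\operatorname{rank}(\mathbf{A})$. Strictly, that bound does not deliver the rank-free $O(\log(1/\delta)/\varepsilon^2)$ the statement claims for Rademacher vectors. Your Hanson--Wright step controls the whole chaos $\sum_{k\neq l}A_{kl}z_kz_l$ at once, and the bounds $\|\mathbf{A}\|_F,\|\mathbf{A}\|_2\le\operatorname{tr}(\mathbf{A})$ for $\mathbf{A}\succeq0$ eliminate the rank factor. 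So your argument proves the statement as written, i.e.\ the rank-free form obtained later in \cite{roosta2015improved}.

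The price is a non-explicit absolute constant $c$. Since Hanson--Wright also applies verbatim to Gaussian vectors, your $\chi^2$ computation is only needed if you want explicit constants. Finally, $\log(2/\delta)\le 2\log(1/\delta)$ for $\delta\le 1/2$, which is the usual reading of the $O(\cdot)$.
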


\subsection{Trace Estimation via Kronecker-Matrix vector product}
\cite{meyer2025hutchinsonsestimatorbadkroneckertraceestimation} considered a variant of Hutchinson trace estimator where the problem is estimating the trace of an implicit  matrix $\mathbf{A} \in \R^{d^p\times d^p}$  that can only be accessed through Kronecker-matrix-vector products. That is, for any Kronecker-structured vector that is, $\mathbf{x} = \mathbf{x}_1 \otimes \cdots \otimes \mathbf{x}_p,$ where random vector  $\mathbf{x}_i \in \mathbb{R}^d$ for $i\in [p]$, Kronecker-matrix-vector product $\mathbf{Ax}$ can be computed. Their estimator is termed as \textit{Kronecker-Hutchinson estimator} and defined as follows:  
$
T := \mathbf{x}^\top \mathbf{A} \mathbf{x}.
$
They propose several estimators, each corresponding to different choices of distributions from which the random vectors 
$\mathbf{x}_i$ are sampled.

\begin{theorem}[Variance for real-valued Kronecker random vectors {\cite[Theorem 5.4]{meyer2025hutchinsonsestimatorbadkroneckertraceestimation}}]
\label{thm:kron_real}
Let $\mathbf{A} \in \mathbb{R}^{d^p \times d^p}$ be a PSD matrix. Let
$\mathbf{x} = \mathbf{x}_1 \otimes \cdots \otimes \mathbf{x}_p,$
where $\mathbf{x}_1, \dots, \mathbf{x}_p \in \mathbb{R}^d$ are independent and identically distributed random vectors. Then, all the following estimators are unbiased, and satisfy the following variance bounds 
\begin{align*}
\text{Gaussian:} \quad
&\text{if } \mathbf{x}_i \sim \mathcal{N}(\mathbf{0}, I_d), \\
&\operatorname{Var}[\mathbf{x}^\top \mathbf{A} \mathbf{x}]
\le 3^p \, (\operatorname{tr}(\mathbf{A}))^2, \\[6pt]
\text{Rademacher:} \quad
&\text{if entries of each } \mathbf{x}_i \text{ are i.i.d.\ in } \{-1,+1\}, \\
&\operatorname{Var}[\mathbf{x}^\top \mathbf{A} \mathbf{x}]
\le \left(3 - \frac{2}{d}\right)^p (\operatorname{tr}(\mathbf{A}))^2, \\[6pt]
\text{Uniform sphere:} \quad
&\text{if each } \mathbf{x}_i \text{ is drawn uniformly from } \mathbb{S}^{d-1}, \\
&\operatorname{Var}[\mathbf{x}^\top \mathbf{A} \mathbf{x}]
\le \left(3 - \frac{6}{d+2}\right)^p (\operatorname{tr}(\mathbf{A}))^2.
\end{align*}
\end{theorem}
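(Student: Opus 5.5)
Unbiasedness comes from expanding the quadratic form and using independence and isotropy of the factors. The variance bound comes from computing $\mathbb{E}[(\mathbf{x}^\top\mathbf{A}\mathbf{x})^2]$ one Kronecker factor at a time, with the PSD assumption used at the end to control the resulting mixed quantities by $(\operatorname{tr}(\mathbf{A}))^2$. Each $\mathbf{x}_i$ has i.i.d. entries with mean zero and unit variance, so $\mathbb{E}[\mathbf{x}_i\mathbf{x}_i^\top]=I_d$. Independence across $i$ then gives
\[
\mathbb{E}[\mathbf{x}\mathbf{x}^\top]=\bigotimes_{i=1}^p \mathbb{E}[\mathbf{x}_i\mathbf{x}_i^\top]=I_{d^p}.
\]
Hence $\mathbb{E}[\mathbf{x}^\top\mathbf{A}\mathbf{x}]=\operatorname{tr}(\mathbf{A}\,\mathbb{E}[\mathbf{x}\mathbf{x}^\top])=\operatorname{tr}(\mathbf{A})$ for all three distributions; on the sphere the normalization is chosen so that the second moment is $I_d$.

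For the second moment we may take $\mathbf{A}$ symmetric, since only its symmetric part enters the quadratic form. We proceed by induction on $p$, conditioning on $\mathbf{x}_2,\dots,\mathbf{x}_p$. Write $\mathbf{A}$ in $d\times d$ blocks $\mathbf{A}_{jk}\in\R^{d^{p-1}\times d^{p-1}}$ indexed by the first tensor factor, and set $\mathbf{y}=\mathbf{x}_2\otimes\cdots\otimes\mathbf{x}_p$. Then $\mathbf{x}^\top\mathbf{A}\mathbf{x}=\mathbf{x}_1^\top \mathbf{B}\mathbf{x}_1$, where $\mathbf{B}_{jk}=\mathbf{y}^\top\mathbf{A}_{jk}\mathbf{y}$. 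The one-dimensional fourth-moment identity for a symmetric $d\times d$ matrix $\mathbf{B}$ reads
\[
\mathbb{E}[(\mathbf{x}_1^\top\mathbf{B}\mathbf{x}_1)^2]=(\operatorname{tr}\mathbf{B})^2+2\|\mathbf{B}\|_F^2+(\kappa-3)\textstyle\sum_j B_{jj}^2,
\]
with $\kappa=3$ for the Gaussian and $\kappa=1$ for Rademacher; the sphere case has an analogous formula with constants $\frac{d}{d+2}$. Since $\mathbf{A}\succeq0$ makes $\mathbf{B}\succeq0$, we have $\|\mathbf{B}\|_F^2\le(\operatorname{tr}\mathbf{B})^2$, and therefore
\[
\mathbb{E}[(\mathbf{x}_1^\top\mathbf{B}\mathbf{x}_1)^2]\le c_d\,(\operatorname{tr}\mathbf{B})^2 .
\]
For the Rademacher case we also use $\sum_j B_{jj}^2\ge(\operatorname{tr}\mathbf{B})^2/d$, which gives $c_d=3-\tfrac2d$; the same bookkeeping gives $c_d=3$ for the Gaussian and $c_d=3-\frac{6}{d+2}$ for the sphere.

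Now $\operatorname{tr}\mathbf{B}=\mathbf{y}^\top\big(\sum_j\mathbf{A}_{jj}\big)\mathbf{y}$, and $\mathbf{A}':=\sum_j\mathbf{A}_{jj}$ is the partial trace of $\mathbf{A}$ over the first factor. It is PSD and satisfies $\operatorname{tr}(\mathbf{A}')=\operatorname{tr}(\mathbf{A})$. Taking expectation over $\mathbf{y}$ and applying the induction hypothesis to $\mathbf{A}'\in\R^{d^{p-1}\times d^{p-1}}$, in the form $\mathbb{E}[(\mathbf{y}^\top\mathbf{A}'\mathbf{y})^2]\le c_d^{p-1}(\operatorname{tr}\mathbf{A}')^2$, gives $\mathbb{E}[(\mathbf{x}^\top\mathbf{A}\mathbf{x})^2]\le c_d^{p}(\operatorname{tr}\mathbf{A})^2$. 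Subtracting $(\operatorname{tr}\mathbf{A})^2$ then gives the variance bound, in fact the slightly stronger $(c_d^p-1)(\operatorname{tr}\mathbf{A})^2$.

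The main obstacle is the one-factor step: handling the non-Gaussian fourth moments carefully enough to obtain exactly $3-\frac2d$ and $3-\frac6{d+2}$ rather than $3$, and doing so in a form with a single constant times $(\operatorname{tr}\mathbf{B})^2$ so the induction closes. For the sphere, I would first compute $\mathbb{E}[x_jx_kx_lx_m]$ for $\mathbf{x}$ uniform on the sphere scaled to have identity covariance, then substitute.
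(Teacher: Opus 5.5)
Your proof is correct. The paper gives no proof of this statement: it is quoted as background from \cite{meyer2025hutchinsonsestimatorbadkroneckertraceestimation} (their Theorem 5.4). So there is no in-paper argument to compare against, and your proposal has to stand on its own, which it does.

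The conditioning step is sound. Write $\mathbf{B}=(\mathbf{I}_d\otimes\mathbf{y})^\top\mathbf{A}(\mathbf{I}_d\otimes\mathbf{y})$; it is PSD and depends only on $\mathbf{y}$. The one-factor bound $\mathbb{E}[(\mathbf{x}_1^\top\mathbf{B}\mathbf{x}_1)^2]\le c_d(\operatorname{tr}\mathbf{B})^2$ holds for each distribution. The partial-trace identity $\operatorname{tr}\mathbf{B}=\mathbf{y}^\top\mathbf{A}'\mathbf{y}$ holds with $\mathbf{A}'\succeq 0$ and $\operatorname{tr}\mathbf{A}'=\operatorname{tr}\mathbf{A}$. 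Together these close an induction on the second moment, and you even get the sharper bound $(c_d^p-1)(\operatorname{tr}\mathbf{A})^2$.

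The sphere computation you left pending works out as you expect. For $\mathbf{z}$ uniform on the sphere of radius $\sqrt d$, the fourth-moment tensor is $\mathbb{E}[z_jz_kz_lz_m]=\tfrac{d}{d+2}(\delta_{jk}\delta_{lm}+\delta_{jl}\delta_{km}+\delta_{jm}\delta_{kl})$. Hence $\mathbb{E}[(\mathbf{z}^\top\mathbf{B}\mathbf{z})^2]=\tfrac{d}{d+2}\bigl((\operatorname{tr}\mathbf{B})^2+2\|\mathbf{B}\|_F^2\bigr)\le\tfrac{3d}{d+2}(\operatorname{tr}\mathbf{B})^2$, and $\tfrac{3d}{d+2}=3-\tfrac{6}{d+2}$. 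For the sphere you need this joint tensor, not the $\kappa$-formula. Your opening claim that each $\mathbf{x}_i$ has i.i.d.\ entries is true only in the Gaussian and Rademacher cases; spherical coordinates are dependent.

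Make your sphere rescaling explicit rather than a passing remark, because it is necessary. As the statement is transcribed in the paper, with $\mathbf{x}_i$ uniform on $\mathbb{S}^{d-1}$, we get $\mathbb{E}[\mathbf{x}_i\mathbf{x}_i^\top]=\mathbf{I}_d/d$. Then $\mathbb{E}[\mathbf{x}^\top\mathbf{A}\mathbf{x}]=\operatorname{tr}(\mathbf{A})/d^p$, and the estimator is not unbiased. The constant $3-\tfrac{6}{d+2}$ belongs to the radius-$\sqrt d$ sphere, which is exactly what your argument uses.
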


\noindent The bounds in Theorem~\ref{thm:kron_real} exhibit an exponential dependence on the parameter $p$, rendering the Kronecker–Hutchinson estimator inefficient for large values of $p$. They further  demonstrate that using complex-valued random vectors leads to improved bounds with a smaller exponential factor. The following theorem summarizes those guarantees.

% \noindent The bounds in Theorem~\ref{thm:kron_real} exhibit an exponential dependence on the tensor {\color{red} there is no tensor order here..just state dependence on $p$...fix this here and every other place...}order $p$, which makes the Kronecker-Hutchinson estimator inefficient for large $p$ values. Meyer and Avron~\cite{meyer2025hutchinsonsestimatorbadkroneckertraceestimation} show that complex-valued vectors yield improved bounds with a smaller exponential factor. The following theorem summarizes these guarantees.

\begin{theorem}[Variance for Complex-Valued Structures (Theorem 6.2 and Lemma 6.3 of \cite{meyer2025hutchinsonsestimatorbadkroneckertraceestimation})]
\label{thm:kron_complex}
Let $\mathbf{A} \in \mathbb{R}^{d^p \times d^p}$ be a PSD matrix, and 
$\mathbf{x} = \mathbf{x}_1 \otimes \cdots \otimes \mathbf{x}_p$, where $\mathbf{x}_1, \dots, \mathbf{x}_p \in \mathbb{C}^d$ are i.i.d.\ random vectors.  Then, all the following estimators are unbiased, and satisfy the following variance bounds 
\begin{align*}
\text{Complex Gaussian:} \quad
&\text{if } \mathbf{x}_i = \tfrac{1}{\sqrt{2}}(\mathbf{r}_i + i \mathbf{m}_i), \text{ with } \mathbf{r}_i, \mathbf{m}_i \sim \mathcal{N}(\mathbf{0}, I_d),  \\
&\operatorname{Var}[\mathbf{x}^* \mathbf{A} \mathbf{x}]
\le 2^p \, (\operatorname{tr}(\mathbf{A}))^2, \\[6pt]
\text{Complex Rademacher:} \quad
&\text{if each entry of } \mathbf{x}_i \text{ is drawn i.i.d.\ from } \left\{\pm \tfrac{1}{\sqrt{2}}, \pm \tfrac{i}{\sqrt{2}}\right\}, \\
&\operatorname{Var}[\mathbf{x}^* \mathbf{A} \mathbf{x}]
\le \left(2 - \frac{1}{d}\right)^p (\operatorname{tr}(\mathbf{A}))^2, \\[6pt]
\text{Complex sphere:} \quad
&\text{if each } \mathbf{x}_i \text{ is uniformly distributed on the complex sphere }, \\
&\operatorname{Var}[\mathbf{x}^* \mathbf{A} \mathbf{x}]
\le \left(2 - \frac{2}{d+1}\right)^p (\operatorname{tr}(\mathbf{A}))^2.
\end{align*}
\end{theorem}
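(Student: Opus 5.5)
The plan is to prove a bound on the second moment, $\mathbb{E}|\mathbf{x}^*\mathbf{A}\mathbf{x}|^2 \le c_d^{\,p}\,(\operatorname{tr}(\mathbf{A}))^2$, by induction on $p$. Here $c_d\in\{2,\,2-\tfrac1d,\,2-\tfrac{2}{d+1}\}$ according to the distribution. The variance bound then follows from $\operatorname{Var}[\mathbf{x}^*\mathbf{A}\mathbf{x}] = \mathbb{E}|\mathbf{x}^*\mathbf{A}\mathbf{x}|^2 - (\operatorname{tr}(\mathbf{A}))^2$.

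\textbf{Unbiasedness.} In all three cases a single factor satisfies $\mathbb{E}[\mathbf{x}_i\mathbf{x}_i^*]=I_d$. By independence of the factors,
\[
\mathbb{E}[\mathbf{x}\mathbf{x}^*] = \bigotimes_{i=1}^p \mathbb{E}[\mathbf{x}_i\mathbf{x}_i^*] = I_{d^p},
\]
so $\mathbb{E}[\mathbf{x}^*\mathbf{A}\mathbf{x}] = \operatorname{tr}(\mathbf{A}\,\mathbb{E}[\mathbf{x}\mathbf{x}^*]) = \operatorname{tr}(\mathbf{A})$.

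\textbf{Base case $p=1$.} For a PSD $\mathbf{C}\in\mathbb{C}^{d\times d}$, I would compute $\mathbb{E}|\mathbf{x}_1^*\mathbf{C}\mathbf{x}_1|^2$ from the fourth moments of the distribution. The key feature of the complex case is $\mathbb{E}[x_j^2]=0$, which removes the extra ``transpose'' pairing present in the real case.
\begin{itemize}
\item Complex Gaussian: Wick's formula gives $(\operatorname{tr}\mathbf{C})^2+\|\mathbf{C}\|_F^2$.
\item Complex Rademacher: using $|x_j|^4 = 1/4 \cdot 4$ normalization so that $\mathbb{E}|x_j|^4=1$, the moment is $(\operatorname{tr}\mathbf{C})^2+\|\mathbf{C}\|_F^2-\sum_j C_{jj}^2$.
\item Complex sphere: the Haar moments $\mathbb{E}[x_a\bar x_b x_c \bar x_e] = \frac{\delta_{ab}\delta_{ce}+\delta_{ae}\delta_{cb}}{d(d+1)}$ (with the normalization $\mathbb{E}[\mathbf{x}\mathbf{x}^*]=I$, i.e.\ $\|\mathbf{x}\|^2=d$) give $\frac{d}{d+1}\big((\operatorname{tr}\mathbf{C})^2+\|\mathbf{C}\|_F^2\big)$.
\end{itemize}
Since $\mathbf{C}\succeq 0$, we have $\|\mathbf{C}\|_F^2 \le (\operatorname{tr}\mathbf{C})^2$. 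For Rademacher, Cauchy--Schwarz also gives $\sum_j C_{jj}^2 \ge (\operatorname{tr}\mathbf{C})^2/d$. These yield the constants $2$, $2-\tfrac1d$ and $\tfrac{2d}{d+1}=2-\tfrac{2}{d+1}$ respectively.

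\textbf{Inductive step.} Write $\mathbf{x}=\mathbf{x}_1\otimes\mathbf{y}$ with $\mathbf{y}=\mathbf{x}_2\otimes\cdots\otimes\mathbf{x}_p$, and split $\mathbf{A}$ into $d\times d$ blocks $\mathbf{A}_{jk}\in\mathbb{R}^{d^{p-1}\times d^{p-1}}$. Then
\[
\mathbf{x}^*\mathbf{A}\mathbf{x} = \mathbf{y}^*\mathbf{B}\,\mathbf{y}, \qquad \mathbf{B} := \sum_{j,k}\bar x_{1j}x_{1k}\mathbf{A}_{jk} = (\mathbf{x}_1^*\otimes I)\,\mathbf{A}\,(\mathbf{x}_1\otimes I).
\]
This $\mathbf{B}$ is PSD. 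Conditioning on $\mathbf{x}_1$ and applying the induction hypothesis to $\mathbf{y}$ gives
\[
\mathbb{E}\big[|\mathbf{x}^*\mathbf{A}\mathbf{x}|^2 \,\big|\, \mathbf{x}_1\big] \le c_d^{\,p-1}(\operatorname{tr}\mathbf{B})^2.
\]
Next, $\operatorname{tr}\mathbf{B} = \mathbf{x}_1^*\mathbf{C}\mathbf{x}_1$, where $\mathbf{C}=[\operatorname{tr}\mathbf{A}_{jk}]_{j,k}$ is the partial trace of $\mathbf{A}$ over the last $p-1$ factors. This $\mathbf{C}$ is PSD and has $\operatorname{tr}\mathbf{C}=\operatorname{tr}\mathbf{A}$. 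Taking expectation over $\mathbf{x}_1$ and applying the base case to $\mathbf{C}$ gives $\mathbb{E}|\mathbf{x}^*\mathbf{A}\mathbf{x}|^2 \le c_d^{\,p}(\operatorname{tr}\mathbf{A})^2$.

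\textbf{Main obstacle.} The step needing the most care is the base-case fourth-moment computation for each distribution, particularly the Haar moments for the complex sphere. One must verify there that $\mathbb{E}[x_j^2]=0$ kills the transpose term and that the normalization matches $\mathbb{E}[\mathbf{x}\mathbf{x}^*]=I$. By contrast, the PSD-preservation of the partial contraction $\mathbf{B}$ and of the partial trace $\mathbf{C}$ is routine.
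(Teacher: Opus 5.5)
The paper gives no proof of this statement; it is quoted from~\cite{meyer2025hutchinsonsestimatorbadkroneckertraceestimation}. Your induction is correct and self-contained. It is in fact the paper's own layer-by-layer composition argument (the proof of Theorem~\ref{thm:rts_trace_estimator} via Lemma~\ref{lem:wrapped_layer_trace}), specialized to one-row sketches. The map $\mathbf{A}\mapsto\mathbf{B}=(\mathbf{x}_1\otimes\mathbf{I})^*\mathbf{A}(\mathbf{x}_1\otimes\mathbf{I})$ is a wrapped layer $\mathbf{x}_1^*\otimes\mathbf{I}_{d^{p-1}}$. Your partial trace $\mathbf{C}$ is the matrix $\widetilde{\mathbf{B}}$ of that lemma, and positive semidefiniteness propagates exactly as in the paper. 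Your constant $c_d$ is the per-layer factor $1+c_i/m$ with sketch size $m=1$, which makes transparent why the Kronecker bound is $c_d^{\,p}$ while the RTS bound is $\prod_i(1+c_i/m)$. Your route bounds the second moment rather than giving an exact variance formula. That is all the statement needs, and it even yields $\operatorname{Var}\le(c_d^{\,p}-1)(\operatorname{tr}\mathbf{A})^2$.

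Three points need tightening.

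\textbf{Rademacher normalization.} As literally stated, the entries have modulus $1/\sqrt2$. Then $\mathbb{E}[\mathbf{x}_i\mathbf{x}_i^*]=\tfrac12\mathbf{I}_d$ and $\mathbb{E}[\mathbf{x}^*\mathbf{A}\mathbf{x}]=2^{-p}\operatorname{tr}(\mathbf{A})$. So your assertion that $\mathbb{E}[\mathbf{x}_i\mathbf{x}_i^*]=\mathbf{I}_d$ in all three cases is false there, and the statement's unbiasedness claim fails as transcribed. The intended distribution has unit-modulus entries, e.g.\ $\tfrac{1}{\sqrt2}(\pm1\pm i)$. This differs from $\{\pm1,\pm i\}$ only by a global phase, which leaves $\mathbf{x}^*\mathbf{A}\mathbf{x}$ unchanged. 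State explicitly that you use this normalization, instead of the garbled remark about $|x_j|^4$. With it your moment formula is right. Since $\mathbb{E}[x_j]=\mathbb{E}[x_j^2]=0$ and $|x_j|=1$, only the pairings $a=b,\ c=e$ and $a=e,\ b=c$ survive in $\mathbb{E}[\bar x_a x_b\bar x_c x_e]$, with the all-equal term counted once.

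\textbf{Sphere moments.} The displayed Haar moment $\frac{\delta_{ab}\delta_{ce}+\delta_{ae}\delta_{cb}}{d(d+1)}$ is for the unit sphere. With your normalization $\|\mathbf{x}\|_2^2=d$ it picks up a factor $d^2$. That factor is what produces the $\frac{d}{d+1}$ you correctly end with.

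\textbf{Complex induction hypothesis.} $\mathbf{B}=\sum_{j,k}\bar x_{1j}x_{1k}\mathbf{A}_{jk}$ is complex Hermitian even when $\mathbf{A}$ is real. Its imaginary part is $\sum_{j<k}\operatorname{Im}(\bar x_{1j}x_{1k})(\mathbf{A}_{jk}-\mathbf{A}_{jk}^\top)$, and the off-diagonal blocks need not be symmetric. So the induction hypothesis must be stated for Hermitian PSD matrices in $\mathbb{C}^{d^{p-1}\times d^{p-1}}$, not just real ones. Your base case already covers complex $\mathbf{C}$, so this only requires stating the strengthened claim.
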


We address the limitations of the Kronecker–Hutchinson estimator by designing an alternative estimator that leverages \texttt{Recursive TensorSketch} proposed by~\cite{doi:10.1137/1.9781611975994.9}. In their work, the authors develop this technique in the context of sketching high-degree polynomial kernels, demonstrating that tensor product structures can be efficiently compressed via recursive linear-mappings while preserving inner-product similarity to a high degree of accuracy. We state their sketching algorithm in the following subsection.

\subsection{Introduction to \texttt{Recursive TensorSketch}}
\begin{figure}
    \centering
    \includegraphics[width=0.8\linewidth]{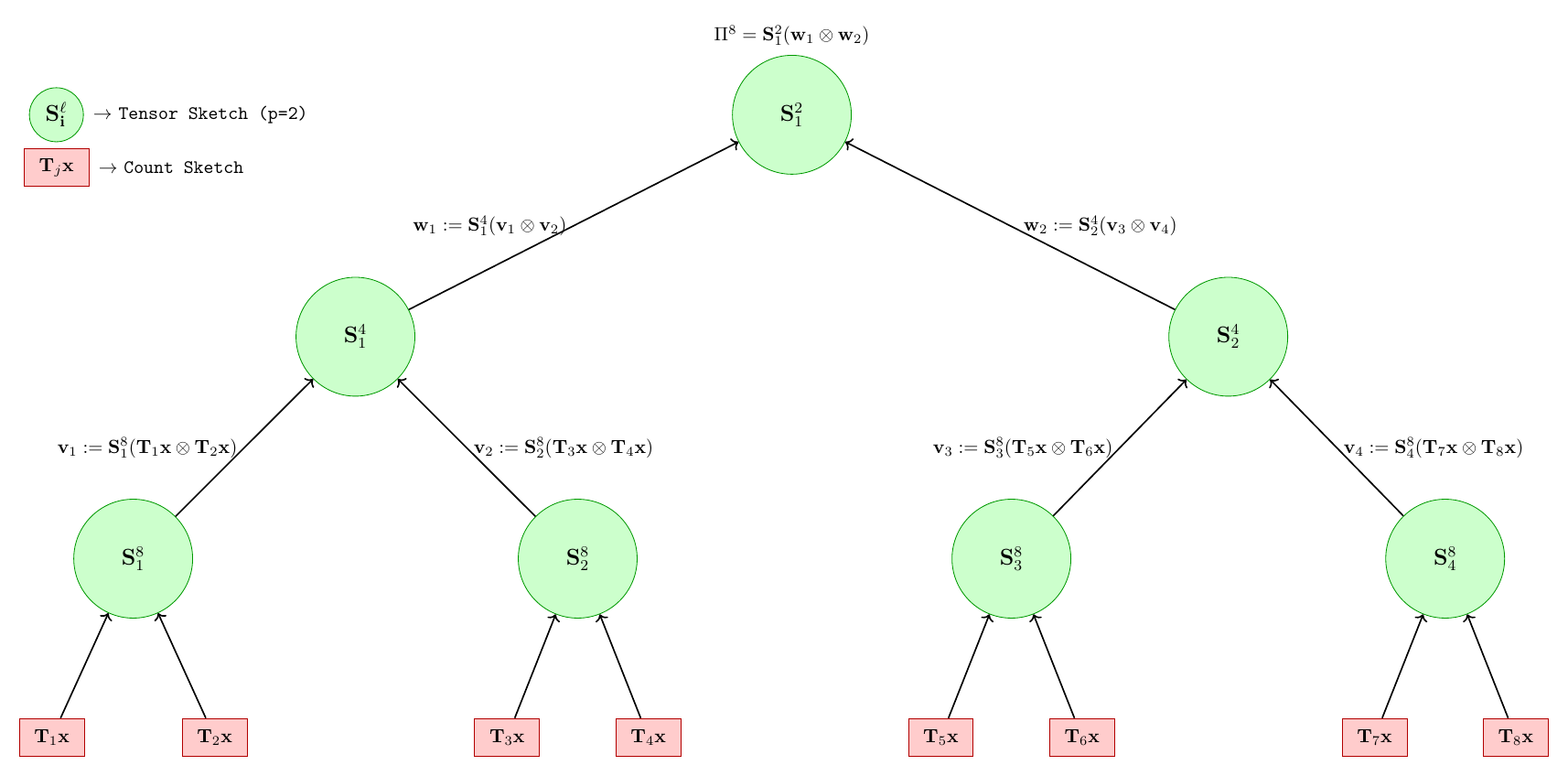}
    \caption{Recursive TensorSketch construction for $p=8$. 
    Each $T_j$ denotes a CountSketch, while $S_i^{\ell}$ denotes a TensorSketch 
    operator. Intermediate vectors are combined recursively.}
    \label{fig:recursive_tensorsketch}
\end{figure}
We begin by presenting the \texttt{CountSketch}~\cite{count_sketch} algorithm, which enables fast dimensionality reduction for high-dimensional vectors. We then describe \texttt{TensorSketch}~\cite{pham2013fast,DBLP:journals/corr/abs-2505-08146} of degree $2$, which extends this idea to efficiently compress vectors formed via the Kronecker product of two vectors.

\begin{definition}[\texttt{CountSketch}~\cite{count_sketch}] \label{count_sketch_def}
Given an input vector $\mathbf{y} \in \mathbb{R}^{d}$, the \texttt{CountSketch} is a randomized linear map $\mathbf{T} \in \mathbb{R}^{m \times d}$ that maps $\mathbf{y}$ to a lower-dimensional vector $\mathbf{z} = \mathbf{T}\,\mathbf{y} \in \mathbb{R}^m$. The \texttt{CountSketch} matrix $\mathbf{T}$ is constructed by two hash functions: (a) $h \colon [d] \to [m]$ a $3$-wise independent hash function, and (b) $s: [d] \to \{1,-1\}$ a $4$-wise independent random sign function. The $j^{th}$ entry of vector $\mathbf{z} \in \mathbb{R}^m$ is computed as,
$$z_j = \sum_{h(i) = j} s(i)\, y_i, \; \forall j \in \{1, \ldots, m\}.$$

\noindent The time complexity of computing the \texttt{CountSketch} is $O\!\left(\mathrm{nnz}(\mathbf{y})\right)$, which in the worst case can be $O(d)$.  Furthermore, \texttt{CountSketch} provides an unbiased estimator and variance of this estimator is
$\Var[\|\mathbf{Ty}\|_2^2] \leq \frac{2\|\mathbf{y}\|_2^4}{m}.$
\end{definition}
TensorSketch extends the idea of CountSketch to tensor products and allows them to be sketched efficiently.
\begin{definition}[\texttt{TensorSketch} of Degree Two~\cite{pham2013fast,DBLP:journals/corr/abs-2505-08146}] \label{def:TensorSketch of Degree Two}
Let $h_1, h_2 : [d] \to [m]$ be $3$-wise independent hash functions, and $\sigma_1, \sigma_2 : [d] \to \{-1, +1\}$ be $4$-wise independent random sign functions. Then the \texttt{TensorSketch} of degree two $\mathbf{S} \in \mathbb{R}^{m \times d^2}$ is defined $\forall \ r \in [m],~i_1, i_2 \in [d],$ as follows
\begin{equation}
S_{r,(i_1,i_2)} = \sigma_1(i_1)\cdot\sigma_2(i_2) \cdot  \mathds{1} \left[ h_1(i_1) + h_2(i_2) \equiv r \pmod m \right].
\end{equation}
\texttt{TensorSketch} provides an unbiased estimator of the squared $\ell_2$-norm, whose variance is bounded by $\Var\left[\|\mathbf{S}(\mathbf{x}\otimes\mathbf{x})\|_2^2\right]\le \frac{8\|\mathbf{x}\|_2^4}{m}$. Furthermore, for any $\mathbf{x}\in\mathbb{R}^d$, the sketch $\mathbf{S}(\mathbf{x}\otimes\mathbf{x})$ can be computed in $O\!\left(m\log m+\mathrm{nnz}(\mathbf{x})\right)$ time using the Fast Fourier Transform (FFT).

\end{definition}

Given a vector  $\mathbf{x} \in \R^{d^p}$ of the form $\mathbf{x}= \mathbf{x}_1 \otimes \cdots \otimes \mathbf{x}_p$, \texttt{Recursive TensorSketch} provides an efficient sketching procedure that avoids the explicit construction of $\mathbf{x}$. The method proceeds by first applying independent \texttt{CountSketch} transformations to each component vector $\mathbf{x}_i$,  for $i\in [p]$, and then recursively combining the resulting sketches using degree-two \texttt{TensorSketch} operations, producing a hierarchical tree-structured representation refer to Figure~\ref{fig:recursive_tensorsketch}.

\begin{definition}[\texttt{Recursive TensorSketch}~\cite{doi:10.1137/1.9781611975994.9}]
\label{def:Recursive_tensor_sketch_framework}

Given a vector $\mathbf{x} \in \mathbb{R}^{d^p}$, where $p$ is a power of two, the \texttt{Recursive TensorSketch} is a randomized linear map
\[
\Pi^p : \mathbb{R}^{d^p} \rightarrow \mathbb{R}^m, \quad \text{defined as} \quad \Pi^p := \mathbf{Q}^p \cdot \mathbf{T}^p, \text{~where}
\]

\begin{itemize}
    \item $\mathbf{T}^p = \mathbf{T}_{1} \otimes \mathbf{T}_{2} \otimes \cdots \otimes \mathbf{T}_p$, with each $T_{i} \in \mathbb{R}^{m \times d}$ for $i \in [p]$ a \texttt{CountSketch} matrix (Definition~\ref{count_sketch_def}),
    
    \item $\mathbf{Q}^p = \mathbf{S}^2 \cdot \mathbf{S}^4 \cdots \mathbf{S}^{p/2} \cdot \mathbf{S}^p$, with each $\mathbf{S}^\ell \in \mathbb{R}^{m^{\ell/2} \times m^\ell}$ a Kronecker product of matrices $S_j^\ell \in \mathbb{R}^{m \times m^2}$,
    
    \item each $\mathbf{S}_j^\ell$ is a \texttt{TensorSketch} matrix of degree $2$ (Definition~\ref{def:TensorSketch of Degree Two}), and $\mathbf{S}^\ell = \mathbf{S}_1^\ell \otimes \mathbf{S}_2^\ell \otimes \cdots \otimes \mathbf{S}_{\ell/2}^\ell$.
\end{itemize}

\noindent When $\mathbf{x}$ is given in the form of Kronecker product of $p$ vectors, i.e., $\mathbf{x} = \mathbf{x}_1 \otimes \cdots \otimes \mathbf{x}_p$ with $\mathbf{x}_i \in \mathbb{R}^d$ for all $i \in [p]$, the \texttt{Recursive TensorSketch} can be computed efficiently in time $O(p\,m \log m + p d)$. In contrast, when $\mathbf{x}$ is an arbitrary vector in $\mathbb{R}^{d^p}$ without explicit Kronecker structure, computing $\Pi^p \mathbf{x}$ requires $O(m d^p)$ time.
\end{definition}

\begin{definition}[Definition 18 of~\cite{doi:10.1137/1.9781611975994.9}:
                   JL Moment Property]
\label{def:jl_moment_property}
For every positive integer $t$ and every $\delta,\varepsilon\ge 0$, a
distribution over random matrices $\mathbf{M}\in\R^{m\times d}$ has the
\emph{$(\varepsilon,\delta,t)$-JL Moment Property} if
\begin{align*}
   \bigl\|\,\|\mathbf{M}\mathbf{x}\|_{2}^{2}-1\,\bigr\|_{L^{t}}\;\le\;\varepsilon\,\delta^{1/t}
   \qquad\text{and}\qquad
   \mathbb{E}\!\left[\|\mathbf{M}\mathbf{x}\|_{2}^{2}\right]\;=\;1
\end{align*}
for every unit vector $\mathbf{x}\in\R^{d}$.
\end{definition}

\noindent Now, we state some useful results from~\cite{doi:10.1137/1.9781611975994.9} that will be used in our proofs.

\begin{lemma}[Lemma 9 of~\cite{doi:10.1137/1.9781611975994.9}:
              Two-vector JL Moment Property]
\label{lem:two_vector_jl}
For any $\mathbf{x},\mathbf{y}\in\R^{d}$, if $\mathbf{M}$ has the
$(\varepsilon,\delta,t)$-JL Moment Property, then
\[
   \bigl\|\langle \mathbf{M}\mathbf{x},\,\mathbf{M}\mathbf{y}\rangle 
        -\langle\mathbf{x},\,\mathbf{y}\rangle\bigr\|_{L^{t}}
   \;\le\;\varepsilon\,\delta^{1/t}\,\|\mathbf{x}\|_{2}\|\mathbf{y}\|_{2}.
\]
\end{lemma}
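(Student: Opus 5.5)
The plan is to reduce the two-vector statement to the one-vector JL Moment Property in Definition~\ref{def:jl_moment_property} through polarization. If $\mathbf{x}=\mathbf{0}$ or $\mathbf{y}=\mathbf{0}$, both sides vanish. Otherwise, both sides are bilinear in $(\mathbf{x},\mathbf{y})$ and scale like $\|\mathbf{x}\|_2\|\mathbf{y}\|_2$. So we may assume $\|\mathbf{x}\|_2=\|\mathbf{y}\|_2=1$: replace $\mathbf{x},\mathbf{y}$ by $\mathbf{x}/\|\mathbf{x}\|_2$ and $\mathbf{y}/\|\mathbf{y}\|_2$, and multiply back at the end.

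For unit vectors, we use the polarization identity
\[
\langle \mathbf{M}\mathbf{x},\mathbf{M}\mathbf{y}\rangle-\langle\mathbf{x},\mathbf{y}\rangle
=\tfrac14\Bigl[\bigl(\|\mathbf{M}(\mathbf{x}+\mathbf{y})\|_2^2-\|\mathbf{x}+\mathbf{y}\|_2^2\bigr)-\bigl(\|\mathbf{M}(\mathbf{x}-\mathbf{y})\|_2^2-\|\mathbf{x}-\mathbf{y}\|_2^2\bigr)\Bigr].
\]
Apply the JL Moment Property to the unit vectors $(\mathbf{x}\pm\mathbf{y})/\|\mathbf{x}\pm\mathbf{y}\|_2$; if $\mathbf{x}\pm\mathbf{y}=\mathbf{0}$, that term is zero. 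By homogeneity this gives
\[
\bigl\|\|\mathbf{M}\mathbf{u}\|_2^2-\|\mathbf{u}\|_2^2\bigr\|_{L^t}\le\varepsilon\delta^{1/t}\|\mathbf{u}\|_2^2
\qquad\text{for } \mathbf{u}=\mathbf{x}\pm\mathbf{y}.
\]
Minkowski's inequality in $L^t$ (valid for $t\ge1$) then yields
\[
\bigl\|\langle \mathbf{M}\mathbf{x},\mathbf{M}\mathbf{y}\rangle-\langle\mathbf{x},\mathbf{y}\rangle\bigr\|_{L^t}
\le\tfrac{\varepsilon\delta^{1/t}}{4}\bigl(\|\mathbf{x}+\mathbf{y}\|_2^2+\|\mathbf{x}-\mathbf{y}\|_2^2\bigr).
\]

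The last step uses the parallelogram law: $\|\mathbf{x}+\mathbf{y}\|_2^2+\|\mathbf{x}-\mathbf{y}\|_2^2=2(\|\mathbf{x}\|_2^2+\|\mathbf{y}\|_2^2)=4$ for unit vectors. The right-hand side is therefore exactly $\varepsilon\delta^{1/t}$, and undoing the normalization gives the factor $\|\mathbf{x}\|_2\|\mathbf{y}\|_2$.

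The normalization is not cosmetic. Without it, the parallelogram law only gives $\tfrac12(\|\mathbf{x}\|_2^2+\|\mathbf{y}\|_2^2)$. That matches the claimed bound only through an AM–GM step, which goes the wrong way. Rescaling to equal (unit) norms first is what makes the constant come out exactly $1$.
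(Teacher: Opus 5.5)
Your proof is correct and follows essentially the same route as the standard proof of this lemma in the cited source; the paper itself imports the lemma from Ahle et al.\ without reproving it. That proof likewise normalizes to $\|\mathbf{x}\|_2=\|\mathbf{y}\|_2=1$, polarizes, applies Minkowski in $L^t$ with the one-vector JL Moment Property on $\mathbf{x}\pm\mathbf{y}$, and finishes with the parallelogram law, and your handling of the degenerate cases and your remark on why the normalization is needed are both accurate.
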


\begin{lemma}[Lemma 12 of~\cite{doi:10.1137/1.9781611975994.9}:
              Factorisation of $\Pi^{p}$]
\label{lem:factorisation}
For any integer $p$ which is a power of two, let
$\Pi^{p}:\R^{d^{p}}\to\R^{m}$ be \texttt{Recursive TensorSketch} defined in Definition~\ref{def:Recursive_tensor_sketch_framework}, for sketches
$\mathbf{S}_i^\ell:\R^{m^{2}}\to\R^{m}$ and 
$\mathbf{T}_j:\R^{d}\to\R^{m}$.
Then there exist matrices $\bigl(\mathbf{M}^{(i)}\bigr)_{i\in[p-1]}$,
$\bigl(\mathbf{M'}^{(j)}\bigr)_{j\in[p]}$ and integers $(k_i)_{i \in [p-1]}$, $(k'_i)_{i \in [p-1]}$, $(l_j)_{j \in [p]}$, $(l'_j)_{j \in [p]}$, such that
\[
   \Pi^{p}\;=\;\mathbf{M}^{(p-1)}\cdots \mathbf{M}^{(1)}\cdot \mathbf{M}'^{(p)}\cdots \mathbf{M}'^{(1)},
\]
where $\mathbf{M}^{(i)} = I_{k_{i}}\otimes \mathbf{S}_i^\ell\otimes I_{k'_{i}}$
and $\mathbf{M'}^{(j)} = I_{\ell_{j}}\otimes \mathbf{T}_j\otimes I_{\ell'_{j}}$,
with $\mathbf{S}_i^\ell$ and $\mathbf{T}_j$ independent
instances of \texttt{TensorSketch} of Degree-$2$ and \texttt{CountSketch}, respectively,
for every $i\in[p-1]$ and $j\in[p]$.
\end{lemma}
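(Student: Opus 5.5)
The plan is to prove the factorisation directly from Definition~\ref{def:Recursive_tensor_sketch_framework} using only the mixed-product property of the Kronecker product. The one identity I need is
\[
\mathbf{A}_1\otimes \mathbf{A}_2\otimes\cdots\otimes \mathbf{A}_q
\;=\;\prod_{j=1}^{q}\bigl(I_{a_j}\otimes \mathbf{A}_j\otimes I_{b_j}\bigr),
\]
for matrices $\mathbf{A}_j\in\R^{r_j\times c_j}$. Here the product is taken in any fixed order, for instance applying $\mathbf{A}_q$ first and $\mathbf{A}_1$ last. The identity sizes are chosen so that the factors to the left of slot $j$ are already in output dimension and those to the right are still in input dimension. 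In that ordering, $a_j=\prod_{i<j} r_i$ and $b_j=\prod_{i>j} c_i$.

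I would prove this identity by induction on $q$. The case $q=1$ is trivial. For the inductive step I write
\[
\mathbf{A}_1\otimes \mathbf{B}=(\mathbf{A}_1\otimes I)(I\otimes \mathbf{B}),
\]
with $\mathbf{B}=\mathbf{A}_2\otimes\cdots\otimes\mathbf{A}_q$, which is an instance of $(\mathbf{A}\otimes\mathbf{B})(\mathbf{C}\otimes\mathbf{D})=\mathbf{A}\mathbf{C}\otimes\mathbf{B}\mathbf{D}$. I then apply the induction hypothesis to $\mathbf{B}$ and absorb the extra identity using $I_a\otimes(I_{a'}\otimes \mathbf{X}\otimes I_{b'})=I_{aa'}\otimes \mathbf{X}\otimes I_{b'}$.

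Next I apply the identity to each block of $\Pi^p=\mathbf{S}^2\mathbf{S}^4\cdots\mathbf{S}^p\,\mathbf{T}^p$.
\begin{itemize}
\item For $\mathbf{T}^p=\mathbf{T}_1\otimes\cdots\otimes\mathbf{T}_p$ with $\mathbf{T}_j\in\R^{m\times d}$, the identity gives $p$ factors $\mathbf{M}'^{(j)}=I_{m^{j-1}}\otimes\mathbf{T}_j\otimes I_{d^{p-j}}$. So $\ell_j=m^{j-1}$ and $\ell'_j=d^{p-j}$.
\item For each level $\ell\in\{p,p/2,\dots,2\}$, the matrix $\mathbf{S}^\ell=\mathbf{S}^\ell_1\otimes\cdots\otimes\mathbf{S}^\ell_{\ell/2}$ has factors $\mathbf{S}^\ell_i\in\R^{m\times m^2}$. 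It therefore splits into $\ell/2$ factors of the form $I_{m^{i-1}}\otimes\mathbf{S}^\ell_i\otimes I_{m^{2(\ell/2-i)}}$.
\end{itemize}
The total number of \texttt{TensorSketch} factors is
\[
\frac{p}{2}+\frac{p}{4}+\cdots+1=p-1.
\]
I then concatenate the factors in the order in which they act: first the $p$ \texttt{CountSketch} factors, then level $\ell=p$, then level $p/2$, and so on up to level $2$. Relabelling the \texttt{TensorSketch} factors consecutively as $\mathbf{M}^{(1)},\dots,\mathbf{M}^{(p-1)}$, with the corresponding integers $k_i,k'_i$, gives exactly
\[
\Pi^p=\mathbf{M}^{(p-1)}\cdots\mathbf{M}^{(1)}\,\mathbf{M}'^{(p)}\cdots\mathbf{M}'^{(1)}.
\]

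Independence needs no further argument. The definition draws every $\mathbf{T}_j$ and every $\mathbf{S}^\ell_i$ as independent instances, and each appears in exactly one factor, so the factors inherit that independence.

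The main obstacle is bookkeeping rather than anything conceptual. I have to check that the dimensions are consistent across levels, namely that the output of level $\ell$ (dimension $m^{\ell/2}$) matches the input of level $\ell/2$ (dimension $m^{2\cdot\ell/4}$). I also have to choose the identity sizes consistently with the chosen order of application. I would state the ordering convention explicitly and check the dimensions once for a general level.
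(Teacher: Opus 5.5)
Your argument is correct. The paper does not prove this lemma itself; it imports it from Ahle et al. Your mixed-product route is essentially the standard proof of it: split $\mathbf{T}^p$ and each level $\mathbf{S}^\ell$ into single-slot factors, concatenate them in the order $\mathbf{T}^p,\mathbf{S}^p,\dots,\mathbf{S}^2$, and count $p/2+\cdots+1=p-1$ \texttt{TensorSketch} factors.

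There is one bookkeeping slip in your general identity. The sizes $a_j=\prod_{i<j}r_i$, $b_j=\prod_{i>j}c_i$ correspond to applying $\mathbf{A}_1$ first (rightmost), not $\mathbf{A}_q$ first. Your inductive split $(\mathbf{A}_1\otimes I)(I\otimes\mathbf{B})$ produces the opposite convention, $a_j=\prod_{i<j}c_i$ and $b_j=\prod_{i>j}r_i$. Mixing the two fails: for $q=2$, the product $(\mathbf{A}_1\otimes I_{c_2})(I_{r_1}\otimes\mathbf{A}_2)$ is not even dimensionally defined unless $c_1c_2=r_1r_2$.

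Your concrete factors $I_{m^{j-1}}\otimes\mathbf{T}_j\otimes I_{d^{p-j}}$ and $I_{m^{i-1}}\otimes\mathbf{S}^\ell_i\otimes I_{m^{\ell-2i}}$, together with the ordering $\mathbf{M}'^{(p)}\cdots\mathbf{M}'^{(1)}$, already use the $\mathbf{A}_1$-first convention. So the fix is to use the split $\mathbf{A}_1\otimes\mathbf{B}=(I_{r_1}\otimes\mathbf{B})(\mathbf{A}_1\otimes I_{c_B})$ in the induction. Then distribute $I_{r_1}\otimes(\cdot)$ over the product of $\mathbf{B}$'s factors via $I\otimes(\mathbf{Y}\mathbf{Z})=(I\otimes\mathbf{Y})(I\otimes\mathbf{Z})$, and everything lines up.
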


\begin{lemma}[Lemma 14 of~\cite{doi:10.1137/1.9781611975994.9}:
              JL Moment Property under tensor wraps]
\label{lem:tensor_wrap_preserves_jl}
If the matrix $\mathbf{S}$ has the $(\varepsilon,\delta,t)$-JL Moment Property,
then for any positive integers $k,k'$, the matrix
$\mathbf{M} = I_{k}\otimes \mathbf{S}\otimes I_{k'}$ has the
$(\varepsilon,\delta,t)$-JL Moment Property.
\end{lemma}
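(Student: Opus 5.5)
The plan is to reduce the statement to the JL Moment Property of $\mathbf{S}$ itself by splitting the input vector into $kk'$ blocks, each of which $\mathbf{M}$ sketches with the same copy of $\mathbf{S}$. The two ingredients are homogeneity of the property and Minkowski's inequality in $L^t$. I assume $t\ge 1$ throughout, so that $\|\cdot\|_{L^t}$ is a norm.

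\emph{Step 1 (block decomposition).} Let $\mathbf{S}\in\R^{m\times d}$ and index coordinates of $\R^{kdk'}$ by triples $(a,j,b)$ with $a\in[k]$, $j\in[d]$, $b\in[k']$, following the Kronecker ordering. For $\mathbf{x}\in\R^{kdk'}$, define the block vector $\mathbf{x}_{a,b}\in\R^d$ by $(\mathbf{x}_{a,b})_j = x_{(a,j,b)}$. From the definition of the Kronecker product,
\[
(\mathbf{M}\mathbf{x})_{(a,r,b)} = \sum_{j} S_{r j}\, x_{(a,j,b)} = (\mathbf{S}\mathbf{x}_{a,b})_r .
\]
Hence
\[
\|\mathbf{M}\mathbf{x}\|_2^2 = \sum_{a,b}\|\mathbf{S}\mathbf{x}_{a,b}\|_2^2
\qquad\text{and}\qquad
\|\mathbf{x}\|_2^2=\sum_{a,b}\|\mathbf{x}_{a,b}\|_2^2 .
\]

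\emph{Step 2 (homogeneity).} The property is stated only for unit vectors, so I first extend it to arbitrary $\mathbf{y}\in\R^d$. If $\mathbf{y}=0$ there is nothing to prove. Otherwise, apply the property to $\mathbf{y}/\|\mathbf{y}\|_2$ and rescale, which gives
\[
\mathbb{E}\|\mathbf{S}\mathbf{y}\|_2^2=\|\mathbf{y}\|_2^2
\qquad\text{and}\qquad
\bigl\|\,\|\mathbf{S}\mathbf{y}\|_2^2-\|\mathbf{y}\|_2^2\bigr\|_{L^t}\le \varepsilon\delta^{1/t}\|\mathbf{y}\|_2^2 .
\]

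\emph{Step 3 (conclusion).} Fix a unit vector $\mathbf{x}$. By linearity of expectation and Step 2,
\[
\mathbb{E}\|\mathbf{M}\mathbf{x}\|_2^2=\sum_{a,b}\|\mathbf{x}_{a,b}\|_2^2=1 .
\]
For the moment bound, write
\[
\|\mathbf{M}\mathbf{x}\|_2^2-1=\sum_{a,b}\bigl(\|\mathbf{S}\mathbf{x}_{a,b}\|_2^2-\|\mathbf{x}_{a,b}\|_2^2\bigr).
\]
Applying Minkowski's inequality in $L^t$ and then Step 2 gives
\[
\bigl\|\,\|\mathbf{M}\mathbf{x}\|_2^2-1\bigr\|_{L^t}\le \sum_{a,b}\varepsilon\delta^{1/t}\|\mathbf{x}_{a,b}\|_2^2=\varepsilon\delta^{1/t}.
\]

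The one point that needs care is that all $kk'$ blocks are sketched by the \emph{same} random matrix $\mathbf{S}$, so the summands are dependent. Any argument that relied on independence or on cancellation across blocks would therefore break. The plan avoids this deliberately: the triangle inequality for $\|\cdot\|_{L^t}$ requires no independence, and it loses nothing here because the weights $\|\mathbf{x}_{a,b}\|_2^2$ sum exactly to $1$.
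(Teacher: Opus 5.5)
Your proof is correct and matches the standard argument. The paper does not reprove this statement; it imports it as Lemma 14 of Ahle et al., whose proof splits $\mathbf{M}\mathbf{x}$ into $kk'$ blocks sketched by the same $\mathbf{S}$ and then applies homogeneity and the triangle inequality in $L^t$ (your assumption $t\ge 1$ holds automatically, since the definition takes $t$ to be a positive integer), and the paper uses this same block-plus-Minkowski pattern, with no independence across summands, in Lemmas~\ref{lem:wrapped_layer_trace} and~\ref{lem:single_layer_base_trace}.
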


\begin{lemma}[Lemma 15 of~\cite{doi:10.1137/1.9781611975994.9}:
              Composition lemma for the second moment]
\label{lem:composition_second_moment}
For any $\varepsilon,\delta\ge 0$ and any integer $k$, if 
$\mathbf{M}^{(1)}\in\R^{d_{2}\times d_{1}},\ldots,\mathbf{M}^{(k)}\in\R^{d_{k+1}\times d_{k}}$
are independent random matrices each with the
$\bigl(\tfrac{\varepsilon}{\sqrt{2k}},\,\delta,\,2\bigr)$-JL Moment
Property, then the product matrix
$\mathbf{M} = \mathbf{M}^{(k)}\cdots \mathbf{M}^{(1)}$ satisfies the $(\varepsilon,\delta,2)$-JL
Moment Property.
\end{lemma}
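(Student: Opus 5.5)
This is the composition lemma for the second moment (Lemma~\ref{lem:composition_second_moment}), taken from Ahle et al. The plan is an induction on $k$. The invariant is a bound on the $L^2$ distortion of the partial products applied to a fixed vector.

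Fix a unit vector $\mathbf{x}$. Set $\mathbf{y}_0=\mathbf{x}$ and $\mathbf{y}_j=\mathbf{M}^{(j)}\mathbf{y}_{j-1}$. Write $\varepsilon'=\varepsilon/\sqrt{2k}$.

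\textbf{Unbiasedness.} Condition on $\mathbf{M}^{(1)},\dots,\mathbf{M}^{(j-1)}$. Because $\mathbf{M}^{(j)}$ is independent of them and has the JL Moment Property, linearity applied to the fixed vector $\mathbf{y}_{j-1}/\|\mathbf{y}_{j-1}\|$ gives
\[
\mathbb{E}\bigl[\|\mathbf{y}_j\|_2^2 \,\big|\, \mathbf{y}_{j-1}\bigr]=\|\mathbf{y}_{j-1}\|_2^2 .
\]
The tower property then yields $\mathbb{E}\|\mathbf{M}\mathbf{x}\|_2^2=1$.

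\textbf{Second moment via a martingale.} Write $\|\mathbf{y}_k\|^2-1=\sum_{j=1}^k D_j$ with $D_j=\|\mathbf{y}_j\|^2-\|\mathbf{y}_{j-1}\|^2$. By the conditional unbiasedness above, the $D_j$ are martingale differences with respect to the filtration generated by $\mathbf{M}^{(1)},\dots,\mathbf{M}^{(j)}$. Hence the cross terms vanish:
\[
\bigl\|\|\mathbf{y}_k\|^2-1\bigr\|_{L^2}^2=\sum_{j=1}^k\mathbb{E}[D_j^2].
\]
Conditioning on $\mathbf{y}_{j-1}$ and applying the JL moment bound to the normalized vector gives
\[
\mathbb{E}[D_j^2\mid \mathbf{y}_{j-1}]\le \varepsilon'^2\delta\,\|\mathbf{y}_{j-1}\|^4 ,
\]
so $\mathbb{E}[D_j^2]\le \varepsilon'^2\delta\,\mathbb{E}\|\mathbf{y}_{j-1}\|^4$.

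\textbf{Controlling $\mathbb{E}\|\mathbf{y}_{j}\|^4$.} Since $\mathbb{E}\|\mathbf{y}_j\|^2=1$, the same orthogonality gives
\[
\mathbb{E}\|\mathbf{y}_j\|^4=1+\mathbb{E}\bigl(\|\mathbf{y}_j\|^2-1\bigr)^2=1+\sum_{i\le j}\mathbb{E}D_i^2 .
\]
Let $a_j=\mathbb{E}\|\mathbf{y}_j\|^4$. This gives the recursion $a_j\le a_{j-1}(1+\varepsilon'^2\delta)$, hence $a_j\le(1+\varepsilon'^2\delta)^j$. Therefore
\[
\bigl\|\|\mathbf{M}\mathbf{x}\|^2-1\bigr\|_{L^2}^2\le (1+\varepsilon'^2\delta)^k-1 .
\]

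\textbf{Main obstacle.} The hardest step is converting $(1+\varepsilon^2\delta/(2k))^k-1$ into $\le\varepsilon^2\delta$. This uses $e^{u}-1\le 2u$ for $u=\varepsilon^2\delta/2\le 1$, i.e. it needs $\varepsilon^2\delta\lesssim 1$. In the regime $\varepsilon,\delta\le1$ that the applications use, this holds. Outside that regime I would either state the lemma with that standing assumption, as Ahle et al. implicitly do, or replace the factor $2k$ by a slightly larger constant.

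This yields $\bigl\|\|\mathbf{M}\mathbf{x}\|^2-1\bigr\|_{L^2}\le\varepsilon\delta^{1/2}$, which is the $(\varepsilon,\delta,2)$-JL Moment Property.
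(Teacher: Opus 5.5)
Your argument is correct and is essentially the standard proof. The paper does not reprove this lemma; it imports it from Ahle et al., whose proof is the same layer-by-layer conditioning giving $\mathbb{E}\|\mathbf{M}\mathbf{x}\|_2^4\le\bigl(1+\tfrac{\varepsilon^2\delta}{2k}\bigr)^k$, followed by an elementary exponential inequality. The paper itself runs this same recursion, with $\operatorname{tr}(\mathbf{A}_i)$ in place of $\|\mathbf{y}_i\|_2^2$, in the proof of Theorem~\ref{thm:rts_trace_estimator}, and there too the last step needs a side condition ($m\ge 10p-8$).

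One correction to your closing remark: the restriction on $\varepsilon^2\delta$ is not an artifact of the proof. Take scalar layers $\mathbf{M}^{(j)}=\sqrt{W_j}$ with $W_j$ uniform on $\{1-\sigma,1+\sigma\}$ and $\sigma^2=\varepsilon^2\delta/(2k)$ (with $k\ge2$ so that $W_j\ge0$). Each layer has exactly the required property, and every inequality in your recursion for $a_j$ is an equality. For $\varepsilon^2\delta=3$ this gives $\mathbb{E}\bigl(\|\mathbf{M}\mathbf{x}\|_2^2-1\bigr)^2=\bigl(1+\tfrac{3}{2k}\bigr)^k-1\to e^{3/2}-1>3$. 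So the lemma as literally stated, for all $\varepsilon,\delta\ge0$, is false.

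Your first repair is the right one: a standing assumption such as $\varepsilon^2\delta\le2$ (e.g.\ $\varepsilon,\delta\in[0,1]$). Your alternative of replacing $2k$ by $ck$ for a larger constant $c$ does not work. The same example gives $e^{\varepsilon^2\delta/c}-1$ in the limit, which exceeds $\varepsilon^2\delta$ once $\varepsilon^2\delta$ is large, whatever the fixed $c$. The per-layer parameter would have to depend on $\varepsilon^2\delta$, e.g.\ $\varepsilon_0^2\delta\le\ln(1+\varepsilon^2\delta)/k$.
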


\begin{corollary}[Corollary 16 of~\cite{doi:10.1137/1.9781611975994.9}:
                  Second moment property for $\Pi^{p}$]
\label{cor:jl_for_recursive_sketch}
For any power-of-two integer $p$, let $\Pi^{p}:\R^{d^{p}}\to\R^{m}$ be
defined in Definition~\ref{def:Recursive_tensor_sketch_framework},
where both base distributions $S_i^\ell:\R^{m^{2}}\to\R^{m}$
and $T_j:\R^{d}\to\R^{m}$ satisfy the
$\bigl(\tfrac{\varepsilon}{\sqrt{4p+2}},\,\delta,\,2\bigr)$-JL Moment
Property. Then $\Pi^{p}$ satisfies the
$\left(\varepsilon,\delta,2\right)$-JL Moment Property.
\end{corollary}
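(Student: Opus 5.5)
The plan mirrors the proof of Lemma~\ref{lem:composition_second_moment}, specialized to the tree structure given by Lemma~\ref{lem:factorisation}.

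\textbf{Factorization.} By Lemma~\ref{lem:factorisation}, write
\[
\Pi^{p}=\mathbf{M}^{(p-1)}\cdots\mathbf{M}^{(1)}\,\mathbf{M}'^{(p)}\cdots\mathbf{M}'^{(1)},
\]
a product of $k=2p-1$ independent factors. Each factor has the form $I\otimes \mathbf{S}_i^\ell\otimes I$ or $I\otimes\mathbf{T}_j\otimes I$. By assumption, each base sketch has the $\bigl(\varepsilon/\sqrt{4p+2},\delta,2\bigr)$-JL Moment Property. Lemma~\ref{lem:tensor_wrap_preserves_jl} then transfers this property to every wrapped factor $\mathbf{M}^{(i)}$ and $\mathbf{M}'^{(j)}$. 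So we have $k=2p-1$ independent matrices, each with the $(\varepsilon',\delta,2)$-property, where $\varepsilon'=\varepsilon/\sqrt{4p+2}$.

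\textbf{Applying the composition lemma.} Lemma~\ref{lem:composition_second_moment} needs each factor to satisfy the $\bigl(\varepsilon/\sqrt{2k},\delta,2\bigr)$-property. Here $2k=4p-2\le 4p+2$, so $\varepsilon/\sqrt{4p+2}\le \varepsilon/\sqrt{2k}$.

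The JL moment property is monotone in $\varepsilon$. The condition $\|\,\|\mathbf{M}\mathbf{x}\|^2-1\|_{L^2}\le\varepsilon'\delta^{1/2}$ with a smaller $\varepsilon'$ implies the same condition with any larger value, and the expectation condition does not involve $\varepsilon$ at all. Hence every factor satisfies the hypothesis of Lemma~\ref{lem:composition_second_moment}.

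Applying the lemma, the product $\Pi^p$ has the $(\varepsilon,\delta,2)$-JL Moment Property.

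\textbf{The composition lemma itself.} If it must be reproved rather than cited, the argument is an induction over the factors. Let $\mathbf{x}_j=\mathbf{M}^{(j)}\cdots\mathbf{M}^{(1)}\mathbf{x}$ and set $X_j=\|\mathbf{x}_j\|^2-\|\mathbf{x}_{j-1}\|^2$. Conditioned on the earlier factors, $X_j$ has mean zero, so the $X_j$ form a martingale difference sequence and are orthogonal in $L^2$. Conditionally, $\|X_j\|_{L^2}\le \varepsilon'\delta^{1/2}\,\|\|\mathbf{x}_{j-1}\|^2\|_{L^2}$.

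Bound the norms by induction: $\|\|\mathbf{x}_j\|^2\|_{L^2}^2\le 1+\sum_{i\le j}\|X_i\|_{L^2}^2$. This gives the recursion
\[
\|\|\mathbf{x}_j\|^2\|_{L^2}^2\le (1+\varepsilon'^2\delta)^j.
\]
It follows that
\[
\bigl\|\,\|\mathbf{x}_k\|^2-1\bigr\|_{L^2}^2\le (1+\varepsilon'^2\delta)^k-1\le e^{k\varepsilon'^2\delta}-1.
\]

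The main obstacle is this last step, bounding $e^{k\varepsilon'^2\delta}-1$ by $\varepsilon^2\delta$. With $k\varepsilon'^2\le\varepsilon^2/2$, we have $e^{\varepsilon^2\delta/2}-1\le\varepsilon^2\delta$ provided $\varepsilon^2\delta\le 1$, using $e^x-1\le 2x$ for $x\le1$. This is exactly where the slack factor $2$ in $\sqrt{2k}$ is spent. The boundary regime, where $\varepsilon^2\delta$ is large, would need separate care or a mild restriction on the parameters.
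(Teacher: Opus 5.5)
Your proposal is correct and is essentially the standard argument of~\cite{doi:10.1137/1.9781611975994.9}, which the paper cites without reproving. You factor $\Pi^p$ by Lemma~\ref{lem:factorisation} into $k=2p-1$ independent wrapped factors, transfer the moment bound with Lemma~\ref{lem:tensor_wrap_preserves_jl}, and apply Lemma~\ref{lem:composition_second_moment} using $2k=4p-2\le 4p+2$ and monotonicity in $\varepsilon$.

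Your caveat in the self-contained reproof is also well founded. The last step needs $\varepsilon^2\delta$ bounded (e.g.\ $\varepsilon^2\delta\le 1$). A product of independent scalar factors ($d=m=1$) with $\operatorname{Var}(g_j^2)=\varepsilon^2\delta/(4p+2)$ shows that, for $p\ge 2$, the conclusion can genuinely fail when $\varepsilon^2\delta$ is large. So the cited composition lemma implicitly carries such a restriction.
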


%% file: real_rts_analysis.tex
The exponential variance growth of the Kronecker-Hutchinson estimator motivates alternative approaches for tensor-structured trace estimation. Although complex-valued structures offer partial improvement, they do not remove this dependence. The \texttt{Recursive TensorSketch} provides a structured way to compress tensor products, suggesting a more efficient estimator. We introduced trace estimator using \texttt{Recursive TensorSketch} and analyse its variance in the following section.

\section{Trace Estimator using \texttt{Recursive TensorSketch}}

In this section, Definition~\ref{def:rts_trace_estimator} introduces a
\texttt{Recursive TensorSketch}-based trace estimator for positive
semidefinite matrices given in implicit form. Our analysis uses the
independent-layer factorization established in
Lemma~\ref{lem:factorisation}. Lemma~\ref{lem:single_layer_base_trace}
first establishes expectation and variance bounds for a single sketch
satisfying the second-moment JL property, and
Lemma~\ref{lem:wrapped_layer_trace} extends these bounds to the
Kronecker-wrapped sketching operators arising in the recursive
construction. Theorem~\ref{thm:rts_trace_estimator} then applies these
bounds conditionally across the $2p-1$ independent layers to prove
unbiasedness and derive the variance bound. Finally,
Lemma~\ref{lem:rts_randomness} analyzes the number of random bits
required to construct the estimator, and
Theorem~\ref{concentration_analysis_RTS} presents the corresponding
concentration guarantee.

\begin{definition}[Recursive TensorSketch (RTS) Trace Estimator]
\label{def:rts_trace_estimator}
Let $\mathbf{\Pi}^{p} \in \mathbb{R}^{m \times d^p}$ denote the Recursive TensorSketch matrix stated in Definition~\ref{def:Recursive_tensor_sketch_framework}. For an implicit PSD matrix
$\mathbf{A}\in\mathbb{R}^{d^p\times d^p}$, its trace estimator is defined as follows:
\begin{align}
T(\mathbf{A}) := \operatorname{tr}\!\big(\mathbf{\Pi}^{p} \mathbf{A} (\mathbf{\Pi}^{p})^\top\big).
\end{align}
\end{definition}

\begin{lemma}[Expectation and Variance bound for a single-layer of Sketch]
\label{lem:single_layer_base_trace}
Let $\mathbf{S}\in\mathbb{R}^{m\times d_i}$ be matrix satisfying the
$(\varepsilon_0^{(i)},\delta_0^{(i)},2)$-JL Moment Property (Definition~\ref{def:jl_moment_property}), with
\(
    \bigl(\varepsilon_0^{(i)}\bigr)^2\delta_0^{(i)}
    \le \frac{c_i}{m}.
\)
Then, for every positive semidefinite matrix
$\widetilde{\mathbf{B}}\in\mathbb{R}^{d_i\times d_i}$,
\begin{equation}
    \mathbb{E}\!\left[
        \operatorname{tr}(\mathbf{S}\widetilde{\mathbf{B}} \mathbf{S}^\top)
    \right]
    =
    \operatorname{tr}(\widetilde{\mathbf{B}}), \text{ and } \operatorname{Var}\!\left(
        \operatorname{tr}(\mathbf{S}\widetilde{\mathbf{B}} \mathbf{S}^\top)
    \right)
    \le
    \frac{c_i}{m}
    \bigl(\operatorname{tr}(\widetilde{\mathbf{B}})\bigr)^2.
\end{equation}
\end{lemma}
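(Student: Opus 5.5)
We work with the spectral decomposition of $\widetilde{\mathbf{B}}$. Since $\widetilde{\mathbf{B}}\succeq 0$, write $\widetilde{\mathbf{B}}=\sum_{k}\lambda_k \mathbf{u}_k\mathbf{u}_k^\top$ with $\lambda_k\ge 0$ and orthonormal $\mathbf{u}_k$. Then
\[
\operatorname{tr}(\mathbf{S}\widetilde{\mathbf{B}}\mathbf{S}^\top)=\sum_k \lambda_k \|\mathbf{S}\mathbf{u}_k\|_2^2 ,
\]
so the estimator is a nonnegative combination of squared sketched norms of unit vectors.

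\textbf{Expectation.} The second condition of the JL Moment Property (Definition~\ref{def:jl_moment_property}) gives $\mathbb{E}\|\mathbf{S}\mathbf{u}_k\|_2^2=1$ for every unit vector $\mathbf{u}_k$. Linearity of expectation then yields $\mathbb{E}[\operatorname{tr}(\mathbf{S}\widetilde{\mathbf{B}}\mathbf{S}^\top)]=\sum_k\lambda_k=\operatorname{tr}(\widetilde{\mathbf{B}})$.

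\textbf{Variance.} Set $X_k:=\|\mathbf{S}\mathbf{u}_k\|_2^2-1$. Then
\[
\operatorname{tr}(\mathbf{S}\widetilde{\mathbf{B}}\mathbf{S}^\top)-\operatorname{tr}(\widetilde{\mathbf{B}})=\sum_k\lambda_k X_k .
\]
Because the mean is exact, the variance equals the squared $L^2$ norm of this centered sum. Minkowski's inequality in $L^2$ gives
\[
\Bigl\|\sum_k \lambda_k X_k\Bigr\|_{L^2}\le \sum_k\lambda_k\|X_k\|_{L^2}.
\]
The first condition of the $(\varepsilon_0^{(i)},\delta_0^{(i)},2)$-JL Moment Property bounds each term by $\|X_k\|_{L^2}\le \varepsilon_0^{(i)}(\delta_0^{(i)})^{1/2}$. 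Hence
\[
\operatorname{Var}\le \bigl(\varepsilon_0^{(i)}\bigr)^2\delta_0^{(i)}\Bigl(\sum_k\lambda_k\Bigr)^2\le \frac{c_i}{m}\bigl(\operatorname{tr}(\widetilde{\mathbf{B}})\bigr)^2 .
\]

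\textbf{Main obstacle.} The only delicate point is that the $X_k$ are correlated, since they all depend on the same $\mathbf{S}$. We cannot simply add variances. The triangle inequality handles this, and it is exactly why the bound is stated in terms of $(\operatorname{tr}\widetilde{\mathbf{B}})^2$ rather than $\|\widetilde{\mathbf{B}}\|_F^2$; nonnegativity of the $\lambda_k$ (the PSD assumption) is what makes $\sum_k|\lambda_k|=\operatorname{tr}\widetilde{\mathbf{B}}$. If a Frobenius-type bound were wanted instead, we would expand the covariance $\mathbb{E}[X_kX_l]$ via Lemma~\ref{lem:two_vector_jl}, but that is not needed here.
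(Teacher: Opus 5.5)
Your proposal is correct and follows the paper's proof essentially step for step. Both arguments use the eigendecomposition of $\widetilde{\mathbf{B}}$, get unbiasedness from the second condition of the JL Moment Property, and bound the variance by applying Minkowski's inequality to the correlated centered terms $\|\mathbf{S}\mathbf{u}_k\|_2^2-1$, then squaring and using $\bigl(\varepsilon_0^{(i)}\bigr)^2\delta_0^{(i)}\le c_i/m$. Your remarks that Minkowski avoids any need for independence, and that the PSD assumption is what turns $\sum_k\lambda_k$ into $\operatorname{tr}(\widetilde{\mathbf{B}})$, match the paper's own reasoning.
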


\begin{proof}
The proof uses the eigendecomposition of
$\widetilde{\mathbf{B}}$. The unbiasedness follows from linearity of
the trace, while the variance bound follows by applying the
second-moment JL property to each eigenvector and then using
Minkowski's inequality.

Let
\begin{align*}
    \widetilde{\mathbf{B}}
    =
    \sum_{r=1}^{R}\lambda_r\mathbf{u}_r\mathbf{u}_r^\top
\end{align*}
be an eigendecomposition of $\widetilde{\mathbf{B}}$, where
$\lambda_r\geq0$ and
$\{\mathbf{u}_r\}_{r=1}^{R}$ is an orthonormal set. By linearity of
the trace,
\begin{align*}
    \operatorname{tr}
    \bigl(\mathbf{S}\widetilde{\mathbf{B}}\mathbf{S}^\top\bigr)
    =
    \sum_{r=1}^{R}
    \lambda_r\|\mathbf{S}\mathbf{u}_r\|_2^2.
\end{align*}
The unbiasedness property of $\mathbf{S}$ gives
\(
    \mathbb{E}\!\left[
        \|\mathbf{S}\mathbf{u}_r\|_2^2
    \right]
    =1.
    \label{eq:unbiased_S}
\)
Therefore,
\begin{align*}
    \mathbb{E}\!\left[
        \operatorname{tr}
        \bigl(\mathbf{S}\widetilde{\mathbf{B}}\mathbf{S}^\top\bigr)
    \right]
    &=
    \sum_{r=1}^{R}\lambda_r
    \mathbb{E}\!\left[
        \|\mathbf{S}\mathbf{u}_r\|_2^2
    \right]\\
    &=
    \sum_{r=1}^{R}\lambda_r
    =
    \operatorname{tr}(\widetilde{\mathbf{B}}).
\end{align*}

Define
\begin{align*}
    Z_r
    :=
    \|\mathbf{S}\mathbf{u}_r\|_2^2-1.
\end{align*}
By  unbiasedness property,  $\mathbb{E}[Z_r]=0$, and the
second-moment JL property gives
\begin{align*}
    \|Z_r\|_{L_2}
    \leq
    \varepsilon_0^{(i)}
    \bigl(\delta_0^{(i)}\bigr)^{1/2}.
\end{align*}
    Here, for any random variable $Y$,
\(
    \|Y\|_{L_2}
    :=
    \left(\mathbb{E}[|Y|^2]\right)^{1/2}
\)
denotes its $L_2$ norm. Since
\begin{align*}
    \operatorname{tr}
    \bigl(\mathbf{S}\widetilde{\mathbf{B}}\mathbf{S}^\top\bigr)
    -
    \operatorname{tr}(\widetilde{\mathbf{B}})
    =
    \sum_{r=1}^{R}\lambda_r Z_r,
\end{align*}
Minkowski's inequality, which is the triangle inequality for the
$L_2$ norm, gives
\begin{align*}
    \left\|
        \sum_{r=1}^{R}\lambda_rZ_r
    \right\|_{L_2}
    &\leq
    \sum_{r=1}^{R}\lambda_r\|Z_r\|_{L_2}\\
    &\leq
    \varepsilon_0^{(i)}
    \bigl(\delta_0^{(i)}\bigr)^{1/2}
    \sum_{r=1}^{R}\lambda_r\\
    &=
    \varepsilon_0^{(i)}
    \bigl(\delta_0^{(i)}\bigr)^{1/2}
    \operatorname{tr}(\widetilde{\mathbf{B}}).
\end{align*}
This argument does not require the random variables $Z_r$ to be
independent. Since the sum $\sum_{r=1}^{R}\lambda_rZ_r$ has mean zero,
its squared $L_2$ norm equals its variance. Squaring the preceding
inequality therefore yields
\begin{align*}
    \operatorname{Var}\!\left(
        \operatorname{tr}
        \bigl(\mathbf{S}\widetilde{\mathbf{B}}\mathbf{S}^\top\bigr)
    \right)
    &\leq
    \bigl(\varepsilon_0^{(i)}\bigr)^2
    \delta_0^{(i)}
    \bigl(\operatorname{tr}(\widetilde{\mathbf{B}})\bigr)^2\\
    &\leq
    \frac{c_i}{m}
    \bigl(\operatorname{tr}(\widetilde{\mathbf{B}})\bigr)^2.
\end{align*}
\end{proof}

Having established the expectation and variance bounds for a single
sketching layer in Lemma~\ref{lem:single_layer_base_trace}, we now state
the main guarantee and subsequently prove it for the complete \texttt{Recursive TensorSketch}
trace estimator.
\begin{theorem}[Unbiasedness and Variance of RTS Trace Estimator]
\label{thm:rts_trace_estimator}
Let $T(\mathbf{A})$ be the estimator defined in 
Definition~\ref{def:rts_trace_estimator}. Then,
\begin{align}
\mathbb{E}[T(\mathbf{A})] &= \operatorname{tr}(\mathbf{A}), \quad \text{and} \quad
\operatorname{Var}(T(\mathbf{A})) \le\left(\frac{10p}{m} + \frac{100p^{2}}{m^{2}}\right) \,\bigl(\operatorname{tr}(\mathbf{A})\bigr)^2.
\end{align}
\end{theorem}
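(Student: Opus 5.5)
We will use the factorisation of Lemma~\ref{lem:factorisation}, which writes $\Pi^{p}=\mathbf{M}^{(2p-1)}\cdots\mathbf{M}^{(1)}$ as a product of $2p-1$ independent layers. Here we relabel so that the first $p$ factors are the wrapped \texttt{CountSketch} matrices $I\otimes\mathbf{T}_j\otimes I$ and the last $p-1$ are the wrapped \texttt{TensorSketch} matrices $I\otimes\mathbf{S}_i^\ell\otimes I$. For $k=0,\dots,2p-1$ set
\[
\mathbf{B}_k:=\mathbf{M}^{(k)}\cdots\mathbf{M}^{(1)}\,\mathbf{A}\,\bigl(\mathbf{M}^{(k)}\cdots\mathbf{M}^{(1)}\bigr)^\top, \qquad \mathbf{B}_0=\mathbf{A}.
\]
Each $\mathbf{B}_k$ is PSD, and $T(\mathbf{A})=\operatorname{tr}(\mathbf{B}_{2p-1})$.

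The first step is a per-layer bound, the content of the announced Lemma~\ref{lem:wrapped_layer_trace}. Each layer has the $(\varepsilon_0,\delta_0,2)$-JL moment property with $\varepsilon_0^2\delta_0\le c/m$. For \texttt{CountSketch}, $\Var[\|\mathbf{T}\mathbf{y}\|^2]\le 2/m$ for unit $\mathbf{y}$, so $c=2$. For degree-two \texttt{TensorSketch} we need the JL moment bound for all unit vectors in $\R^{m^2}$, not only $\mathbf{x}\otimes\mathbf{x}$; the standard computation gives a constant of at most $c=8$, and $c\le 10$ covers both cases uniformly. By Lemma~\ref{lem:tensor_wrap_preserves_jl}, wrapping with identities preserves the property. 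Lemma~\ref{lem:single_layer_base_trace}, applied conditionally on the earlier layers (which are independent of $\mathbf{M}^{(k)}$), then gives
\begin{align*}
\mathbb{E}\bigl[\operatorname{tr}(\mathbf{B}_k)\mid\mathbf{B}_{k-1}\bigr]&=\operatorname{tr}(\mathbf{B}_{k-1}),\\
\mathbb{E}\bigl[\operatorname{tr}(\mathbf{B}_k)^2\mid\mathbf{B}_{k-1}\bigr]&\le\Bigl(1+\frac{c}{m}\Bigr)\operatorname{tr}(\mathbf{B}_{k-1})^2.
\end{align*}

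The second step is to iterate via the tower property. The expectations telescope to $\mathbb{E}[T(\mathbf{A})]=\operatorname{tr}(\mathbf{A})$, which is unbiasedness. The second moments give
\[
\mathbb{E}[T(\mathbf{A})^2]\le\prod_{k=1}^{2p-1}\Bigl(1+\frac{c_k}{m}\Bigr)\operatorname{tr}(\mathbf{A})^2\le\Bigl(1+\frac{10}{m}\Bigr)^{p}\Bigl(1+\frac{2}{m}\Bigr)^{p}\operatorname{tr}(\mathbf{A})^2,
\]
so
\[
\Var(T(\mathbf{A}))\le\Bigl[\Bigl(1+\frac{10}{m}\Bigr)^{p}\Bigl(1+\frac{2}{m}\Bigr)^{p}-1\Bigr]\operatorname{tr}(\mathbf{A})^2.
\]

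The last step, and the one I expect to be the main obstacle, is turning this product into the polynomial form $\frac{10p}{m}+\frac{100p^2}{m^2}$. Bounding $(1+x)^n-1\le nx+(nx)^2$ requires $nx\le 1$, that is, $m\gtrsim p$. I would try to exploit that the first-order coefficient is $\sum_k c_k=2p+8(p-1)\le 10p$. For the remainder I would use $(1+x)^n\le e^{nx}$ and the inequality $e^{y}-1-y\le y^2$, valid for $y\le 1$, which gives second-order term $(10p/m)^2=100p^2/m^2$. In the regime $m<10p$ the inequality is either handled under an assumption $m\ge 10p$ (to be stated, since the bound is only meaningful there), or I would need a sharper per-layer variance bound. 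The other delicate point is the exact \texttt{TensorSketch} constant for general inputs in $\R^{m^2}$. If it exceeds $8$, the constant $10$ must absorb it after averaging with the \texttt{CountSketch} layers.
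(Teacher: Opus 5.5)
Your proposal is correct and is essentially the paper's proof. It uses the same $(2p-1)$-layer factorization, with conditional per-layer bounds of $c=2$ for \texttt{CountSketch} and $c=8$ for degree-two \texttt{TensorSketch}. Your remark that the latter is needed for all unit vectors of $\R^{m^2}$, not only for $\mathbf{x}\otimes\mathbf{x}$, is a point the paper passes over. The remaining steps also match: the tower-property recursion giving $\prod_k(1+c_k/m)$, then $\prod_k(1+c_k/m)\le e^{(10p-8)/m}$ together with $e^{y}-1\le y+y^{2}$ for $0\le y\le 1$. That last step is exactly where the paper also silently assumes $m\ge 10p-8$.

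Two small corrections. First, discard your looser display $(1+10/m)^p(1+2/m)^p$, which would only give $\frac{12p}{m}+\frac{144p^2}{m^2}$. Second, drop the hope of removing the hypothesis on $m$ through sharper layer bounds. At $m=1$ the sketch collapses to the Kronecker--Rademacher estimator, whose variance for $\mathbf{A}=\mathbf{1}\mathbf{1}^\top$ equals $\bigl((3-2/d)^p-1\bigr)(\operatorname{tr}\mathbf{A})^2$, which is exponential in $p$ for $d\ge 2$.
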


% {\color{red} Add lines ...something like the following....recall that recursive tensor sketch matrix $\mathbf{\Pi}$ can be written as product of $(2p-1)$ matrices from Lemma 6. We first compute the variance of one sketch and subsequently upper bound the variance due to $\mathbf{\Pi}$ via a composition argument. Pls explain what is $\mathbf{I}_{d_a}$ in the statement below. Its notation is not consistent. Some places it is normal font.}

\noindent To prove Theorem~\ref{thm:rts_trace_estimator}, recall from
Lemma~\ref{lem:factorisation} that the \texttt{Recursive TensorSketch}
matrix $\mathbf{\Pi}^{p}$ can be expressed as a product of $2p-1$
independent sketching matrices. We first establish expectation and
variance bounds for one Kronecker-wrapped sketching layer in Lemma~\ref{lem:single_layer_base_trace}. We then
apply these bounds successively to all $2p-1$ layers through a
composition argument to obtain the variance bound for
$\mathbf{\Pi}^{p}$.

\begin{lemma}[Unbiasedness and Variance Guarantees for a Layer]
\label{lem:wrapped_layer_trace}
Let $\mathbf{S}\in\mathbb{R}^{m\times d_i}$ satisfy the assumptions of
Lemma~\ref{lem:single_layer_base_trace}. Let $d_a$ and $d_b$ be positive
integers, and let
$\mathbf{I}_{d_a}\in\mathbb{R}^{d_a\times d_a}$ and
$\mathbf{I}_{d_b}\in\mathbb{R}^{d_b\times d_b}$ denote the identity
matrices acting on the tensor components before and after the component
sketched by $\mathbf{S}$, respectively. Define
\[
    \mathbf{M}^{(i)}
    =
    \mathbf{I}_{d_a}\otimes\mathbf{S}\otimes\mathbf{I}_{d_b}.
\]
Then, for every
positive semidefinite matrix
\(
    \mathbf{B}\in
    \mathbb{R}^{d_a d_i d_b\times d_a d_i d_b},
\)
we have
\begin{align}
    \mathbb{E}\!\left[
        \operatorname{tr}\!\left(
            \mathbf{M}^{(i)}
            \mathbf{B}
            (\mathbf{M}^{(i)})^\top
        \right)
    \right]
    &=
    \operatorname{tr}(\mathbf{B}),\\
    \operatorname{Var}\!\left(
        \operatorname{tr}\!\left(
            \mathbf{M}^{(i)}
            \mathbf{B}
            (\mathbf{M}^{(i)})^\top
        \right)
    \right)
    &\leq
    \frac{c_i}{m}
    \bigl(\operatorname{tr}(\mathbf{B})\bigr)^2.
\end{align}
\end{lemma}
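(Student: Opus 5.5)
The plan is to reduce the statement to Lemma~\ref{lem:single_layer_base_trace} by showing that the wrapped matrix $\mathbf{M}^{(i)}=\mathbf{I}_{d_a}\otimes\mathbf{S}\otimes\mathbf{I}_{d_b}$ itself satisfies the hypotheses of that lemma, with the same constants $\varepsilon_0^{(i)},\delta_0^{(i)}$.

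\textbf{Step 1: transfer the JL property to the wrapped matrix.} By Lemma~\ref{lem:tensor_wrap_preserves_jl}, applied with $k=d_a$ and $k'=d_b$, the matrix $\mathbf{M}^{(i)}$ has the $(\varepsilon_0^{(i)},\delta_0^{(i)},2)$-JL Moment Property whenever $\mathbf{S}$ does. This property consists of two parts:
\begin{itemize}
\item $\mathbb{E}\|\mathbf{M}^{(i)}\mathbf{x}\|_2^2=1$ for every unit vector $\mathbf{x}\in\mathbb{R}^{d_ad_id_b}$;
\item $\bigl\|\|\mathbf{M}^{(i)}\mathbf{x}\|_2^2-1\bigr\|_{L^2}\le \varepsilon_0^{(i)}(\delta_0^{(i)})^{1/2}$ for every such $\mathbf{x}$.
\end{itemize}

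\textbf{Step 2: apply the single-layer lemma.} The product condition $(\varepsilon_0^{(i)})^2\delta_0^{(i)}\le c_i/m$ involves only the constants, so it carries over unchanged. Lemma~\ref{lem:single_layer_base_trace} is stated for an arbitrary JL matrix of any input dimension, and its output dimension never enters its proof. We can therefore apply it with $\mathbf{M}^{(i)}$ in place of $\mathbf{S}$ and $\widetilde{\mathbf{B}}=\mathbf{B}$. The only adjustment is that $\mathbf{M}^{(i)}$ has $d_a m d_b$ rows rather than $m$. The argument itself is unchanged: eigendecompose $\mathbf{B}=\sum_r\lambda_r\mathbf{u}_r\mathbf{u}_r^\top$ and write $\operatorname{tr}(\mathbf{M}^{(i)}\mathbf{B}(\mathbf{M}^{(i)})^\top)=\sum_r\lambda_r\|\mathbf{M}^{(i)}\mathbf{u}_r\|_2^2$. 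Unbiasedness then follows by linearity. For the variance, apply Minkowski's inequality to $\sum_r\lambda_r Z_r$ with $Z_r=\|\mathbf{M}^{(i)}\mathbf{u}_r\|_2^2-1$, and use $\lambda_r\ge 0$ so that $\sum_r\lambda_r=\operatorname{tr}(\mathbf{B})$. This yields exactly the two claimed bounds.

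\textbf{Main obstacle.} The only real point is Step 1: Lemma~\ref{lem:tensor_wrap_preserves_jl} must apply in the present setting of second moments and arbitrary identity dimensions. If one prefers a self-contained argument, I would prove it directly. Write a unit vector $\mathbf{x}$ as blocks $\mathbf{x}_{a,b}\in\mathbb{R}^{d_i}$ indexed by $(a,b)\in[d_a]\times[d_b]$, after a fixed permutation that brings the $\mathbf{S}$ factor into a standard position. Then $\|\mathbf{M}^{(i)}\mathbf{x}\|_2^2=\sum_a\|\mathbf{S}\mathbf{X}_a\|_F^2$, where $\mathbf{X}_a$ is the $d_i\times d_b$ matrix with columns $\mathbf{x}_{a,b}$. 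Next, eigendecompose the PSD matrix $\mathbf{G}=\sum_a\mathbf{X}_a\mathbf{X}_a^\top$, which has trace $1$, and apply the same Minkowski argument with weights given by its eigenvalues. This gives both the unit expectation and the $L^2$ deviation bound $\varepsilon_0^{(i)}(\delta_0^{(i)})^{1/2}$.

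Alternatively, one can bypass the vector-level statement entirely. Write $\operatorname{tr}(\mathbf{M}^{(i)}\mathbf{B}(\mathbf{M}^{(i)})^\top)=\operatorname{tr}(\mathbf{S}\widetilde{\mathbf{B}}\mathbf{S}^\top)$, where $\widetilde{\mathbf{B}}$ is the partial trace of $\mathbf{B}$ over the $d_a$ and $d_b$ factors. This identity follows from $\mathbf{M}^{(i)\top}\mathbf{M}^{(i)}=\mathbf{I}_{d_a}\otimes\mathbf{S}^\top\mathbf{S}\otimes\mathbf{I}_{d_b}$ and cyclicity of the trace. The partial trace $\widetilde{\mathbf{B}}$ is PSD and satisfies $\operatorname{tr}(\widetilde{\mathbf{B}})=\operatorname{tr}(\mathbf{B})$. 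Lemma~\ref{lem:single_layer_base_trace} then applies verbatim. I would present this partial-trace route as the main proof, since it is the cleanest.
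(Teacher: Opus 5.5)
Your proposal is correct, and the partial-trace route you would present as the main proof is exactly the paper's argument. The paper expands $\mathbf{B}$ into $d_i\times d_i$ blocks and uses the mixed-product property together with $\operatorname{tr}(\mathbf{X}\otimes\mathbf{Y})=\operatorname{tr}(\mathbf{X})\operatorname{tr}(\mathbf{Y})$ to get $\operatorname{tr}(\mathbf{M}^{(i)}\mathbf{B}(\mathbf{M}^{(i)})^\top)=\operatorname{tr}(\mathbf{S}\widetilde{\mathbf{B}}\mathbf{S}^\top)$ with $\widetilde{\mathbf{B}}=\sum_{a,b}\mathbf{B}^{(a,a,b,b)}$, which is precisely your partial trace, and then applies Lemma~\ref{lem:single_layer_base_trace}. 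Your cyclicity derivation via $(\mathbf{M}^{(i)})^\top\mathbf{M}^{(i)}=\mathbf{I}_{d_a}\otimes\mathbf{S}^\top\mathbf{S}\otimes\mathbf{I}_{d_b}$ is a more compact route to the same identity, and you make explicit that $\widetilde{\mathbf{B}}\succeq 0$, which the paper needs for that lemma but leaves implicit; your alternative Steps 1--2 route through Lemma~\ref{lem:tensor_wrap_preserves_jl} is also valid, since the single-layer argument never uses the row count of the sketch.
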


\begin{proof}
Let matrix of suitable dimension be $\mathbf{B} \in \mathbb{R}^{d_ad_id_b \times d_ad_id_b}$ where  $d_a$, $d_i$, and $d_b$ are suitable dimensions.

Let the full space be indexed by the tuple $(a, j, b)$ corresponding to the dimensions $d_a$, $d_i$, and $d_b$ respectively. We partition the matrix $\mathbf{B}$ into blocks $\mathbf{B}^{(a, a', b, b')} \in \mathbb{R}^{d_i \times d_i}$ by fixing the outer dimensions at indices $(a, a') \in [d_a]^2$ and $(b, b') \in [d_b]^2$. 

The sketching operator at layer $i$ is defined as the Kronecker product:
\begin{align*}
    \mathbf{M}^{(i)} = \mathbf{I}_{d_a} \otimes \mathbf{S} \otimes \mathbf{I}_{d_b}.
\end{align*}
Given that the base sketch $\mathbf{S} \in \mathbb{R}^{m \times d_i}$ and the identity matrices are $\mathbf{I}_{d_a} \in \mathbb{R}^{d_a \times d_a}$ and $\mathbf{I}_{d_b} \in \mathbb{R}^{d_b \times d_b}$, the dimensions of $\mathbf{M}^{(i)}$ multiply across the tensor product. Thus, $\mathbf{M}^{(i)}$ has dimensions:    $\mathbf{M}^{(i)} \in \mathbb{R}^{(d_a m d_b) \times (d_a d_i d_b)}$.
\noindent When we sketch $\mathbf{B}$ using $\mathbf{M}^{(i)}$, the matrix multiplication aligns as follows:
\begin{itemize}
    \item $\mathbf{M}^{(i)}$ is of size $(d_a m d_b) \times (d_a d_i d_b)$
    \item $\mathbf{B}$ is of size $(d_a d_i d_b) \times (d_a d_i d_b)$
    \item $(\mathbf{M}^{(i)})^\top$ is of size $(d_a d_i d_b) \times (d_a m d_b)$
\end{itemize}

\noindent We now partition $\mathbf{B}$ into $d_i \times d_i$ blocks denoted by $\mathbf{B}^{(a, a', b, b')}$, such that:
\begin{align*}
    \mathbf{B} = \sum_{a, a'=1}^{d_a} \sum_{b, b'=1}^{d_b} (\mathbf{e}_a \mathbf{e}_{a'}^\top) \otimes \mathbf{B}^{(a, a', b, b')} \otimes (\mathbf{e}_b \mathbf{e}_{b'}^\top).
\end{align*}

\noindent We now apply the sketching operator $\mathbf{M}^{(i)}$ to $\mathbf{B}$. Using the mixed-product property of Kronecker products, $(\mathbf{X} \otimes \mathbf{Y})(\mathbf{U} \otimes \mathbf{V}) = (\mathbf{X}\mathbf{U} \otimes \mathbf{Y}\mathbf{V})$, we obtain:
\begin{align*}
    \mathbf{M}^{(i)} \mathbf{B} (\mathbf{M}^{(i)})^\top 
    &= \left( \mathbf{I}_{d_a} \otimes \mathbf{S} \otimes \mathbf{I}_{d_b} \right) \mathbf{B} \left( \mathbf{I}_{d_a} \otimes \mathbf{S}^\top \otimes \mathbf{I}_{d_b} \right) \\
    &= \sum_{a, a', b, b'} \left(\mathbf{I}_{d_a} \mathbf{e}_a \mathbf{e}_{a'}^\top \mathbf{I}_{d_a}\right) \otimes \left(\mathbf{S} \mathbf{B}^{(a, a', b, b')} \mathbf{S}^\top\right) \otimes \left(\mathbf{I}_{d_b} \mathbf{e}_b \mathbf{e}_{b'}^\top \mathbf{I}_{d_b}\right) \\
    &= \sum_{a, a', b, b'} (\mathbf{e}_a \mathbf{e}_{a'}^\top) \otimes \left(\mathbf{S} \mathbf{B}^{(a, a', b, b')} \mathbf{S}^\top\right) \otimes (\mathbf{e}_b \mathbf{e}_{b'}^\top).
\end{align*}

\noindent Finally, we apply the trace operator. The trace of a Kronecker product is the product of the traces, i.e., $\operatorname{tr}(\mathbf{X} \otimes \mathbf{Y}) = \operatorname{tr}(\mathbf{X})\operatorname{tr}(\mathbf{Y})$. Applying this to our summation gives:
\begin{align}
    \operatorname{tr}\big(\mathbf{M}^{(i)} \mathbf{B} (\mathbf{M}^{(i)})^\top\big) 
    &= \sum_{a, a'=1}^{d_a} \sum_{b, b'=1}^{d_b} \operatorname{tr}(\mathbf{e}_a \mathbf{e}_{a'}^\top) \cdot \operatorname{tr}\big(\mathbf{S} \mathbf{B}^{(a, a', b, b')} \mathbf{S}^\top\big) \cdot \operatorname{tr}(\mathbf{e}_b \mathbf{e}_{b'}^\top).
\end{align}

\noindent Recall that the trace of an outer product of basis vectors is the inner product of the vectors: $\operatorname{tr}(\mathbf{e}_a \mathbf{e}_{a'}^\top) = \langle \mathbf{e}_{a'}, \mathbf{e}_a \rangle $. Thus, $\operatorname{tr}(\mathbf{e}_a \mathbf{e}_{a'}^\top)$ is $1$ if $a = a'$ and $0$ otherwise.

Therefore we can write:
\begin{align}
    \operatorname{tr}\big(\mathbf{M}^{(i)} \mathbf{B} (\mathbf{M}^{(i)})^\top\big) 
    &= \sum_{a=1}^{d_a} \sum_{b=1}^{d_b} \operatorname{tr}\big(\mathbf{S} \mathbf{B}^{(a, a, b, b)} \mathbf{S}^\top\big).\label{eq:block_trace_sum}
\end{align}

Using the linearity of matrix addition, we define a matrix $\tilde{\mathbf{B}} \in \mathbb{R}^{d_i \times d_i}$ on the single subspace where the random sketch $\mathbf{S}$ operates:
\begin{align}
    \tilde{\mathbf{B}} := \sum_{a=1}^{d_a} \sum_{b=1}^{d_b} \mathbf{B}^{(a, a, b, b)}.
\end{align}
Substituting $\tilde{\mathbf{B}}$ back into Equation~\eqref{eq:block_trace_sum}, the trace of the entire high-dimensional Kronecker layer simplifies to a standard matrix sketch trace on the lower-dimensional space:
\begin{align}
    \operatorname{tr}(\mathbf{M}^{(i)} \mathbf{B} (\mathbf{M}^{(i)})^\top) = \operatorname{tr}(\mathbf{S} \tilde{\mathbf{B}} \mathbf{S}^\top). \label{eq:simplified_trace_s}
\end{align}

\noindent Applying Lemma~\ref{lem:single_layer_base_trace} and evaluating it further gives

\begin{align}
    \mathbb{E}\!\left[
        \operatorname{tr}
        \bigl(\mathbf{M}^{(i)}\mathbf{B}(\mathbf{M}^{(i)})^\top\bigr)
    \right]
    &= \mathbb{E}\!\left[
        \operatorname{tr}(\mathbf{S} \tilde{\mathbf{B}} \mathbf{S}^\top)
    \right] = 
    \operatorname{tr}(\widetilde{\mathbf{B}})
    \\
    &=
    \operatorname{tr}\left(
        \sum_{a=1}^{d_a}\sum_{b=1}^{d_b}
        \mathbf{B}^{(a,a,b,b)}
    \right) =
    \sum_{a=1}^{d_a}\sum_{b=1}^{d_b}
    \operatorname{tr}\left(\mathbf{B}^{(a,a,b,b)}\right)
    \\
    &=
    \sum_{a=1}^{d_a}\sum_{b=1}^{d_b}
    \sum_{j=1}^{d_i}
    \mathbf{B}_{(a,j,b),(a,j,b)}=
    \operatorname{tr}(\mathbf{B}),
\end{align}
and similarly,
\begin{align}
    \operatorname{Var}\!\left(
        \operatorname{tr}
        \bigl(\mathbf{M}^{(i)}\mathbf{B}(\mathbf{M}^{(i)})^\top\bigr)
    \right)
    &=
    \operatorname{Var}\!\left(
        \operatorname{tr}(\mathbf{S}\widetilde{\mathbf{B}} \mathbf{S}^\top)
    \right) \\
    &\le
    \frac{c_i}{m}
    \bigl(\operatorname{tr}(\widetilde{\mathbf{B}})\bigr)^2 \\
    &=
    \frac{c_i}{m}
    \bigl(\operatorname{tr}(\mathbf{B})\bigr)^2.
\end{align}
\end{proof}

\noindent We now
apply bounds of single layer established in Lemma~\ref{lem:wrapped_layer_trace} successively to all $2p-1$ layers through a
composition argument to obtain the variance bound for
$\mathbf{\Pi}^{p}$.

\subsection{Proof of Theorem~\ref{thm:rts_trace_estimator} via Composition}
\label{proof_rts_variance_bounds}

The proof of Theorem~\ref{thm:rts_trace_estimator} relies on the factorization of the
\texttt{Recursive TensorSketch} matrix $\mathbf{\Pi}^p$ into a sequence of
mutually independent random sketching matrices. The argument applies
the single-layer trace moment bounds conditionally at each layer and
uses the PSD structure of the input matrix.

\begin{proof}
Let $\mathbf{A}\succeq 0$ and let $k=2p-1$. By
Lemma~\ref{lem:factorisation}, the \texttt{Recursive TensorSketch}
matrix admits the independent-layer factorization
\begin{align*}
    \mathbf{\Pi}^p
    =
    \mathbf{M}^{(k)}\mathbf{M}^{(k-1)}\cdots \mathbf{M}^{(1)}.
\end{align*}
Here, the first $p$ factors correspond to the \texttt{CountSketch}
maps at the leaf level, while the remaining $p-1$ factors correspond
to the degree-$2$ \texttt{TensorSketch} maps at the internal nodes of
the recursive tree.

\noindent Define
\begin{align*}
    \mathbf{A}_0 &:= \mathbf{A},\\
     \mathbf{A}_i &:=  \mathbf{M}^{(i)} \mathbf{A}_{i-1}( \mathbf{M}^{(i)})^\top,
    \qquad i=1,\ldots,k,
\end{align*}
and let
\begin{align*}
    X_i:=\operatorname{tr}(\mathbf{A}_i).
\end{align*}
Since $\mathbf{A}_0\succeq0$ and each matrix $\mathbf{A}_i$ is obtained
from $\mathbf{A}_{i-1}$ by multiplication with $\mathbf{M}^{(i)}$ and
its transpose, positive semidefiniteness is preserved at every step.
Therefore,
\begin{align}
    \mathbf{A}_i\succeq0
    \qquad\text{for every }i=0,\ldots,k.
    \label{eq:Ai_psd}
\end{align}

Let $\mathcal{F}_{i-1}$ represent all the random choices made in the
first $i-1$ layers. Putting Condition on this, the matrix
$\mathbf{A}_{i-1}$ is fixed and positive semidefinite, while
$\mathbf{M}^{(i)}$ remains independent and random. Therefore,
Lemma~\ref{lem:wrapped_layer_trace} gives
\begin{align}
    \mathbb{E}[X_i\mid\mathcal{F}_{i-1}]
    &=
    X_{i-1},
    \label{eq:conditional_trace_expectation}\\
    \operatorname{Var}(X_i\mid\mathcal{F}_{i-1})
    &\leq
    \frac{c_i}{m}X_{i-1}^2.
    \label{eq:conditional_trace_variance}
\end{align}
where, by the variance of \texttt{CountSketch} and 
\texttt{TensorSketch} of degree-$2$ given in
Definitions~\ref{count_sketch_def} and~\ref{def:TensorSketch of Degree Two},
respectively,
\begin{align*}
    c_i=
    \begin{cases}
        2, & i=1,\ldots,p,\\
        8=3^2-1, & i=p+1,\ldots,2p-1.
    \end{cases}
\end{align*}
Here, $2$ and $8$ are the corresponding second-moment JL constants.
    
\noindent \textbf{Expectation.}
Taking expectations in
Equation~\eqref{eq:conditional_trace_expectation} and applying the tower
property yields
\begin{align*}
    \mathbb{E}[X_i]
    &=
    \mathbb{E}[X_{i-1}].
\end{align*}
Iterating over all $k$ layers gives
\begin{align*}
    \mathbb{E}[X_k]
    =
    \mathbb{E}[X_0]
    =
    \operatorname{tr}(\mathbf{A}).
\end{align*}
Since $X_k=T(\mathbf{A})$, the trace estimator is unbiased:
\begin{align}
    \mathbb{E}[T(\mathbf{A})]
    =
    \operatorname{tr}(\mathbf{A}).
    \label{eq:rts_trace_unbiased}
\end{align}

\noindent \textbf{Variance.}
Using the conditional second-moment identity together with
Equation~\eqref{eq:conditional_trace_expectation} and
\eqref{eq:conditional_trace_variance}, we obtain
\begin{align*}
    \mathbb{E}[X_i^2\mid\mathcal{F}_{i-1}]
    &=
    \operatorname{Var}(X_i\mid\mathcal{F}_{i-1})
    +
    \left(
        \mathbb{E}[X_i\mid\mathcal{F}_{i-1}]
    \right)^2\\
    &\le
    \frac{c_i}{m}X_{i-1}^2+X_{i-1}^2\\
    &=
    \left(1+\frac{c_i}{m}\right)X_{i-1}^2.
\end{align*}
Taking expectations and iterating from $i=1$ to $i=k$ gives
\begin{align}
    \mathbb{E}[X_k^2]
    &\le
    \prod_{i=1}^{k}
    \left(1+\frac{c_i}{m}\right)
    X_0^2 \notag\\
    &=
    \prod_{i=1}^{k}
    \left(1+\frac{c_i}{m}\right)
    \bigl(\operatorname{tr}(\mathbf{A})\bigr)^2.
    \label{eq:composed_second_moment}
\end{align}

Combining Equation~\eqref{eq:rts_trace_unbiased} and
\eqref{eq:composed_second_moment}, we obtain
\begin{align}
    \operatorname{Var}(T(\mathbf{A}))
    &=
    \mathbb{E}[X_k^2]
    -
    \bigl(\mathbb{E}[X_k]\bigr)^2 \notag\\
    &\le
    \left[
        \prod_{i=1}^{k}
        \left(1+\frac{c_i}{m}\right)-1
    \right]
    \bigl(\operatorname{tr}(\mathbf{A})\bigr)^2.
    \label{eq:trace_variance_product_bound}
\end{align}

The sum of the layer constants is
\begin{align*}
    \sum_{i=1}^{k}c_i
    &=
    2p+8(p-1)\\
    &=
    10p-8.
\end{align*}
Using $1+x\leq e^x$ for $x\geq0$, we have
\begin{align*}
    \prod_{i=1}^{k}
    \left(1+\frac{c_i}{m}\right)
    &\le
    \exp\left(\frac{1}{m}\sum_{i=1}^{k}c_i\right)\\
    &=
    \exp\left(\frac{10p-8}{m}\right).
\end{align*}
Consequently, the general variance bound is
\begin{align}
    \operatorname{Var}(T(\mathbf{A}))
    &\le
    \left[
        \exp\left(\frac{10p-8}{m}\right)-1
    \right]
    \bigl(\operatorname{tr}(\mathbf{A})\bigr)^2.
    \label{eq:trace_variance_exponential_bound}
\end{align}

If $m\geq10p-8$, then
\begin{align*}
    0\leq\frac{10p-8}{m}\leq1.
\end{align*}
The inequality $e^x-1\leq x+x^2$, valid for $0\leq x\leq1$,
therefore gives
\begin{align*}
    \operatorname{Var}(T(\mathbf{A}))
    &\le
    \left[
        \frac{10p-8}{m}
        +
        \frac{(10p-8)^2}{m^2}
    \right]
    \bigl(\operatorname{tr}(\mathbf{A})\bigr)^2\\
    &\le
    \left(
        \frac{10p}{m}
        +
        \frac{100p^2}{m^2}
    \right)
    \bigl(\operatorname{tr}(\mathbf{A})\bigr)^2.
\end{align*}
This proves the claimed variance bound.
\end{proof}

% \begin{remark}[Comparison with Kronecker-Hutchinson estimators~\cite{meyer2025hutchinsonsestimatorbadkroneckertraceestimation} Variance Bounds]\label{remark_on_trace_and_frobenius_relation}
% The variance bound in Theorem~\ref{thm:rts_trace_estimator} is expressed in terms of the Frobenius norm $\|\mathbf{A}\|_F^2$, whereas prior work on Kronecker-Hutchinson estimators~\cite{meyer2025hutchinsonsestimatorbadkroneckertraceestimation} bounds the variance in terms of $(\operatorname{tr}(\mathbf{A}))^2$. 

% For PSD matrices, these quantities are related via
% \begin{align}
% (\operatorname{tr}(\mathbf{A}))^2 = \left(\sum_i \lambda_i\right)^2 
% \ge \sum_i \lambda_i^2 = \|\mathbf{A}\|_F^2,
% \end{align}
% where $\{\lambda_i\}\geq 0$ are the eigenvalues of $\mathbf{A}$. Thus, the Frobenius-norm-based bound is never worse and can be significantly tighter.
% % {\color{red} The above is true when $\mathbf{A}$ is PSD. Pls fix this and clarify on all instances of [13]}
% \end{remark}

\noindent We now analyze our trace estimator's randomness complexity. The following
lemma separately counts the random bits required for the leaf-level
\texttt{CountSketch} matrices in $\mathbf{T}^p$ and the internal
degree-$2$ \texttt{TensorSketch} matrices in $\mathbf{Q}^p$.
\begin{lemma}[Randomness Complexity of Recursive TensorSketch]
\label{lem:rts_randomness}
Let $\mathbf{\Pi}^{p} = \mathbf{Q}^{p} \mathbf{T}^{p}$ be the \texttt{Recursive TensorSketch} matrix as defined in Definition~\ref{def:Recursive_tensor_sketch_framework}. Then, the total number of random bits required to construct $\mathbf{\Pi}^{p}$ are $O\!\big(p (d + m)\log m\big)$.
Consequently, the estimator
$T(\mathbf{A}) := \operatorname{tr}\!\big(\mathbf{\Pi}^{p} \mathbf{A} (\mathbf{\Pi}^{p})^\top\big)$
can be implemented using
$O\!\big(p (d + m)\log m\big)$
random bits.
\end{lemma}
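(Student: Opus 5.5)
The approach is to count seeds rather than explicit matrix entries. By Definition~\ref{def:Recursive_tensor_sketch_framework}, $\mathbf{\Pi}^p=\mathbf{Q}^p\mathbf{T}^p$ is fully determined by a small collection of hash and sign functions, and no other randomness enters. By Lemma~\ref{lem:factorisation} there are exactly $p$ independent \texttt{CountSketch} instances $\mathbf{T}_j$ and $p-1$ independent degree-two \texttt{TensorSketch} instances $\mathbf{S}_i^\ell$. So the total randomness is the sum of the seed lengths of these $2p-1$ objects. The only thing we must supply is a concrete bound on the seed length of a $k$-wise independent family for the required $k\in\{3,4\}$.

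The single tool I plan to use is the standard polynomial construction. A $k$-wise independent function from a domain of size $N$ into a range of size $R$ is given by a uniformly random polynomial of degree $k-1$ over a field of size $\ge\max(N,R)$, reduced to the range. This needs $k\lceil\log\max(N,R)\rceil=O(k\log\max(N,R))$ random bits. Sign functions are handled the same way, reading off one bit (or the parity) of the output.

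For the leaves, each $\mathbf{T}_j$ is defined by $h:[d]\to[m]$ (3-wise) and $s:[d]\to\{\pm1\}$ (4-wise). I would bound its randomness by $O(\log d+\log m)$, which the paper's coarser accounting writes as $O(d\log m)$, for example by storing $h$ as a table, or simply as an upper bound. For the internal nodes, each $\mathbf{S}_i^\ell\in\mathbb{R}^{m\times m^2}$ has domain $[m]$ in each factor. It uses two hash functions $[m]\to[m]$ and two sign functions on $[m]$, which costs $O(\log m)$ bits with polynomial hashing, or $O(m\log m)$ with the coarser table accounting. Summing over the $p$ leaves and $p-1$ internal nodes gives $O(pd\log m)+O(pm\log m)=O(p(d+m)\log m)$. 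The second claim then follows immediately: once $\mathbf{\Pi}^p$ is fixed, $T(\mathbf{A})=\sum_{r=1}^m (\mathbf{\Pi}^{p\top}\mathbf{e}_r)^\top\mathbf{A}(\mathbf{\Pi}^{p\top}\mathbf{e}_r)$ is computed deterministically from $m$ oracle queries, so it uses no additional randomness.

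The main obstacle is stating the seed-length accounting so that it honestly gives the claimed $O(p(d+m)\log m)$ rather than something sloppier. In particular, we must make sure the range $[m]$ dominates the $\log$ factor; when $d>m$ the polynomial seed involves $\log d$. I would handle this by using the table representation as the upper-bound mechanism: a table with one $O(\log m)$-bit entry per domain point. Storing such a table with fully independent entries trivially implies $k$-wise independence. It gives exactly $d\log m$ bits per leaf and $m\log m$ bits per internal node, so the bound holds with no dependence on $\log d$. I would also note explicitly that reusing the same $2p-1$ objects across all $m$ rows is what avoids a factor of $m$ multiplying $d$.
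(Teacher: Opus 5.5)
Your proposal is correct, and its final form is exactly the paper's accounting: one $\lceil\log_2 m\rceil$-bit table entry plus one sign bit per domain point gives $d(\lceil\log_2 m\rceil+1)$ bits for each of the $p$ \texttt{CountSketch}es and $O(m\log m)$ bits for each of the $p-1$ degree-two \texttt{TensorSketch}es, which sums to $O(p(d+m)\log m)$. The $\log d$ issue you flag is not actually an obstacle: since $\log d\le d\le d\log m$ for $m\ge 2$, a polynomial-hashing seed of $O(p\log(dm))$ bits already fits inside the stated bound, though the table representation has the advantage of giving exactly uniform, fully independent values.
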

\begin{proof}
We decompose the randomness required to construct $\mathbf{\Pi}^p = \mathbf{Q}^p \mathbf{T}^p$ into two parts.

\noindent \textbf{(1) Randomness for $\mathbf{T}^p$.}
Recall that $\mathbf{T}^p = \mathbf{T}_1 \otimes \cdots \otimes \mathbf{T}_p$,
where each $\mathbf{T}_i \in \mathbb{R}^{m \times d}$ is a \texttt{CountSketch} matrix. Each $\mathbf{T}_i$ is specified by:
\begin{itemize}
    \item a hash function $h_i : [d] \to [m]$, requiring $\lceil \log_2 m \rceil$ bits per coordinate (to store the index $j\in [d]$ is mapped into which index  $j'\in [m]$),
    \item a sign function $s_i : [d] \to \{\pm 1\}$, requiring $1$ bit per coordinate.
\end{itemize}
Thus, each $\mathbf{T}_i$ requires
$d \big(\lceil \log_2 m \rceil + 1\big)$
random bits, and over all $p$ matrices,
\begin{align}
\text{Number of bits in $\mathbf{T}^p$}
=
p d \big(\lceil \log_2 m \rceil + 1\big).
\end{align}

% \noindent \textbf{(2) Randomness for $\mathbf{Q}^p$.}
% By definition,
% \begin{align*}
% \mathbf{Q}^p = \mathbf{S}^2 \cdot \mathbf{S}^4 \cdots \mathbf{S}^{p},
% \end{align*}
% where each $\mathbf{S}^\ell$ is a Kronecker product of $\ell/2$ matrices 
% $\mathbf{S}_j^\ell \in \mathbb{R}^{m \times m^2}$, each being a degree-$2$ \texttt{TensorSketch}.

% Each $\mathbf{S}_j^\ell$ is specified by:
% \begin{itemize}
%     \item a hash function from $[m]^2 \to [m]$, requiring $O(\log m)$ bits per coordinate, {\color{red} pls state it clearly that it obtained by two hash functions ...and refer the background}
%     \item a sign function, requiring $1$ bit per coordinate.
% \end{itemize}

\noindent \textbf{(2) Randomness for $\mathbf{Q}^p$.}
By definition,
\begin{align*}
\mathbf{Q}^p = \mathbf{S}^2 \cdot \mathbf{S}^4 \cdots \mathbf{S}^{p},
\end{align*}
where each $\mathbf{S}^\ell$ is a Kronecker product of $\ell/2$ matrices
$\mathbf{S}_j^\ell \in \mathbb{R}^{m \times m^2}$, each being a degree-$2$
\texttt{TensorSketch} as defined in Definition~\ref{def:TensorSketch of Degree Two}.
In particular, each $\mathbf{S}_j^\ell$ is constructed using two
$3$-wise independent hash functions and two $4$-wise independent random
sign functions, as specified in Definition~\ref{def:TensorSketch of Degree Two}.

Thus, each $\mathbf{S}_j^\ell$ requires:
\begin{itemize}
    \item two hash functions $h_1,h_2:[m]\to[m]$, requiring
    $O(\log m)$ bits per coordinate for each hash function, and
    \item two sign functions $\sigma_1,\sigma_2:[m]\to\{-1,+1\}$,
    requiring $1$ bit per coordinate for each sign function.
\end{itemize}

\noindent Since each $\mathbf{S}_j^\ell$ acts on $m^2$ coordinates but is implemented implicitly via hash functions, its description requires $O(m(\log m + 1))$ random bits. At level $\ell$, there are $\ell/2$ such matrices, hence
\begin{align*}
\text{Number of bits in $\mathbf{S}^\ell$}
=
O\!\big(\ell \, m (\log m + 1)\big).
\end{align*}

Summing over levels $\ell = 2,4,\dots,p$,
\begin{align}
\text{Number of bits in $\mathbf{Q}^p$}
&=
\sum_{\ell} O\!\big(\ell \, m (\log m + 1)\big) \notag
\\
&=
O\!\big(p m (\log m + 1)\big).
\end{align}

\noindent \textbf{(3) Total randomness.}
Combining both parts, we have
\begin{align}
O\!\big(p (d + m)\log m\big).
\end{align}

This proves the stated bound. The final asymptotic form follows immediately.
\end{proof}

\noindent We conclude this section analysis by deriving a concentration guarantee for the
estimator. The following theorem combines the unbiasedness and variance
bound from Theorem~\ref{thm:rts_trace_estimator} with Chebyshev's
inequality to obtain a relative failure-probability bound and a
sufficient condition on the sketch dimension $m$ for an
$(\varepsilon,\delta)$-approximation.
\begin{restatable}{theorem}{CAofTheRTS}
[Concentration Analysis of RTS Trace Estimator]
\label{concentration_analysis_RTS}
Let $\mathbf{A}\in\mathbb R^{d^p\times d^p}$ be a fixed nonzero
symmetric positive semidefinite matrix and let
$T(\mathbf{A})$ be the trace estimator defined in
Definition~\ref{def:rts_trace_estimator}. Then, for every
$\varepsilon>0$,
\begin{align}
\Pr\!\left[
    \left|T(\mathbf{A})-\operatorname{tr}(\mathbf{A})\right|
    \ge
    \varepsilon\operatorname{tr}(\mathbf{A})
\right]
\le
\frac{1}{\varepsilon^2}
\left(
    \frac{10p}{m}
    +
    \frac{100p^2}{m^2}
\right).
\label{eq:rts_relative_concentration}
\end{align}
Moreover, for $0<\varepsilon\le1$ and $\delta\in(0,1)$,
$T(\mathbf{A})$ is an $(\varepsilon,\delta)$-approximation whenever
\begin{align}
    m
    \ge
    \frac{20p}{\varepsilon^2\delta}.
\label{eq:rts_relative_sample_bound}
\end{align}
\end{restatable}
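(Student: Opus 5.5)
The plan is to combine the unbiasedness and variance bound of Theorem~\ref{thm:rts_trace_estimator} with Chebyshev's inequality. Since $\mathbf{A}\succeq 0$ is nonzero, $\operatorname{tr}(\mathbf{A})>0$, so the event $|T(\mathbf{A})-\operatorname{tr}(\mathbf{A})|\ge\varepsilon\operatorname{tr}(\mathbf{A})$ has a strictly positive threshold. Because $\mathbb{E}[T(\mathbf{A})]=\operatorname{tr}(\mathbf{A})$, Chebyshev gives
\begin{align*}
\Pr\!\left[|T(\mathbf{A})-\operatorname{tr}(\mathbf{A})|\ge\varepsilon\operatorname{tr}(\mathbf{A})\right]
\le\frac{\operatorname{Var}(T(\mathbf{A}))}{\varepsilon^2(\operatorname{tr}(\mathbf{A}))^2}.
\end{align*}
Substituting the variance bound of Theorem~\ref{thm:rts_trace_estimator} cancels the factor $(\operatorname{tr}(\mathbf{A}))^2$ and yields \eqref{eq:rts_relative_concentration}.

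The main obstacle is one of scope. The polynomial variance bound $\frac{10p}{m}+\frac{100p^2}{m^2}$ was derived in the proof of Theorem~\ref{thm:rts_trace_estimator} only under $m\ge 10p-8$, via $e^x-1\le x+x^2$. I will handle this in one of two ways.
\begin{itemize}
\item Keep the general bound \eqref{eq:trace_variance_exponential_bound}.
\item Note that when $m<10p-8$, the right-hand side of \eqref{eq:rts_relative_concentration} is at least $\frac{1}{\varepsilon^2}\cdot\frac{10p}{m}>\frac{1}{\varepsilon^2}>1$ whenever $\varepsilon\le 1$, so the inequality is trivial in that regime.
\end{itemize}
For $\varepsilon>1$ and small $m$ I would simply cite the theorem as stated, since Theorem~\ref{thm:rts_trace_estimator} asserts the bound unconditionally. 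I will remark that this is the statement being relied on.

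For the sample bound, assume $0<\varepsilon\le1$, $\delta\in(0,1)$ and $m\ge\frac{20p}{\varepsilon^2\delta}$. Then $\frac{10p}{m}\le\frac{\varepsilon^2\delta}{2}$. Moreover,
\begin{align*}
\frac{100p^2}{m^2}=\left(\frac{10p}{m}\right)^2\le\frac{\varepsilon^4\delta^2}{4}\le\frac{\varepsilon^2\delta}{4},
\end{align*}
using $\varepsilon\le1$ and $\delta<1$. The sum is therefore at most $\frac{3}{4}\varepsilon^2\delta$. Dividing by $\varepsilon^2$ bounds the failure probability by $\frac{3}{4}\delta<\delta$, so $T(\mathbf{A})$ is an $(\varepsilon,\delta)$-approximation.

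This $m$ also automatically satisfies $m\ge 20p>10p-8$, which places us in the regime where the quadratic variance bound was proven. This resolves the scope issue for the second claim without any appeal to the exponential form.
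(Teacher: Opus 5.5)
Your proof is correct and follows the paper's argument: Chebyshev's inequality with the unbiasedness and variance bound of Theorem~\ref{thm:rts_trace_estimator}. The only difference is in the sample bound, which you check by bounding the two terms separately (failure probability at most $\tfrac34\delta$), whereas the paper solves $\varepsilon^2\delta m^2-10pm-100p^2\ge 0$ exactly and then uses $1+\sqrt{1+4\varepsilon^2\delta}<4$.

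Your scope remark is a real issue that the paper passes over, and the corner you defer to the unconditional statement ($m<10p-8$, $\varepsilon>1$) is not merely unproved but can fail. Take $m=1$, $d=2$, $p=16$, and $\mathbf{A}=\mathbf{u}\mathbf{u}^\top$ with $\mathbf{u}$ the normalized all-ones vector. Then $\Pi^{p}$ is, up to sign, the transpose of a Kronecker product of independent Rademacher vectors. So $T(\mathbf{A})$ equals $2^{p}$ with probability $2^{-p}$ and $0$ otherwise. This gives $\operatorname{Var}(T(\mathbf{A}))=2^{p}-1>10p+100p^{2}$, and at $\varepsilon=2^{p}-1$ the tail probability $2^{-p}$ exceeds $(10p+100p^{2})/\varepsilon^{2}$. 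Hence the first claim needs the hypothesis $m\ge 10p-8$, and your $(\varepsilon,\delta)$ argument correctly checks that this hypothesis holds there.
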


\begin{proof}
From Theorem~\ref{thm:rts_trace_estimator}, we have
\begin{align}
    \mathbb E[T(\mathbf{A})]
    &=
    \operatorname{tr}(\mathbf{A}),
    \\
    \operatorname{Var}(T(\mathbf{A}))
    &\le
    \left(
        \frac{10p}{m}
        +
        \frac{100p^2}{m^2}
    \right)
    \bigl(\operatorname{tr}(\mathbf{A})\bigr)^2.
\end{align}
Since $\mathbf{A}$ is nonzero and positive semidefinite,
$\operatorname{tr}(\mathbf{A})>0$. Therefore, Chebyshev's inequality
gives
\begin{align}
\Pr\!\left[
    \left|T(\mathbf{A})-\operatorname{tr}(\mathbf{A})\right|
    \ge
    \varepsilon\operatorname{tr}(\mathbf{A})
\right]
&\le
\frac{\operatorname{Var}(T(\mathbf{A}))}
     {\varepsilon^2
      \bigl(\operatorname{tr}(\mathbf{A})\bigr)^2}
\nonumber\le
\frac{1}{\varepsilon^2}
\left(
    \frac{10p}{m}
    +
    \frac{100p^2}{m^2}
\right).
\end{align}

\noindent To make the failure probability at most $\delta$, it is sufficient
that
\begin{align}
    \frac{1}{\varepsilon^2}
    \left(
        \frac{10p}{m}
        +
        \frac{100p^2}{m^2}
    \right)
    \le
    \delta.
\end{align}
Multiplying both sides by $m^2\varepsilon^2$ gives
\begin{align}
    \varepsilon^2\delta\,m^2
    -
    10pm
    -
    100p^2
    \ge0.
\end{align}
Solving this quadratic inequality for $m$ yields
\begin{align}
    m
    \ge
    \frac{5p}{\varepsilon^2\delta}
    \left(
        1+
        \sqrt{1+4\varepsilon^2\delta}
    \right).
\label{eq:rts_exact_relative_sample_bound}
\end{align}
Since $0<\varepsilon\le1$ and $\delta\in(0,1)$,
\(
    1+\sqrt{1+4\varepsilon^2\delta}
    \le
    1+\sqrt5
    <4.
\)
Hence,
\[
    \frac{5p}{\varepsilon^2\delta}
    \left(
        1+\sqrt{1+4\varepsilon^2\delta}
    \right)
    <
    \frac{20p}{\varepsilon^2\delta}.
\]
Therefore, the condition
\[
    m
    \ge
    \frac{20p}{\varepsilon^2\delta}
\]
is sufficient to make the failure probability at most $\delta$.
\end{proof}

%% file: complex_rts_analysis.tex
While the \texttt{Recursive TensorSketch} yields favourable variance bounds in the real-valued setting, further improvements can be obtained by considering complex-valued sketching constructions. As observed in prior work~\cite{meyer2025hutchinsonsestimatorbadkroneckertraceestimation}, complex random projections often exhibit improved concentration properties and reduced variance compared to their real-valued counterparts. Motivated by this, we extend the \texttt{Recursive TensorSketch} framework to the complex domain and analyze the resulting trace estimator in the section below.

\section{Trace Estimator using Complex Recursive TensorSketch}\label{complex_analysis_section}
\noindent In this section,
Definition~\ref{def:complex_rts_trace_estimator} introduces the Complex
\texttt{Recursive TensorSketch} trace estimator.
Using the independent-layer factorization from
Lemma~\ref{lem:factorisation}, Theorem~\ref{thm:complex_rts_trace_estimator}
establishes the unbiasedness and variance bound of the estimator.
Lemma~\ref{lem:complex_rts_randomness} then analyzes the number of
random bits required to construct the complex sketch. Finally,
Theorem~\ref{concentration_analysis_complex_RTS} derives the
corresponding concentration guarantee.

\begin{definition}[Complex Recursive TensorSketch (RTS) Trace Estimator]
\label{def:complex_rts_trace_estimator}
Let $\mathbf{\Pi}^{p}\in\mathbb{C}^{m\times d^p}$ denote the Complex
\texttt{Recursive TensorSketch} matrix constructed as in
Definition~\ref{def:Recursive_tensor_sketch_framework}, so that
$\mathbf{\Pi}^p=\mathbf{Q}^p\mathbf{T}^p$, except that the real-valued sign functions used in the
\texttt{CountSketch} and degree-$2$ \texttt{TensorSketch} matrices are
replaced by independent hash functions whose values are uniformly
distributed over the fourth roots of unity
\(
    \{1,\mathrm{i},-1,-\mathrm{i}\}.
\)
For a PSD matrix
$\mathbf{A}\in\mathbb{R}^{d^p\times d^p}$, we define
\begin{align}
    T_C(\mathbf{A})
    :=
    \operatorname{tr}\!\left(
        \mathbf{\Pi}^p\mathbf{A}(\mathbf{\Pi}^p)^*
    \right),
\end{align}
where $(\cdot)^*$ denotes the conjugate transpose.
\end{definition}

\begin{restatable}{theorem}{rtstraceestimatorcomplex}
[Unbiasedness and Variance of Complex \texttt{Recursive TensorSketch}
Trace Estimator]
\label{thm:complex_rts_trace_estimator}
Let $T_C(\mathbf{A})$ be the estimator defined in
Definition~\ref{def:complex_rts_trace_estimator}. Then
\begin{align}
    \mathbb{E}[T_C(\mathbf{A})]
    =
    \operatorname{tr}(\mathbf{A}), \quad \text{and} \quad \operatorname{Var}(T_C(\mathbf{A}))
    \leq
    \left(
        \frac{4p}{m}
        +
        \frac{16p^2}{m^2}
    \right)
    \bigl(\operatorname{tr}(\mathbf{A})\bigr)^2.
\end{align}
\end{restatable}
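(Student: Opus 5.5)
The plan is to reuse the layer-by-layer composition argument from the proof of Theorem~\ref{thm:rts_trace_estimator}, replacing transposes with conjugate transposes and recomputing the per-layer variance constants for complex fourth-root-of-unity signs.

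\textbf{Step 1: factorise the complex sketch.} By Lemma~\ref{lem:factorisation}, which depends only on the tree structure and not on the sign distribution, we have
\[
\mathbf{\Pi}^p=\mathbf{M}^{(k)}\cdots\mathbf{M}^{(1)},\qquad k=2p-1.
\]
Each factor has the form $\mathbf{I}\otimes\mathbf{S}\otimes\mathbf{I}$, and the base sketches $\mathbf{S}$ are independent complex \texttt{CountSketch} or degree-$2$ \texttt{TensorSketch} matrices.

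\textbf{Step 2: define the recursion.} Set $\mathbf{A}_0=\mathbf{A}$ and $\mathbf{A}_i=\mathbf{M}^{(i)}\mathbf{A}_{i-1}(\mathbf{M}^{(i)})^*$. Each $\mathbf{A}_i$ is Hermitian PSD, so $X_i=\operatorname{tr}(\mathbf{A}_i)$ is real and nonnegative.

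\textbf{Step 3: complex single-layer lemmas.} I would prove analogues of Lemmas~\ref{lem:single_layer_base_trace} and~\ref{lem:wrapped_layer_trace} for Hermitian PSD $\widetilde{\mathbf{B}}$ and complex $\mathbf{S}$. The block-trace reduction in Lemma~\ref{lem:wrapped_layer_trace} goes through verbatim with $^*$ in place of $^\top$. In the base lemma, write $\widetilde{\mathbf{B}}=\sum_r\lambda_r\mathbf{u}_r\mathbf{u}_r^*$ with orthonormal $\mathbf{u}_r\in\mathbb{C}^{d_i}$. Then
\[
\operatorname{tr}(\mathbf{S}\widetilde{\mathbf{B}}\mathbf{S}^*)=\sum_r\lambda_r\|\mathbf{S}\mathbf{u}_r\|_2^2 .
\]
Minkowski's inequality again reduces the variance bound to a bound on $\operatorname{Var}(\|\mathbf{S}\mathbf{u}\|_2^2)$ for a unit complex vector $\mathbf{u}$. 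This yields $\mathbb{E}[X_i\mid\mathcal{F}_{i-1}]=X_{i-1}$ and $\operatorname{Var}(X_i\mid\mathcal{F}_{i-1})\le (c_i/m)X_{i-1}^2$.

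\textbf{Step 4: compute the constants.} I aim for $c_i=1$ for the leaf \texttt{CountSketch} layers and $c_i=3$ for the internal \texttt{TensorSketch} layers. Then
\[
\sum_i c_i=p+3(p-1)=4p-3 .
\]
The rest follows exactly as in Theorem~\ref{thm:rts_trace_estimator}. The tower property gives unbiasedness. Next,
\[
\mathbb{E}[X_k^2]\le\prod_i(1+c_i/m)\,\operatorname{tr}(\mathbf{A})^2\le e^{(4p-3)/m}\operatorname{tr}(\mathbf{A})^2 ,
\]
and $e^x-1\le x+x^2$ for $m\ge 4p$ gives the bound $\bigl(\frac{4p}{m}+\frac{16p^2}{m^2}\bigr)\operatorname{tr}(\mathbf{A})^2$. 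If $m<4p$, I would instead keep the bound $\bigl(e^{(4p-3)/m}-1\bigr)\operatorname{tr}(\mathbf{A})^2$, mirroring the regime caveat implicit in the real-valued proof.

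\textbf{Main obstacle: the complex fourth-moment computation in Step 4.} For complex \texttt{CountSketch}, expand
\[
\|\mathbf{S}\mathbf{u}\|_2^2=\sum_j\Bigl|\sum_{h(a)=j}s(a)u_a\Bigr|^2
=\sum_{a,b}\mathds{1}[h(a)=h(b)]\,s(a)\overline{s(b)}\,u_a\overline{u_b}.
\]
Since $\mathbb{E}[s]=0$, $\mathbb{E}[s^2]=0$ and $|s|=1$, the only surviving pairings in the second moment are the diagonal ones and $(a,b)=(c,d)$ swapped as $s(a)\overline{s(b)}\,s(b)\overline{s(a)}$. The pairing $s(a)\overline{s(b)}\,s(a)\overline{s(b)}$, which contributes in the real case, now vanishes because $\mathbb{E}[s^2]=0$. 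This removes the factor $2$ and should give variance at most $\|\mathbf{u}\|_2^4/m$ under $4$-wise independence. For degree-$2$ \texttt{TensorSketch}, the same cancellation applied to the product signs $\sigma_1\sigma_2$ should reduce the real constant $8=3^2-1$ to $2^2-1=3$. I expect this to be the delicate part: the surviving pairings must be tracked per coordinate, and only $3$-wise independence of $h_1,h_2$ is available. I would first verify the case $\mathbf{u}=\mathbf{x}\otimes\mathbf{y}$ and then extend to general unit $\mathbf{u}\in\mathbb{C}^{m^2}$ via the entrywise expansion.
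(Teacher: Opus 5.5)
Your proposal is correct and follows the paper's proof essentially step for step. Both use the same $(2p-1)$-layer factorisation and the conditional recursion on $X_i=\operatorname{tr}(\mathbf{A}_i)$. Both also use the constants $c_i=1$ for complex \texttt{CountSketch} and $c_i=3$ for degree-$2$ complex \texttt{TensorSketch}, summing to $4p-3$, and close with $e^x-1\le x+x^2$. Your two caveats are well placed. The per-layer variance bounds are needed for arbitrary complex unit vectors (the eigenvectors of $\mathbf{A}_{i-1}$), not only for the real vectors and tensor powers $\mathbf{x}^{\otimes 2}$ treated in the appendix. And the final step needs a regime such as $m\ge 4p-3$, which the paper also relies on without stating it in the theorem.
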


\begin{proof}
Let $k=2p-1$. By Lemma~\ref{lem:factorisation}, the Complex
\texttt{Recursive TensorSketch} matrix admits the independent-layer
factorization
\begin{align*}
    \mathbf{\Pi}^p
    =
    \mathbf{M}^{(k)}
    \mathbf{M}^{(k-1)}
    \cdots
    \mathbf{M}^{(1)}.
\end{align*}
Each factor is a Kronecker-wrapped sketching matrix of the form
\begin{align*}
    \mathbf{M}^{(i)}
    =
    \mathbf{I}_{d_a^{(i)}}
    \otimes
    \mathbf{K}^{(i)}
    \otimes
    \mathbf{I}_{d_b^{(i)}},
\end{align*}
where the identity matrices act on the tensor components that remain
unchanged and
\begin{align*}
    \mathbf{K}^{(i)}
    =
    \begin{cases}
        \mathbf{T}_i\in\mathbb{C}^{m\times d},
        & i=1,\ldots,p,\\[0.3em]
        \mathbf{S}_{j_i}^{\ell_i}\in\mathbb{C}^{m\times m^2},
        & i=p+1,\ldots,2p-1.
    \end{cases}
\end{align*}
Here, $\mathbf{T}_i$ is a Complex \texttt{CountSketch} matrix given in Appendix~\ref{app-real} acting
at the $i$th leaf, whereas $\mathbf{S}_{j_i}^{\ell_i}$ is a degree-$2$
Complex \texttt{TensorSketch} matrix given in Appendix~\ref{app-complex} acting at an internal node. In
particular, $\mathbf{M}^{(k)}$ contains the degree-$2$ Complex
\texttt{TensorSketch} transformation at the root node. The random
choices used in the $2p-1$ factors are mutually independent.

The proof of Lemma~\ref{lem:factorisation} depends only on the recursive
Kronecker structure and matrix multiplication. It therefore applies
over $\mathbb{C}$ after replacing the real sign functions by random hash function drawn from fourth root of unity i.i.d.

Define
\begin{align*}
    \mathbf{A}_0
    &:=
    \mathbf{A},\\
    \mathbf{A}_i
    &:=
    \mathbf{M}^{(i)}
    \mathbf{A}_{i-1}
    (\mathbf{M}^{(i)})^*,
    \qquad i=1,\ldots,k,
\end{align*}
and let
\begin{align*}
    X_i
    :=
    \operatorname{tr}(\mathbf{A}_i).
\end{align*}
Since $\mathbf{A}_0\succeq0$ and each $\mathbf{A}_i$ is obtained from
$\mathbf{A}_{i-1}$ by multiplication with $\mathbf{M}^{(i)}$ and its
conjugate transpose, positive semidefiniteness is preserved at every
step. Therefore,
\begin{align}
    \mathbf{A}_i\succeq0
    \qquad\text{for every }i=0,\ldots,k.
    \label{eq:Ai_psd_complex}
\end{align}
Consequently, each $X_i$ is real and nonnegative.

Let $\mathcal{F}_{i-1}$ represent all the random choices made in the
first $i-1$ layers. Conditional on this information,
$\mathbf{A}_{i-1}$ is fixed and positive semidefinite, while
$\mathbf{M}^{(i)}$ remains independent and random. The proof of
Lemma~\ref{lem:wrapped_layer_trace} applies over $\mathbb{C}$ after
replacing the transpose by the conjugate transpose. Hence,
\begin{align}
    \mathbb{E}[X_i\mid\mathcal{F}_{i-1}]
    &=
    X_{i-1},
    \label{eq:conditional_trace_expectation_complex}\\
    \operatorname{Var}(X_i\mid\mathcal{F}_{i-1})
    &\leq
    \frac{c_i}{m}X_{i-1}^2,
    \label{eq:conditional_trace_variance_complex}
\end{align}
where
\begin{align*}
    c_i
    =
    \begin{cases}
        1,
        & i=1,\ldots,p,\\[0.3em]
        3,
        & i=p+1,\ldots,2p-1.
    \end{cases}
\end{align*}
The constant $1$ is the second-moment JL constant for Complex
\texttt{CountSketch}, as established in
Theorem~\ref{thm:complex_countsketch_unbiased_variance} of Appendix~\ref{app-real}. The constant
$3$ is the corresponding constant for degree-$2$ Complex
\texttt{TensorSketch}, obtained from
Theorem~\ref{thm:complex_tensorsketch_degree_p} of Appendix~\ref{app-complex} by setting the degree
equal to $2$.

\noindent\textbf{Expectation.}
Taking expectations in
Equation~\eqref{eq:conditional_trace_expectation_complex} and applying
the tower property gives
\begin{align*}
    \mathbb{E}[X_i]
    =
    \mathbb{E}[X_{i-1}].
\end{align*}
Iterating over all $k$ layers yields
\begin{align*}
    \mathbb{E}[X_k]
    =
    \mathbb{E}[X_0]
    =
    \operatorname{tr}(\mathbf{A}).
\end{align*}
Since $X_k=T_C(\mathbf{A})$, it follows that
\begin{align}
    \mathbb{E}[T_C(\mathbf{A})]
    =
    \operatorname{tr}(\mathbf{A}).
    \label{eq:complex_rts_unbiased}
\end{align}

\noindent\textbf{Variance.}
Using the conditional second-moment identity together with
Equations~\eqref{eq:conditional_trace_expectation_complex} and
\eqref{eq:conditional_trace_variance_complex}, we obtain
\begin{align*}
    \mathbb{E}[X_i^2\mid\mathcal{F}_{i-1}]
    &=
    \operatorname{Var}(X_i\mid\mathcal{F}_{i-1})
    +
    \left(
        \mathbb{E}[X_i\mid\mathcal{F}_{i-1}]
    \right)^2\\
    &\leq
    \frac{c_i}{m}X_{i-1}^2
    +
    X_{i-1}^2\\
    &=
    \left(
        1+\frac{c_i}{m}
    \right)X_{i-1}^2.
\end{align*}
Taking expectations and iterating gives
\begin{align}
    \mathbb{E}[X_k^2]
    &\leq
    \prod_{i=1}^{k}
    \left(
        1+\frac{c_i}{m}
    \right)
    X_0^2 \notag\\
    &=
    \prod_{i=1}^{k}
    \left(
        1+\frac{c_i}{m}
    \right)
    \bigl(\operatorname{tr}(\mathbf{A})\bigr)^2.
    \label{eq:complex_composed_second_moment}
\end{align}

Combining Equations~\eqref{eq:complex_rts_unbiased} and
\eqref{eq:complex_composed_second_moment}, we obtain
\begin{align}
    \operatorname{Var}(T_C(\mathbf{A}))
    &=
    \mathbb{E}[X_k^2]
    -
    \bigl(\mathbb{E}[X_k]\bigr)^2 \notag\\
    &\leq
    \left[
        \prod_{i=1}^{k}
        \left(
            1+\frac{c_i}{m}
        \right)
        -1
    \right]
    \bigl(\operatorname{tr}(\mathbf{A})\bigr)^2.
    \label{eq:complex_variance_product}
\end{align}

Using $e^x-1\leq x+x^2$ for $0\leq x\leq1$, we obtain
\begin{align*}
    \operatorname{Var}(T_C(\mathbf{A}))
    &\leq
    \left[
        \frac{4p-3}{m}
        +
        \frac{(4p-3)^2}{m^2}
    \right]
    \bigl(\operatorname{tr}(\mathbf{A})\bigr)^2\\
    &\leq
    \left(
        \frac{4p}{m}
        +
        \frac{16p^2}{m^2}
    \right)
    \bigl(\operatorname{tr}(\mathbf{A})\bigr)^2.
\end{align*}
This proves the stated unbiasedness and variance bounds.
\end{proof}

\begin{lemma}[Randomness Complexity of Complex Recursive TensorSketch]
\label{lem:complex_rts_randomness}
Let $\mathbf{\Pi}^{p} = \mathbf{Q}^{p} \mathbf{T}^{p}$ be the complex \texttt{Recursive TensorSketch} matrix. 
Then, the total number of random bits required to construct $\mathbf{\Pi}^{p}$ is 
$O\!\big(p (d + m)\log m\big)$. Consequently, the estimator
\(
T_C(\mathbf{A}) := \operatorname{tr}\!\big(\mathbf{\Pi}^{p} \mathbf{A} (\mathbf{\Pi}^{p})^*\big)
\)
can be implemented using 
$O\!\big(p (d + m)\log m\big)$ random bits.
\end{lemma}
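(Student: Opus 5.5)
The proof mirrors that of Lemma~\ref{lem:rts_randomness}. The only change in Definition~\ref{def:complex_rts_trace_estimator} is that each $\{\pm1\}$-valued sign function is replaced by a function valued uniformly in the fourth roots of unity $\{1,\mathrm{i},-1,-\mathrm{i}\}$. The hash functions $h:[d]\to[m]$ (for \texttt{CountSketch}) and $h_1,h_2:[m]\to[m]$ (for degree-$2$ \texttt{TensorSketch}) are unchanged. So I would split the count into two parts, the leaf-level matrices $\mathbf{T}^p=\mathbf{T}_1\otimes\cdots\otimes\mathbf{T}_p$ and the internal matrices $\mathbf{Q}^p=\mathbf{S}^2\cdots\mathbf{S}^p$, and bound the bits for each.

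\textbf{Leaf level.} Each complex \texttt{CountSketch} $\mathbf{T}_i\in\mathbb{C}^{m\times d}$ is described by a hash $h_i:[d]\to[m]$, costing $\lceil\log_2 m\rceil$ bits per coordinate. It also needs a root-of-unity function $s_i:[d]\to\{1,\mathrm{i},-1,-\mathrm{i}\}$. A uniform fourth root of unity is exactly two uniform bits, so this costs $2$ bits per coordinate. Each $\mathbf{T}_i$ therefore uses $d(\lceil\log_2 m\rceil+2)$ bits, and the $p$ leaves together use $O(pd\log m)$.

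\textbf{Internal nodes.} Each $\mathbf{S}^\ell_j\in\mathbb{C}^{m\times m^2}$ is specified by two hashes $[m]\to[m]$ and two root-of-unity functions on $[m]$. This is $O(m(\log m+2))=O(m\log m)$ bits, even though the matrix acts on $m^2$ coordinates. Level $\ell$ contains $\ell/2$ such matrices. Summing over $\ell=2,4,\dots,p$ gives the geometric sum $\sum_\ell \ell/2=p-1$, i.e.\ $O(pm\log m)$ bits. Equivalently, by Lemma~\ref{lem:factorisation} there are exactly $p-1$ internal factors.

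Adding the two parts gives $O(p(d+m)\log m)$ bits to construct $\mathbf{\Pi}^p$. Once $\mathbf{\Pi}^p$ is fixed, $T_C(\mathbf{A})$ is a deterministic function of it and of the oracle responses, so the estimator needs no further randomness.

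The only delicate point is the independence requirement. If one insists on $3$-/$4$-wise independent families, as in Definitions~\ref{count_sketch_def} and~\ref{def:TensorSketch of Degree Two}, rather than fully random tables, the count must cover the seeds of those families. A $k$-wise independent family over a domain of size at most $\max(d,m^2)$ needs only $O(k\log(dm))$ seed bits. The root-of-unity function can be obtained from two independent $4$-wise independent sign functions, since $\{1,\mathrm{i},-1,-\mathrm{i}\}$ is in bijection with $\{\pm1\}^2$. Under this accounting the seed cost is dominated by the table-based count above, so the stated bound holds either way. I would follow the paper's convention of counting per-coordinate table bits and note this remark only briefly.
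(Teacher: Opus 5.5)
Your proof is correct and takes the same route as the paper: reuse the count from Lemma~\ref{lem:rts_randomness} and observe that complex-valued sign functions cost only a constant factor more randomness per coordinate. Your explicit accounting (two bits per uniform fourth root of unity, unchanged hash costs, exactly $p-1$ internal \texttt{TensorSketch} factors) is a more precise version of the paper's ``$a+\mathrm{i}b$'' constant-factor remark, and your note on seeded $k$-wise independent families is a valid addition the paper does not make.
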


\begin{proof}
The proof follows along the structure as in Lemma~\ref{lem:rts_randomness}. 
In the complex setting, each random variable can be expressed in the form $a + ib$, 
where $a$ and $b$ are real-valued random variables. Thus, compared to the real case, 
the construction involves at most a constant factor increase in the number of underlying 
random variables.
% \noindent Since we are working with asymptotic bounds, this constant factor does not affect the overall complexity. 
Therefore, the total number of random bits required remains 
$O\!\big(p (d + m)\log m\big)$.
\end{proof}

% \begin{restatable}{theorem}{CAofTheCRTS}[Concentration Analysis of Complex RTS Trace Estimator] 
% % \begin{theorem}[Concentration Analysis of Complex RTS Trace Estimator]
% \label{concentration_analysis_complex_RTS}
% Let $\mathbf{A} \in \mathbb{R}^{d^p \times d^p}$ be a fixed matrix, and let 
% $T_C(\mathbf{A})$ denote the trace estimator defined in Definition~\ref{def:complex_rts_trace_estimator}. 
% Then for any $\varepsilon > 0$ and $\delta \in (0,1)$,
% \begin{equation}
% \Pr\!\left[\, |T_C(\mathbf{A}) - \operatorname{tr}(\mathbf{A})| \ge \varepsilon \|\mathbf{A}\|_F \,\right]
% \;\le\; \frac{p}{m \varepsilon^2}.
% \end{equation}
% \end{restatable}

% \begin{proof}
%     From Theorem~\ref{thm:complex_rts_trace_estimator}, we have
% \[
% \mathbb{E}[T_C(\mathbf{A})] = \operatorname{tr}(\mathbf{A}), \quad \text{and} \quad 
% \operatorname{Var}(T_C(\mathbf{A})) \le \left(\frac{4p}{m} + \frac{16p^2}{m^2}\right)\bigl(\operatorname{tr}(\mathbf{A})\bigr)^2.
% \]

% Applying Chebyshev's inequality,
% \[
% \Pr\!\left[\, |T_C(\mathbf{A}) - \mathbb{E}[T_C(\mathbf{A})]| \ge \varepsilon \|\mathbf{A}\|_F \,\right]
% \;\le\; \frac{\operatorname{Var}(T_C(\mathbf{A}))}{\varepsilon^2 \|\mathbf{A}\|_F^2}.
% \]

% \noindent Substituting the variance bound from Theorem~\ref{thm:complex_rts_trace_estimator} gives
% \[
% \Pr\!\left[\, |T_C(\mathbf{A}) - \operatorname{tr}(\mathbf{A})| \ge \varepsilon \|\mathbf{A}\|_F \,\right]
% \;\le\; \frac{p}{m \varepsilon^2}.
% \]

% \noindent Setting the right-hand side to be at most $\delta$ yields
% \(
% m = O\!\left(\frac{p}{\varepsilon^2 \delta}\right).
% \)
% \end{proof}

\begin{restatable}{theorem}{CAofTheCRTS}
[Concentration Analysis of Complex RTS Trace Estimator]
\label{concentration_analysis_complex_RTS}
Let $\mathbf{A}\in\mathbb R^{d^p\times d^p}$ be a fixed nonzero
symmetric positive semidefinite matrix and let
$T_C(\mathbf{A})$ be the trace estimator defined in
Definition~\ref{def:complex_rts_trace_estimator}. Then, for every
$\varepsilon>0$,
\begin{align}
\Pr\!\left[
    \left|T_C(\mathbf{A})-\operatorname{tr}(\mathbf{A})\right|
    \ge
    \varepsilon\operatorname{tr}(\mathbf{A})
\right]
\leq
\frac{1}{\varepsilon^2}
\left(
    \frac{4p}{m}
    +
    \frac{16p^2}{m^2}
\right).
\label{eq:complex_rts_relative_concentration}
\end{align}
Moreover, for $0<\varepsilon\leq1$ and $\delta\in(0,1)$,
$T_C(\mathbf{A})$ is an $(\varepsilon,\delta)$-approximation whenever
\begin{align}
    m
    \geq
    \frac{8p}{\varepsilon^2\delta}.
\label{eq:complex_rts_relative_sample_bound}
\end{align}
\end{restatable}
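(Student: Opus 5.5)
The argument mirrors the proof of Theorem~\ref{concentration_analysis_RTS}, with Theorem~\ref{thm:complex_rts_trace_estimator} supplying the variance. By Theorem~\ref{thm:complex_rts_trace_estimator}, $\mathbb{E}[T_C(\mathbf{A})]=\operatorname{tr}(\mathbf{A})$ and
\[
\operatorname{Var}(T_C(\mathbf{A}))\le\Bigl(\frac{4p}{m}+\frac{16p^2}{m^2}\Bigr)\bigl(\operatorname{tr}(\mathbf{A})\bigr)^2 .
\]

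\textbf{Reality of the estimator.} Chebyshev's inequality needs a real random variable. The proof of Theorem~\ref{thm:complex_rts_trace_estimator} shows that every $\mathbf{A}_i\succeq0$. Hence $T_C(\mathbf{A})=\operatorname{tr}(\mathbf{\Pi}^p\mathbf{A}(\mathbf{\Pi}^p)^*)$ is real and nonnegative, so the variance above is the usual real variance.

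\textbf{Tail bound.} Since $\mathbf{A}$ is nonzero and PSD, $\operatorname{tr}(\mathbf{A})>0$. Applying Chebyshev at deviation $\varepsilon\operatorname{tr}(\mathbf{A})$ and dividing the variance bound by $\varepsilon^2(\operatorname{tr}(\mathbf{A}))^2$ gives Equation~\eqref{eq:complex_rts_relative_concentration} immediately.

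\textbf{Sample bound.} It suffices to have $\frac{1}{\varepsilon^2}\bigl(\frac{4p}{m}+\frac{16p^2}{m^2}\bigr)\le\delta$, i.e.
\[
\varepsilon^2\delta\,m^2-4pm-16p^2\ge0 .
\]
Taking the positive root, this holds whenever
\[
m\ge\frac{2p}{\varepsilon^2\delta}\Bigl(1+\sqrt{1+4\varepsilon^2\delta}\Bigr).
\]
For $0<\varepsilon\le1$ and $\delta\in(0,1)$ we have $1+\sqrt{1+4\varepsilon^2\delta}<1+\sqrt5<4$, so the root is below $\frac{8p}{\varepsilon^2\delta}$. Therefore $m\ge\frac{8p}{\varepsilon^2\delta}$ is sufficient, which is Equation~\eqref{eq:complex_rts_relative_sample_bound}.

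\textbf{Main obstacle.} The steps are routine; the main caveat is that the variance bound in Theorem~\ref{thm:complex_rts_trace_estimator} was derived via $e^x-1\le x+x^2$, which requires $m\ge4p-3$. This regime is automatically implied by $m\ge 8p/(\varepsilon^2\delta)\ge 8p$, so the sample-size statement is self-consistent. For the first claim, Equation~\eqref{eq:complex_rts_relative_concentration}, I would use the variance bound exactly as stated in Theorem~\ref{thm:complex_rts_trace_estimator}.
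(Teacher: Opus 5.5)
Your proposal is correct and follows the paper's proof essentially step for step: Chebyshev's inequality with the variance bound of Theorem~\ref{thm:complex_rts_trace_estimator}, then solving $\varepsilon^2\delta m^2-4pm-16p^2\ge 0$ and using $1+\sqrt{1+4\varepsilon^2\delta}<4$. Your two extra observations are accurate refinements that the paper's proof of this theorem does not spell out: $T_C(\mathbf{A})$ is real because $\mathbf{\Pi}^p\mathbf{A}(\mathbf{\Pi}^p)^*\succeq 0$, and the variance bound needs $m\ge 4p-3$ for $e^x-1\le x+x^2$, a condition that $m\ge 8p/(\varepsilon^2\delta)$ guarantees.
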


\begin{proof}
From Theorem~\ref{thm:complex_rts_trace_estimator}, we have
\begin{align}
    \mathbb{E}[T_C(\mathbf{A})]
    &=
    \operatorname{tr}(\mathbf{A}),
    \\
    \operatorname{Var}(T_C(\mathbf{A}))
    &\leq
    \left(
        \frac{4p}{m}
        +
        \frac{16p^2}{m^2}
    \right)
    \bigl(\operatorname{tr}(\mathbf{A})\bigr)^2.
\end{align}
Since $\mathbf{A}$ is nonzero and positive semidefinite,
$\operatorname{tr}(\mathbf{A})>0$. Therefore, Chebyshev's inequality
gives
\begin{align}
\Pr\!\left[
    \left|T_C(\mathbf{A})-\operatorname{tr}(\mathbf{A})\right|
    \geq
    \varepsilon\operatorname{tr}(\mathbf{A})
\right]
&\leq
\frac{\operatorname{Var}(T_C(\mathbf{A}))}
     {\varepsilon^2
      \bigl(\operatorname{tr}(\mathbf{A})\bigr)^2}
\nonumber\\
&\leq
\frac{1}{\varepsilon^2}
\left(
    \frac{4p}{m}
    +
    \frac{16p^2}{m^2}
\right).
\end{align}

\noindent
To make the failure probability at most $\delta$, it is sufficient
that
\begin{align}
    \frac{1}{\varepsilon^2}
    \left(
        \frac{4p}{m}
        +
        \frac{16p^2}{m^2}
    \right)
    \leq
    \delta.
\end{align}
Multiplying both sides by $m^2\varepsilon^2$ gives
\begin{align}
    \varepsilon^2\delta\,m^2
    -
    4pm
    -
    16p^2
    \geq0.
\end{align}
Solving this quadratic inequality for $m$ yields
\begin{align}
    m
    \geq
    \frac{2p}{\varepsilon^2\delta}
    \left(
        1+
        \sqrt{1+4\varepsilon^2\delta}
    \right).
\label{eq:complex_rts_exact_relative_sample_bound}
\end{align}
Since $0<\varepsilon\leq1$ and $\delta\in(0,1)$,
\(
    1+\sqrt{1+4\varepsilon^2\delta}
    \leq
    1+\sqrt5
    <4.
\)
Hence,
\[
    \frac{2p}{\varepsilon^2\delta}
    \left(
        1+\sqrt{1+4\varepsilon^2\delta}
    \right)
    <
    \frac{8p}{\varepsilon^2\delta}.
\]
Therefore, the condition
\[
    m
    \geq
    \frac{8p}{\varepsilon^2\delta}
\]
is sufficient to make the failure probability at most $\delta$.
\end{proof}

%% file: conclusion.tex
\section{Conclusion}
In this paper, we introduce a trace estimation algorithm for an implicit matrix $\mathbf{A} \in \mathbb{R}^{d^p \times d^p}$ based on \texttt{Recursive TensorSketch} $\Pi^{p} \in \mathbb{R}^{m \times d^p}$ proposed in ~\cite{doi:10.1137/1.9781611975994.9}. Our estimator leverages structured random projections and requires significantly fewer random bits than existing baselines, while maintaining strong theoretical guarantees.
We show that the proposed estimator is unbiased and admits a variance bound of $O\!\left(\left(\frac{10p}{m} + \frac{100p^2}{m^2}\right)\bigl(\operatorname{tr}(\mathbf{A})\bigr)^2\right)$. 
We further introduce a complex-valued variant, in which the entries of $\Pi^{p}$ are sampled from complex random variables, leading to improved variance bounds. 
In contrast to the Kronecker-Hutchinson estimator of~\cite{meyer2025hutchinsonsestimatorbadkroneckertraceestimation}, our estimator avoids exponential dependence on $p$ in variance bounds of the respective estimators.
These properties make the proposed approach well-suited for high-dimensional settings. 
Our work also suggests several directions for future investigation.

Several improved variants of the Hutchinson trace estimator have been proposed that offer additional variance reduction, such as \textit{Hutch++}~\cite{DBLP:conf/sosa/MeyerMMW21, DBLP:journals/siammax/PerssonCK22} and Krylov-aware trace estimation~\cite{chen2023krylov}. It would be interesting to investigate whether our approach can be combined with these techniques to achieve further variance reduction. Extensions of the Hutchinson trace estimator have also been developed for related problems, including the estimation of $\operatorname{tr}\left(f(\mathbf{A})\right)$~\cite{DBLP:journals/siammax/Ubaru0S17}, multivariate trace estimation~\cite{DBLP:journals/siammax/MorYosefUHA25}, partial trace estimation~\cite{doi:10.1137/23M1620399}, and trace estimation for tensor data~\cite{DBLP:journals/corr/abs-2510-22157}. It would be of interest to explore whether our technique can be integrated into these frameworks to yield randomness-efficient estimators for the corresponding problems.
% In this paper, we introduced a novel trace estimation framework based on \texttt{Recursive TensorSketch}.
% Our estimator leverages structured random projections to generate query vectors with significantly
% reduced randomness, while maintaining strong theoretical guarantees. We showed that the proposed
% estimator is unbiased and achieves variance bounds of $O\!\left(\frac{3}{m}\|\mathbf{A}\|_F^2\right)$ where $\mathbf{A} \in \mathbb{R}^{d^p \times d^p}$ in the
% real setting and improved bounds in the complex setting. 

Finally, to obtain an $(\varepsilon, \delta)$-approximation, our current analysis relies on Chebyshev’s inequality, resulting in a sample complexity with suboptimal dependence on $\delta$. An important direction for future work is to improve this dependence by leveraging higher-moment analysis of the estimator.

% Establishing sharper concentration bounds could further enhance the efficiency
% of the proposed method and broaden its applicability in large-scale settings.

%% file: appendix.tex
\newpage
\section*{Appendix}

\section{Analysis of Complex Sketches}
This section provides the theoretical analysis of the complex-valued sketching constructions used in this work. We first analyze Complex \texttt{CountSketch}, deriving its unbiasedness, variance, and sketching-time guarantees. We then analyse Complex \texttt{TensorSketch} for degree-\(p\) polynomial kernels and establish the corresponding expectation, variance, and computational bounds. Finally, we prove an auxiliary complex AMS moment result used in the analysis of Complex \texttt{TensorSketch}.

\subsection{Theoretical Analysis of Complex \texttt{CountSketch}}\label{app-real}
\begin{theorem}[Unbiasedness, Variance, and Sketching Time of Complex \texttt{CountSketch} Inner-Product Estimator]
\label{thm:complex_countsketch_unbiased_variance}
Let $\mathbf{x},\mathbf{y}\in\mathbb{R}^d$, and let
$\mathbf{C}\in\mathbb{C}^{D\times d}$ be a Complex
\texttt{CountSketch} matrix. Define the inner-product estimator by $\widehat{k}_{C}(\mathbf{x},\mathbf{y})
=
\left\langle
\mathbf{C}\mathbf{x},
\mathbf{C}\mathbf{y}
\right\rangle_{\mathbb{C}},$
where $\langle\mathbf{a},\mathbf{b}\rangle_{\mathbb{C}}
=\mathbf{a}^{*}\mathbf{b}$ denotes the Hermitian inner product.
Then
\begin{align}
\mathbb{E}\!\left[
\widehat{k}_{C}(\mathbf{x},\mathbf{y})
\right]
&=
\langle\mathbf{x},\mathbf{y}\rangle,
\\
\operatorname{Var}\!\left[
\widehat{k}_{C}(\mathbf{x},\mathbf{y})
\right]
&=
\frac{1}{D}
\left(
\|\mathbf{x}\|_2^2\|\mathbf{y}\|_2^2
-
\sum_{i=1}^{d}x_i^2y_i^2
\right).
\end{align}
Moreover, the sketches $\mathbf{C}\mathbf{x}$ and $\mathbf{C}\mathbf{y}$
can be computed in
$O(\operatorname{nnz}(\mathbf{x}))$ and
$O(\operatorname{nnz}(\mathbf{y}))$ time, respectively.
\end{theorem}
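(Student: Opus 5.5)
The plan is to write the Complex \texttt{CountSketch} entrywise and expand the estimator as a double sum over coordinate pairs. Let $h:[d]\to[D]$ be the hash function and $s:[d]\to\{1,\mathrm{i},-1,-\mathrm{i}\}$ the complex sign function, with $s$ at least $4$-wise independent and uniform. Then $\mathbf{C}_{j,i}=s(i)\,\mathds{1}[h(i)=j]$. Since $|s(i)|^2=1$ for every $i$,
\begin{align*}
\widehat{k}_C(\mathbf{x},\mathbf{y})=\sum_{i,i'}\overline{s(i)}\,s(i')\,\mathds{1}[h(i)=h(i')]\,x_iy_{i'}
=\langle\mathbf{x},\mathbf{y}\rangle+W,
\end{align*}
where
\begin{align*}
W:=\sum_{i\neq i'}\overline{s(i)}\,s(i')\,\mathds{1}[h(i)=h(i')]\,x_iy_{i'}.
\end{align*}

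\textbf{Expectation.} For $i\neq i'$, independence gives $\mathbb{E}[\overline{s(i)}s(i')]=\mathbb{E}[\overline{s(i)}]\,\mathbb{E}[s(i')]=0$, because the fourth roots of unity sum to zero. Since $s$ is independent of $h$, every term of $W$ has mean zero. Hence $\mathbb{E}[W]=0$, which gives unbiasedness.

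\textbf{Variance.} I interpret the variance of the complex estimator as $\mathbb{E}|\widehat{k}_C-\langle\mathbf{x},\mathbf{y}\rangle|^2=\mathbb{E}|W|^2$. Expanding,
\begin{align*}
\mathbb{E}|W|^2=\sum_{i\neq i'}\sum_{k\neq k'}\mathbb{E}\bigl[\overline{s(i)}\,s(i')\,s(k)\,\overline{s(k')}\bigr]\,\mathbb{E}\bigl[\mathds{1}[h(i)=h(i')]\,\mathds{1}[h(k)=h(k')]\bigr]\,x_iy_{i'}x_ky_{k'}.
\end{align*}
By $4$-wise independence of $s$, a sign moment vanishes unless each index's conjugated and unconjugated occurrences cancel. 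With $i\neq i'$ and $k\neq k'$, only two pairings can survive.
\begin{itemize}
\item $(k,k')=(i,i')$: the sign moment is $|s(i)|^2|s(i')|^2=1$.
\item $(k,k')=(i',i)$: the sign moment is $\mathbb{E}[\overline{s(i)}^{\,2}]\,\mathbb{E}[s(i')^2]=0$, since $s^2$ is uniform on $\{\pm1\}$.
\end{itemize}
The second case is the one point where the complex construction differs from the real one: it kills the cross term that produces the extra factor $2$ in real \texttt{CountSketch}. The surviving diagonal terms carry $\Pr[h(i)=h(i')]=1/D$, which needs only pairwise independence of $h$. Therefore
\begin{align*}
\mathbb{E}|W|^2=\frac{1}{D}\sum_{i\neq i'}x_i^2y_{i'}^2=\frac{1}{D}\Bigl(\|\mathbf{x}\|_2^2\|\mathbf{y}\|_2^2-\sum_{i=1}^{d}x_i^2y_i^2\Bigr).
\end{align*}

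\textbf{Main obstacle.} The main point needing care is the fourth-moment bookkeeping for the complex signs. I must check that the exact $4$-wise independence assumption suffices and that $\mathbb{E}[s^2]=0$ for uniform fourth roots of unity. Both are direct computations.

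\textbf{Sketching time.} Each nonzero $x_i$ contributes $s(i)x_i$ to the single coordinate $h(i)$. Computing $\mathbf{C}\mathbf{x}$ is therefore one pass over the nonzeros, costing $O(\operatorname{nnz}(\mathbf{x}))$, and likewise $O(\operatorname{nnz}(\mathbf{y}))$ for $\mathbf{C}\mathbf{y}$.
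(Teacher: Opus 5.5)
Your proof is correct and follows essentially the same route as the paper. Both expand the second moment over index quadruples, note that the complex-sign moments survive only on pairings, and use $\mathbb{E}[s(i)^2]=0$ to kill the transposed pairing; the $O(\operatorname{nnz})$ argument is also the same. The only difference is bookkeeping: you first remove the deterministic diagonal $\langle\mathbf{x},\mathbf{y}\rangle$ using $|s(i)|^2=1$ and work with collision indicators $\mathds{1}[h(i)=h(i')]$, whereas the paper computes the raw second moment through separate bucket cases $j=j'$ and $j\neq j'$ and then subtracts $|\mathbb{E}\widehat{k}_C|^2$.
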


% \noindent \textbf{Proof outline.} 
\begin{proof}
We first outline the structure of the proof.
To prove unbiasedness, we begin by expanding the inner product expression of the sketched vectors obtained from Complex \texttt{CountSketch}. We compute the expectation using the independence of the hash function and complex random hash funtions along with the moment properties $\mathbb{E}[s(i)\overline{s(r)}]=0$ for $i\neq r$ and $\mathbb{E}[|s(i)|^2]=1$ due to which all cross terms vanish and we obtain the unbiased estimation of actual inner product.
 For the variance, we expand the second moment of the estimator and analyze the non-zero terms. Since $\mathbb{E}[s(i)^2]=\mathbb{E}[\overline{s(i)}^2]=0$ and $\mathbb{E}[s(i)\overline{s(i)}]=\mathbb{E}[|s(i)|^2]=1$ for all $i\in [d]$ imply that all terms vanish except those corresponding to index configurations with pairwise matchings. Combining these contributions provides a closed-form expression for the second moment, and subtracting the squared mean gives a variance of order $1/D$, completing the proof.

\smallskip
We now provide the detailed argument. By expanding the estimator, we obtain
\begin{align}
    \hat{k}_C(\mathbf{x},\mathbf{y})&=\Phi_C(\mathbf{x})^{*}\overline{\Phi_C(\mathbf{y})}=\langle \mathbf{C x}, \overline{\mathbf{C y}} \rangle,\\
&=\sum_{j=1}^{D} (Cx)_j\,\overline{({Cy})_j},\notag\\
&=\sum_{j=1}^{D} (\sum_{i=1}^d s(i)\mathbf{1}_{h(i) = j}x_{i})(\sum_{r=1}^d \overline{s(r)}\mathbf{1}_{h(r) = j}y_{r}),\notag\\
&=\sum_{j=1}^{D}\;\sum_{i=1}^d\sum_{r=1}^d s(i)
\overline{s(r)}\,\mathbf{1}_{h(i) = j}\,\mathbf{1}_{h(r) = j}\,x_i\,y_r.\label{eq:expansion of countsketch estimator}
\end{align}

\noindent \textbf{Computing Expectation:}\\

We compute the expected value of Equation~\eqref{eq:expansion of countsketch estimator}.
\begin{align}
\mathbb{E}\!\left[\hat{k}_C(\mathbf{x},\mathbf{y})\right]
&=
\sum_{j=1}^{D}\sum_{i,r=1}^{d}
x_i y_r \,
\mathbb{E}\!\left[s(i)\overline{s(r)}\right]\,
\mathbb{E}\!\left[\mathbf{1}_{h(i) = j} \mathbf{1}_{h(r) = j}\right], \notag \\
&=
\sum_{j=1}^{D}\sum_{i=1}^{d}
x_i y_i \,
\mathbb{E}\!\left[|s(i)|^{2}\right]\,
\mathbb{E}\!\left[\mathbf{1}_{h(i) = j}^{2}\right]
\;+\; \cdots \notag \\
&\cdots + \sum_{j=1}^{D}\sum_{\substack{i,r=1 \\ i\neq r}}^{d}
x_i y_r \,
\mathbb{E}\!\left[s(i)\overline{s(r)}\right]\,
\mathbb{E}\!\left[\mathbf{1}_{h(i) = j} \mathbf{1}_{h(r) = j}\right].\label{eq:split expectation of kernel estimator}
\end{align}
By independence and symmetry of the  functions $h(.)$ and $s(.)$, we have
$\mathbb{E}[|s(i)|^{2}] = 1$ and
$\mathbb{E}[s(i)\overline{s(r)}] = 0$ for $i \neq r$.
Moreover, since $\mathbf{1}_{h(i) = j}^{2} = \mathbf{1}_{\{h(i)=j\}}$ with $h(i)$ uniform on $[D]$,
\[
\mathbb{E}[\mathbf{1}_{h(i) = j}^{2}] = \mathbb{E}[\mathbf{1}_{h(i) = j}] = \frac{1}{D}.
\]
Substituting these identities into
\eqref{eq:split expectation of kernel estimator} vanishes cross term and we get
\begin{align}
\mathbb{E}\!\left[\hat{k}_C(\mathbf{x},\mathbf{y})\right]
&=
\sum_{j=1}^{D} \frac{1}{D} \sum_{i=1}^{d} x_i y_i
=
\langle \mathbf{x}, \mathbf{y} \rangle. \label{eq:unbiasness complex count sketch}
\end{align}
This completes the proof of unbiasedness. We next turn to the analysis of the variance of the estimator.

\noindent \textbf{Computing Variance:}\\
The variance of the complex estimator can be expressed as
\begin{align}
\operatorname{Var}\!\left[\hat{k}_C(\mathbf{x},\mathbf{y})\right]
\;=\;
\mathbb{E}\!\left[\,\left|\hat{k}_C(\mathbf{x},\mathbf{y})\right|^{2}\,\right]
\;-\;
\left|\mathbb{E}\!\left[\hat{k}_C(\mathbf{x},\mathbf{y})\right]\right|^{2}. \label{eq:variance expression of estimator}
\end{align}
To evaluate the first term, we expand it using Equation~\eqref{eq:expansion of countsketch estimator} as follows

\begin{align}
\left|\hat{k}_C(\mathbf{x},\mathbf{y}) \right|^2
&=
\left|\langle \mathbf{C}\mathbf{x}, \overline{\mathbf{C}\mathbf{y}}\rangle\right|^2
= \Bigg|\sum_{j=1}^{D} \sum_{i,r=1}^{d}
s(i)\,\overline{s(r)}\,\mathbf{1}_{h(i) = j}\mathbf{1}_{h(r) = j}\,x_i y_r\Bigg|^2 \notag \\
&=
\sum_{j,j'=1}^{D} \sum_{i,r,p,q=1}^{d}\!\!\!\!s(i)\,\overline{s(r)}\,\overline{s(p)}\,s(q)\;
\mathbf{1}_{h(i) = j}\mathbf{1}_{h(r) = j}\mathbf{1}_{h(p) = j'}\mathbf{1}_{h(q) = j'}
x_i y_r x_p y_q .
\end{align}
Taking expectations with respect to the randomness in $h(.)$ and $s(.)$, we obtain
\begin{align}
&\mathbb{E}\!\left[\left|\hat{k}_C(\mathbf{x},\mathbf{y}) \right|^2\right]\notag\\
&\quad=
\sum_{j,j'=1}^{D}\sum_{i,r,p,q=1}^{d}\!\!\!\!x_i y_r x_p y_q
\mathbb{E}\!\left[s(i)\overline{s(r)}\overline{s(p)}s(q)\right]
\mathbb{E}\!\left[\mathbf{1}_{h(i) = j}\mathbf{1}_{h(r) = j}\mathbf{1}_{h(p) = j'}\mathbf{1}_{h(q) = j'}\right].
\label{eq:expectation on second moment of complex countsketch}
\end{align}

\noindent We begin with the \textbf{case $j = j'$}:

Since the random variables $\{s(i)\}_{i=1}^d$ are i.i.d.\ with
$\mathbb{E}[s(i)] = 0$, $\mathbb{E}[|s(i)|^2] = 1$, and $\mathbb{E}[s(i)^2] = 0$,
the fourth-order moment
\[
\mathbb{E}\!\left[s(i)\,\overline{s(r)}\,\overline{s(p)}\,s(q)\right]
\]
is nonzero only when each index appears an even number of times.
Following terms which are non-zero:

\begin{itemize}
\item[(a)] $i = r = p = q : \quad\mathbb{E}[|s(i)|^4] = 1,
\qquad \qquad \ \
\mathbb{E}[\mathbf{1}_{h(i) = j}^4] = \mathbb{E}[\mathbf{1}_{h(i) = j}] = \frac{1}{D}.$

\item[(b)] $i = r \neq p = q: \quad \mathbb{E}[|s(i)|^2 |s(p)|^2] = 1,
\qquad
\mathbb{E}[\mathbf{1}_{h(i) = j}^2 \mathbf{1}_{h(p) = j}^2] = \frac{1}{D^2}.$

\item[(c)] $i = p \neq r = q: \quad\mathbb{E}[|s(i)|^2 |s(r)|^2] = 1,
\qquad
\mathbb{E}[\mathbf{1}_{h(i) = j}^2 \mathbf{1}_{h(r) = j}^2] = \frac{1}{D^2}.$
\end{itemize}

All other cases are zero. Adding the contributions from the above cases, we obtain

\begin{align}
\label{var_comp_diagonal}
\sum_{i=1}^{d} x_i^{2} y_i^{2}
\;+\;
\frac{1}{D}
\left(
\sum_{\substack{i,p=1 \\ i \neq p}}^{d} x_i y_i x_p y_p
\;+\;
\sum_{\substack{i,r=1 \\ i \neq r}}^{d} x_i^{2} y_r^{2}
\right).
\end{align}

\noindent We next consider the \textbf{case $j \neq j'$}:
Since the same index cannot hash to two different buckets, all terms vanish except the following case.
\begin{itemize}
\item[(a)] $i=r\neq p=q$: $\;\mathbb{E}[|s(i)|^2|s(p)|^2] = 1, \qquad \mathbb{E}[\mathbf{1}_{h(i) = j}^2\mathbf{1}_{h(p) = j'}^2]=\frac{1}{D^2}.$
\end{itemize}
Therefore we have,
\begin{align}\label{var_comp_non_diagonal}
    \frac{D-1}{D}\sum_{i \neq p} x_iy_ix_py_p.
\end{align}
Combining Equation~\eqref{var_comp_diagonal} and Equation~\eqref{var_comp_non_diagonal}, we get
\begin{align}
\mathbb{E}\!\left[\left|\hat{k}_C(\mathbf{x},\mathbf{y}) \right|^2\right] &= \sum_i^d x_i^2y_i^2\! +\! \frac{1}{D} \left( \sum_{i \neq p}x_iy_ix_py_p + \sum_{i \neq r} x_i^2 y_r^2  \right) \!+\!  \frac{D-1}{D}\left(\sum_{i \neq p} x_iy_ix_py_p \right),\\
&= \langle\mathbf{x},\mathbf{y}\rangle^2 + \frac{1}{D} \sum_{i \neq r} x_i^2 y_r^2 . \label{second_moment_complex_cs}
\end{align}

\noindent Substituting Equation~\eqref{second_moment_complex_cs} and
Equation~\eqref{eq:unbiasness complex count sketch} into
Equation~\eqref{eq:variance expression of estimator}, we get

\begin{align}
\label{var_cs_final}
    \operatorname{Var}\left[\hat{k}_C(\mathbf{x},\mathbf{y})\right] &= \langle\mathbf{x},\mathbf{y}\rangle^2 + \frac{1}{D} \left( \sum_{i \neq r} x_i^2 y_r^2  \right) - \langle\mathbf{x},\mathbf{y}\rangle^2,\\
&=\frac{1}{D}\Big(\| \mathbf{x}\|_2^2\|\mathbf{y}\|_2^2-\sum_i x_i^2 y_i^2\Big).
\end{align}

\end{proof}

\begin{remark}[Sketching time for Complex \texttt{CountSketch}]
For a vector $\mathbf{x} \in \mathbb{R}^d$, the Complex \texttt{CountSketch} sketch
$\mathbf{C}\mathbf{x}$ can be computed in $O(\operatorname{nnz}(\mathbf{x}))$ time.
This is because each nonzero entry $x_i $ contributes to exactly one bucket
$h(i)$ with a single multiplication by the corresponding complex random variable $s(i)$
and a single addition, while zero entries require no computation.
\end{remark}

\subsection{Theoretical Analysis of Complex \texttt{TensorSketch}} \label{app-complex}
\begin{theorem}[Unbiasedness and Variance of Complex \texttt{TensorSketch} for Degree-$p$ Polynomial Kernel]
\label{thm:complex_tensorsketch_degree_p}
Let $\mathbf{x},\mathbf{y}\in\mathbb{R}^{d}$ and let
$\mathbf{x}^{\otimes p},\mathbf{y}^{\otimes p}\in\mathbb{R}^{d^p}$.
Let $\mathbf{C}\in\mathbb{C}^{D\times d^p}$ denote a Complex
\texttt{TensorSketch} matrix. Define the degree-$p$ polynomial kernel
estimator by $\widehat{k}_{C}(\mathbf{x},\mathbf{y})
=
\left\langle
\mathbf{C}\mathbf{x}^{\otimes p},
\mathbf{C}\mathbf{y}^{\otimes p}
\right\rangle_{\mathbb{C}}$,
where $\langle\mathbf{a},\mathbf{b}\rangle_{\mathbb{C}}
=\mathbf{a}^{*}\mathbf{b}$ denotes the Hermitian inner product.
Then
\begin{align}
\mathbb{E}\!\left[
\widehat{k}_{C}(\mathbf{x},\mathbf{y})
\right]
&=
\left\langle
\mathbf{x}^{\otimes p},
\mathbf{y}^{\otimes p}
\right\rangle
=
\langle\mathbf{x},\mathbf{y}\rangle^{p},
\\[0.5em]
\operatorname{Var}\!\left[
\widehat{k}_{C}(\mathbf{x},\mathbf{y})
\right]
&\leq
\frac{1}{D}
\left[
\left(
\|\mathbf{x}\|_2^2\|\mathbf{y}\|_2^2
-
\sum_{i=1}^{d}x_i^2y_i^2
\right)^{p}
-
\langle\mathbf{x},\mathbf{y}\rangle^{2p}
\right]
\\
&\leq
\frac{2^{p}-1}{D}
\|\mathbf{x}\|_2^{2p}
\|\mathbf{y}\|_2^{2p}.
\end{align}

Moreover, the sketches $\mathbf{C}\mathbf{x}^{\otimes p}$ and
$\mathbf{C}\mathbf{y}^{\otimes p}$ can be computed in $O\!\left(
p\left(\operatorname{nnz}(\mathbf{x})+D\log D\right)
\right)
$ and $
O\!\left(
p\left(\operatorname{nnz}(\mathbf{y})+D\log D\right)
\right)$
time, respectively.
\end{theorem}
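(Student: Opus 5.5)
The plan follows the template of the proof of Theorem~\ref{thm:complex_countsketch_unbiased_variance}, lifted to $p$ tensor factors. I fix the Complex \texttt{TensorSketch} construction: independent hash functions $h_1,\dots,h_p:[d]\to[D]$ and independent complex sign functions $s_1,\dots,s_p$, uniform on $\{1,\mathrm{i},-1,-\mathrm{i}\}$. Then
\[
(\mathbf{C}\mathbf{x}^{\otimes p})_j=\sum_{i_1,\dots,i_p}\prod_{k=1}^p s_k(i_k)x_{i_k}\,\mathds{1}\Bigl[\textstyle\sum_k h_k(i_k)\equiv j \pmod D\Bigr].
\]
Expanding $\widehat{k}_C$ gives a sum over multi-indices $\mathbf{i}=(i_1,\dots,i_p)$ and $\mathbf{r}=(r_1,\dots,r_p)$ of the terms $\prod_k s_k(i_k)\overline{s_k(r_k)}\,x_{i_k}y_{r_k}$, multiplied by the indicator that $\mathbf{i}$ and $\mathbf{r}$ land in the same bucket.

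\textbf{Expectation.} The sign functions are independent across $k$ and satisfy $\mathbb{E}[s_k(i)\overline{s_k(r)}]=\mathds{1}[i=r]$. Hence only the terms with $\mathbf{i}=\mathbf{r}$ survive. For these the bucket indicator is identically one after summing over $j$. The expectation is therefore $\sum_{\mathbf{i}}\prod_k x_{i_k}y_{i_k}=\langle\mathbf{x},\mathbf{y}\rangle^p$, exactly as in Equation~\eqref{eq:unbiasness complex count sketch}.

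\textbf{Second moment.} I expand $|\widehat{k}_C|^2$ over four multi-indices $\mathbf{i},\mathbf{r},\mathbf{p},\mathbf{q}$. Independence across factors makes the sign expectation factorise as $\prod_k\mathbb{E}[s_k(i_k)\overline{s_k(r_k)}\,\overline{s_k(p_k)}s_k(q_k)]$. Because $\mathbb{E}[s^2]=0$ and $|s|=1$, each factor is nonzero only for the three pairing types (a), (b), (c) listed in the proof of Theorem~\ref{thm:complex_countsketch_unbiased_variance}. I then classify index configurations by the set $K\subseteq[p]$ of coordinates that are in type (c) with $i_k\neq r_k$, treating type (a) as the degenerate overlap.

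For $K=\emptyset$ we have $\mathbf{i}=\mathbf{r}$ and $\mathbf{p}=\mathbf{q}$. Summing over all pairs of buckets $(j,j')$ as in Equation~\eqref{var_comp_non_diagonal}, this class contributes exactly $\langle\mathbf{x},\mathbf{y}\rangle^{2p}$, which cancels against the squared mean.

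For $K\neq\emptyset$ we have $\mathbf{i}\neq\mathbf{r}$. The bucket indicator requires $\sum_k h_k(i_k)\equiv\sum_k h_k(r_k)$. Since some $h_k$ is evaluated at two distinct points, $h_k(i_k)-h_k(r_k)$ is uniform mod $D$ by pairwise independence, so this collision has probability exactly $1/D$. The residual second moment is then $\frac1D$ times a weighted sum of products of per-coordinate quantities. In each coordinate of $K$ the weight is $\sum_{i\neq r}x_i^2y_r^2=B$, where $B:=\|\mathbf{x}\|_2^2\|\mathbf{y}\|_2^2-\sum_i x_i^2y_i^2$. The remaining coordinates carry the pairing weights of types (a) and (b).

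\textbf{Main obstacle.} The hard step is organising this residual sum over $K$ and the pairing types into the closed form $B^p-\langle\mathbf{x},\mathbf{y}\rangle^{2p}$ claimed in the first inequality. I would first try a binomial-type identity, summing over $K$ a product of $|K|$ factors of $B$ and $p-|K|$ diagonal or mean factors, subtracting the $K=\emptyset$ term, and then comparing it with $B^p-\langle\mathbf{x},\mathbf{y}\rangle^{2p}$. This bookkeeping of how types (a), (b), (c) combine, and whether their combined weight per coordinate is really $B$, is delicate. If the collection instead produces a per-coordinate weight $\langle\mathbf{x},\mathbf{y}\rangle^2+B$, I would still secure the final bound, because $\langle\mathbf{x},\mathbf{y}\rangle^2+B\le 2\|\mathbf{x}\|_2^2\|\mathbf{y}\|_2^2$ by Cauchy--Schwarz.

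\textbf{Second inequality and running time.} Write $P:=\|\mathbf{x}\|_2^{2p}\|\mathbf{y}\|_2^{2p}$. From the first inequality, $B\le\|\mathbf{x}\|_2^2\|\mathbf{y}\|_2^2$ gives $B^p-\langle\mathbf{x},\mathbf{y}\rangle^{2p}\le P\le(2^p-1)P$. In the fallback case, $(2\|\mathbf{x}\|_2^2\|\mathbf{y}\|_2^2)^p-\langle\mathbf{x},\mathbf{y}\rangle^{2p}\le(2^p-1)P$ follows using $\langle\mathbf{x},\mathbf{y}\rangle^{2p}\ge0$ together with the monotonicity of the binomial tail. For the time bound, I compute the $p$ Complex \texttt{CountSketch}es $\mathbf{C}_k\mathbf{x}$, each in $O(\operatorname{nnz}(\mathbf{x}))$. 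Their circular convolution mod $D$ equals $\mathbf{C}\mathbf{x}^{\otimes p}$ and costs $O(pD\log D)$ via FFT, giving $O(p(\operatorname{nnz}(\mathbf{x})+D\log D))$, and likewise for $\mathbf{y}$.
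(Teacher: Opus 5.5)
Your approach is the right one, but the proposal stops at the step that decides the result. That step cannot be completed toward your primary target, because the first inequality as stated is false.

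\textbf{Finishing the second-moment computation.} For a configuration with nonzero sign expectation, each coordinate has one of two forms.
\begin{itemize}
\item $(i_k,r_k,p_k,q_k)=(i,i,i',i')$: this is type (b), with (a) as the case $i=i'$. Its total weight is $\sum_{i,i'}x_iy_ix_{i'}y_{i'}=\langle\mathbf{x},\mathbf{y}\rangle^2$.
\item $(i,r,i,r)$ with $i\neq r$: this is strict type (c). Its total weight is $B$.
\end{itemize}
On $K$ the pairs $(i_k,r_k)$ and $(p_k,q_k)$ coincide. Hence the events $H(\mathbf{i})=H(\mathbf{r})$ and $H(\mathbf{p})=H(\mathbf{q})$ are the same event, and the joint hash factor is exactly $1/D$. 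You argued only about a single collision, so you should state this explicitly. Summing over $K$ gives the equality
\[
\operatorname{Var}\bigl[\widehat{k}_C(\mathbf{x},\mathbf{y})\bigr]=\frac{1}{D}\sum_{\emptyset\neq K\subseteq[p]}B^{|K|}\langle\mathbf{x},\mathbf{y}\rangle^{2(p-|K|)}=\frac{1}{D}\Bigl[\bigl(\langle\mathbf{x},\mathbf{y}\rangle^{2}+B\bigr)^{p}-\langle\mathbf{x},\mathbf{y}\rangle^{2p}\Bigr].
\]

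\textbf{The stated first inequality is false.} Your ``fallback'' is the true value, so $\frac1D[B^p-\langle\mathbf{x},\mathbf{y}\rangle^{2p}]$ cannot be an upper bound. Two checks show this:
\begin{itemize}
\item For any $\mathbf{x}=\mathbf{y}\neq\mathbf{0}$, $B=\|\mathbf{x}\|_2^4-\sum_i x_i^4<\langle\mathbf{x},\mathbf{y}\rangle^2$, so the claimed bound is negative.
\item For $p=1$ and $\langle\mathbf{x},\mathbf{y}\rangle\neq0$, it contradicts the exact variance $B/D$ in Theorem~\ref{thm:complex_countsketch_unbiased_variance}.
\end{itemize}
The paper's proof actually derives the bound with $\langle\mathbf{x},\mathbf{y}\rangle^2$ inside the parentheses, so the statement has a typo. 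You should prove that corrected form outright rather than hedge between the two.

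\textbf{The second inequality, as you derive it, is also wrong.} The display $(2\|\mathbf{x}\|_2^2\|\mathbf{y}\|_2^2)^p-\langle\mathbf{x},\mathbf{y}\rangle^{2p}\le(2^p-1)P$ fails whenever $\langle\mathbf{x},\mathbf{y}\rangle=0$. The problem is that you enlarged the positive term while keeping the negative one. Instead, set $a=\langle\mathbf{x},\mathbf{y}\rangle^2$ and write $(a+B)^p-a^p=\sum_{j=1}^{p}\binom{p}{j}a^{p-j}B^{j}$. This is nondecreasing in both $a$ and $B$. Since $a,B\le\|\mathbf{x}\|_2^2\|\mathbf{y}\|_2^2$, it is at most $(2^p-1)P$.

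\textbf{Comparison with the paper.} With these two repairs your proof is complete and somewhat cleaner than the paper's. The paper proceeds as follows:
\begin{itemize}
\item it views the sketch as a \texttt{CountSketch} with composite functions $H,S$;
\item it bounds the collision factor by $1/D$;
\item it replaces $X_u,Y_v$ by their absolute values;
\item it evaluates the remaining sign sum through the complex AMS Lemma~\ref{lem:complex AMS}.
\end{itemize}
Your coordinatewise factorization gives the variance as an exact identity. It also avoids the absolute-value step, which taken literally would put $\sum_i|x_iy_i|$ in place of $\langle\mathbf{x},\mathbf{y}\rangle$. Your expectation and running-time parts agree with the paper.
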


\begin{proof}
We first outline the structure of the proof.
Complex \texttt{TensorSketch} is viewed as a Complex \texttt{CountSketch} applied to the $p$-fold tensor products
$\mathbf{x}^{\otimes p}$ and $\mathbf{y}^{\otimes p}$ via suitably defined composite hash and complex random functions.
We prove Unbiasedness by expanding the sketched inner product and using properties of the expected value of the random function $s(.)$ to eliminate all cross terms.
To analyze the variance, we expand the second moment of the estimator and 
using the independence between the functions $(H,S)$, the second moment reduces to a scaled second-moment expression involving only the random function $s(.)$.
This expression is bounded using a complex AMS moment bound, proved later in Lemma~\ref{lem:complex AMS}. Finally, the variance bound is simplified using the Cauchy-Schwarz inequality, resulting in an $O(1/D)$ bound.

\smallskip
\noindent We now present the detailed proof. We begin by noting that the \texttt{TensorSketches} $\mathrm{C}\mathbf{x}^{\otimes p}, \mathrm{C}\mathbf{y}^{\otimes p}$ are the
\texttt{CountSketches} of the tensor product 
\(
X := \mathbf{x}^{\otimes p},\ 
Y := \mathbf{y}^{\otimes p}
\)
using the two aggregated functions 
\(H : [d]^p \mapsto [D]\) 
and 
$S : [d]^p \to \{1, \omega, \omega^{2}, \omega^{3}\}$
such that:
\begin{align}
H(i_1,\ldots,i_p) &= \left( \sum_{j=1}^{p} h_j(i_j) \right) \bmod D, \\
S(i_1,\ldots,i_p) &= \prod_{j=1}^{p} s_j(i_j).
\end{align}
Also note that \( H(.) \) is \(2\)-wise independent \cite{patracscu2012power}.

\noindent For further proof, we use $u, v \in [d]^p$ as the
indices of vectors $X, Y$ of dimension $d^p$.   
Then we expand $\hat{k}_{C}(\mathbf{x}, \mathbf{y})$ as,
\begin{align}
\hat{k}_{C}(\mathbf{x}, \mathbf{y}) = \langle \mathbf{C}X, \overline{\mathbf{C}Y} \rangle
  &= \sum_{u,v \in [d]^p}
      X_{u}\, Y_{v}\, 
      S(u)\, \overline{S(v)}\, \mathbf{1}_{[ H(u) = H(v) ]}, \\
  &= \langle X,Y \rangle
   \;+\;
     \sum_{u \ne v}
      X_{u}\, Y_{v}\,
      S(u)\, \overline{S(v)}\, \mathbf{1}_{[ H(u) = H(v) ]}.
\end{align}

\noindent As we know, $\mathbb{E}[S(u)\, \overline{S(v)}] = 0, \forall \ u \neq v.$ Then we have
\begin{align}
\mathbb{E}\!\left[ 
\hat{k}_{C}(\mathbf{x}, \mathbf{y})
\right]
= \langle X, Y \rangle
= \langle \mathbf{x}, \mathbf{y} \rangle^{p}.
\end{align}

\noindent For the variance, we first compute
\(
\mathbb{E}\!\left[|
  \hat{k}_{C}(\mathbf{x}, \mathbf{y})|^{2}
\right]
\)
. Let's first expand the second moment term,  
\begin{align}
&| \hat{k}_{C}(\mathbf{x}, \mathbf{y})|^{2}
= \langle \mathbf{C}\mathbf{x}^{\otimes p}, \overline{\mathbf{C}\mathbf{y}^{\otimes p}} \rangle\langle \overline{\mathbf{C}\mathbf{x}^{\otimes p}}, \mathbf{C}\mathbf{y}^{\otimes p} \rangle \\
=&\!\! \left(\!\!\!
    \langle X,\! Y \rangle\!
    +\!\!\! \sum_{u \ne v}\!
        X_{u} Y_{v} 
        S(u) \overline{S(v)}
        \mathbf{1}_{[ H(u) = H(v) ]}\!\!
  \right) \!\!\!\left(\!\!\!
    \langle X,\! Y \rangle\!
    +\!\!\! \sum_{u \ne v}\!
        X_{u} Y_{v} 
        \overline{S(u)} S(v)
        \mathbf{1}_{[ H(u) = H(v) ]}\!\!
  \right)\!\!,\\
=& \langle X, Y \rangle^{2}
+  \langle X, Y \rangle 
    \left(\sum_{u \ne v}
        X_{u} Y_{v} 
        S(u) \overline{S(v)} 
        \mathbf{1}_{[ H(u) = H(v) ]}\right. \notag\\
&\qquad\left.{}+ \sum_{u \ne v}
        X_{u} Y_{v} 
        \overline{S(u)} S(v) 
        \mathbf{1}_{[ H(u) = H(v) ]}\right) \notag\\
&\qquad{}+
\left|\left(
\sum_{u \ne v}
    X_{u} Y_{v} 
    S(u) \overline{S(v)} 
    \mathbf{1}_{[ H(u) = H(v) ]}
\right)\right|^{2}.
\end{align}
Now, take the expectation of $| \hat{k}_{C}(\mathbf{x}, \mathbf{y})|^{2}$ and we know that $\mathbb{E}\left[\overline{S(u)} S(v)\right] = \mathbb{E}\left[S(u) \overline{S(v)}\right] =0 , \forall  u\neq v.$ Then,
\begin{align}
      \mathbb{E}\left[  | \langle \mathbf{C}\mathbf{x}^{\otimes p}, \overline{\mathbf{C}\mathbf{y}^{\otimes p}} \rangle|^{2}\right] = \left\langle X,Y \right\rangle^{2} +    \mathbb{E}\!\left[
\left|\left(
\sum_{u \ne v}
    X_{u} Y_{v} 
    S(u) \overline{S(v)} 
    \mathbf{1}_{[ H(u) = H(v) ]}
\right)\right|^{2}
\right]. \label{eq: second moment for tensor sketch to be used for variance}
\end{align}
Using the fact that functions $S$ and $H$ are independent and Lemma~\ref{lem:complex AMS} (proved below), we can bound the expectation of the second non-diagonal term in the above equation. 
\begin{align}
\mathbb{E}\Bigg[
\Bigg|\Bigg(
\sum_{u \ne v}
    X_{u} Y_{v}& 
    S(u) \overline{S(v)} 
    \mathbf{1}_{[ H(u) = H(v) ]}
\Bigg)\Bigg|^{2}
\Bigg] =
\mathbb{E}\!\Bigg[
\sum_{\substack{u_1 \ne v_1 \\ u_2 \ne v_2}}
    X_{u_1} Y_{v_1}
    X_{u_2} Y_{v_2} \times \cdots \notag \\
    &\cdots \times S(u_1) \overline{S(v_1)}
    \overline{S(u_2)} S(v_2)
    \mathbf{1}_{[ H(u_1) = H(v_1) ]}
    \mathbf{1}_{[ H(u_2) = H(v_2) ]}
\Bigg],
\end{align}
    
\begin{align}
&=
\sum_{\substack{u_1 \ne v_1 \\ u_2 \ne v_2}}
\mathbb{E}\!\left[
    X_{u_1} Y_{v_1}
    X_{u_2} Y_{v_2}
    S(u_1) \overline{S(v_1)}
    \overline{S(u_2)} S(v_2)
\right]
\cdot
\mathbb{E}[    \mathbf{1}_{[ H(u_1) = H(v_1) ]}
    \mathbf{1}_{[ H(u_2) = H(v_2) ]}],\\
&\le
\frac{1}{D}
\sum_{\substack{u_1 \ne v_1 \\ u_2 \ne v_2}}
\mathbb{E}\!\left[
    X_{u_1} Y_{v_1}
    X_{u_2} Y_{v_2}
    S(u_1) \overline{S(v_1)}
    \overline{S(u_2)} S(v_2)
\right], \\
&\le
\frac{1}{D}
\sum_{\substack{u_1 \ne v_1 \\ u_2 \ne v_2}}
\mathbb{E}\!\left[
    |X_{u_1}|\, |Y_{v_1}|\,
    |X_{u_2}|\, |Y_{v_2}|\,
    S(u_1) \overline{S(v_1)}
    \overline{S(u_2)} S(v_2)
\right], \\
&=
\frac{1}{D}
\mathbb{E}\!\left[
\left|\left(
\sum_{u \ne v \in [d]^p}
    |X_{u}|\, |Y_{v}|\,
    S(u) \overline{S(v)}
\right)\right|^{2}
\right]. \label{eq:second moment bound for tensor sketch with bucket hash}
\end{align}
%{\color{red} Till Eq 41 I could follow the proof. }
% 
We bound the above equation using the second-moment bound of Lemma~\ref{lem:complex AMS}.
Therefore, we begin by restating the second-moment bound in the proof of Lemma~\ref{lem:complex AMS},
\begin{align}
\mathbb{E}\left[\left|\left(
\sum_{u , v \in [d]^p}
    |X_{u}|\, |Y_{v}|\,
    S(u) \overline{S(v)}
\right)\right|^{2}\right]
= \left(\langle \mathbf{x}, \mathbf{y} \rangle^{2} + \|\mathbf{x}\|_2^{2}\, \|\mathbf{y}\|_2^{2} - \sum_{i=1}^{d} x_i^{2} y_i^{2}\right)^{p}. \label{eq:exact bound tensor sketch z}
\end{align}
Now, we expand the term $\left|\left(
\sum_{u , v \in [d]^p}
    |X_{u}|\, |Y_{v}|\,
    S(u) \overline{S(v)}
\right)\right|^{2}$
from the above equation as follows, 

\begin{align}
    &\left|\left(
\sum_{u , v \in [d]^p}
    |X_{u}|\, |Y_{v}|\,
    S(u) \overline{S(v)}
\right)\right|^{2} = \left| \sum_{u  \in [d]^p}
    |X_{u}|\, |Y_{v}|  + \sum_{\substack{u, v \in [d]^p \\ u \neq v}}
    |X_{u}|\, |Y_{v}|\,
    S(u) \overline{S(v)}  \right|^{2},\\
&= \left(
\sum_{u \in [d]^p}
    |X_{u}|\, |Y_{u}|\, 
\;+\;
\sum_{\substack{u, v \in [d]^p \\ u \neq v}}
    |X_{u}|\, |Y_{v}|\,
    S(u) \overline{S(v)}  
\right)\times \cdots \notag\\
&\qquad \qquad \qquad \qquad \ \cdots \times \overline{\left(
\sum_{u \in [d]^p}
    |X_{u}|\, |Y_{u}|\,
\;+\;
\sum_{\substack{u, v \in [d]^p \\ u \neq v}}
    |X_{u}|\, |Y_{v}|\,
    S(u) \overline{S(v)}  
\right)}, \\
&= \left(
\sum_{u \in [d]^p}
    |X_{u}|\, |Y_{u}|\, +
\sum_{\substack{u, v \in [d]^p \\ u \neq v}}
    |X_{u}|\, |Y_{v}|\,
    S(u) \overline{S(v)}  
\right)\times \cdots \notag\\
&\qquad \qquad \qquad \qquad \ \cdots \times \left(
\sum_{u \in [d]^p}
    |X_{u}|\, |Y_{u}|\,
\;+\;
\sum_{\substack{u, v \in [d]^p \\ u \neq v}}
    |X_{u}|\, |Y_{v}|\,
    \overline{S(u)} S(v)  
\right).
\end{align}
% the expanded form of the expression from Lemma~\ref{lem:complex AMS} and, through a sequence
% of bounding steps, derive the final upper bound on the expectation stated in
% Equation~\eqref{eq:second moment bound for tensor sketch with bucket hash}. {\color{red} Connect with Eq 38 and 39}
By further expanding the RHS of the above equation, we get
\begin{align}
&\left|\left(
\sum_{u , v \in [d]^p}
    |X_{u}|\, |Y_{v}|\,
    S(u) \overline{S(v)}
\right)\right|^{2} =
\sum_{u_{1}, u_{2} \in [d]^p}
    |X_{u_{1}}|\, |Y_{u_{1}}|\,
    |X_{u_{2}}|\, |Y_{u_{2}}|\, + \cdots \notag
\\
&\quad
\cdots +
\sum_{u_{1} \in [d]^p}
\sum_{\substack{u_{2}, v_{2} \in [d]^p \\ u_{2} \neq v_{2}}}
    |X_{u_{1}}|\, |Y_{u_{1}}|\,
    |X_{u_{2}}|\, |Y_{v_{2}}|\,
    \overline{S(u_{2})} S(v_{2}) + \cdots  \notag
\\
&\quad
\cdots + 
\sum_{\substack{u_{1}, v_{1} \in [d]^p \\ u_{1} \neq v_{1}}}
\sum_{u_{2} \in [d]^p}
    |X_{u_{1}}|\, |Y_{v_{1}}|\,
    |X_{u_{2}}|\, |Y_{u_{2}}|\,
    S(u_{1}) \overline{S(v_{1})}\, + \cdots \notag
\\
&\quad
 \cdots + \sum_{\substack{u_{1}, v_{1} \in [d]^p \\ u_{1} \neq v_{1}}}
\sum_{\substack{u_{2}, v_{2} \in [d]^p \\ u_{2} \neq v_{2}}}
    |X_{u_{1}}|\, |Y_{v_{1}}|\,
    |X_{u_{2}}|\, |Y_{v_{2}}|\,
    S(u_{1}) \overline{S(v_{1})}\,
    \overline{S(u_{2})} S(v_{2}). \label{eq:second moment tensor sketch}
\end{align}
We know that $u_{2} \neq v_{2},\forall u_{2},v_{2} \in [d]^{p}$,
\begin{align}
    \sum_{u_{1} \in [d]^p}
\sum_{\substack{u_{2}, v_{2} \in [d]^p \\ u_{2} \neq v_{2}}}
    |X_{u_{1}}|\, |Y_{u_{1}}|\,
    |X_{u_{2}}|\, |Y_{v_{2}}|\,\mathbb{E}[
    \overline{S(u_{2})} S(v_{2})]
        = 0,
\end{align}
as   $\mathbb{E}\!\left[S(u_{2}) \overline{S(v_{2})}\right] = 0, \, \forall \, u_{2} \neq v_{2} \in [d]^p. $ Similarly, for $u_{1} \neq v_{1},\forall u_{1},v_{1} \in [d]^{p}$,
\begin{align}
    \sum_{\substack{u_{1}, v_{1} \in [d]^p \\ u_{1} \neq v_{1}}}
\sum_{u_{2} \in [d]^p}
    |X_{u_{1}}|\, |Y_{v_{1}}|\,
    |X_{u_{2}}|\, |Y_{u_{2}}|\,
    \mathbb{E}[S(u_{1}) \overline{S(v_{1})}]
        = 0,
\end{align}
as $\mathbb{E}\!\left[\overline{S(u_{1})} S(v_{1})\right] = 0,  
    \, \forall \, u_{1} \neq v_{1} \in [d]^p.$ Substituting this into Equation~\eqref{eq:second moment tensor sketch} upon computing expectation, we get
\begin{align*}
     \mathbb{E}  \Bigg|\Bigg(
\sum_{u , v \in [d]^p}
    |X_{u}|\, |Y_{v}|\,
    S(u) &\overline{S(v)}
\Bigg)\Bigg|^{2} =  \sum_{u_{1}, u_{2} \in [d]^p}
    |X_{u_{1}}|\, |Y_{u_{1}}|\,
    |X_{u_{2}}|\, |Y_{u_{2}}|\, +  \cdots  \notag
\\&  \cdots +\mathbb{E}\Bigg[\sum_{\substack{u_{1}, v_{1} \in [d]^p \\ u_{1} \neq v_{1}}}
\sum_{\substack{u_{2}, v_{2} \in [d]^p \\ u_{2} \neq v_{2}}}
    |X_{u_{1}}|\, |Y_{v_{1}}|\,
    |X_{u_{2}}|\, |Y_{v_{2}}|\ \times \cdots 
    \\&\cdots \times  S(u_{1}) \overline{S(v_{1})}\,\overline{S(u_{2})} S(v_{2})\Bigg],\\
    &=  \left\langle X,Y\right\rangle^{2} +\mathbb{E}  \left|\left(
\sum_{u \neq v }
    |X_{u}|\, |Y_{v}|\,
    S(u) \overline{S(v)}
\right)\right|^{2}.
\end{align*}
Now, we conclude that,
 \begin{align}
\mathbb{E}  \left|\left(
\sum_{u \neq v }
    |X_{u}|\, |Y_{v}|\,
    S(u) \overline{S(v)}
\right)\right|^{2}      &=  \mathbb{E}  \left|\left(
\sum_{u , v \in [d]^p}
    |X_{u}|\, |Y_{v}|\,
    S(u) \overline{S(v)}
\right)\right|^{2}  - \left\langle \mathbf{x,y}\right\rangle^{2p}.
\end{align}
Now substitute the value of $\mathbb{E}\left[\left|\left(
\sum_{u , v \in [d]^p}
    |X_{u}|\, |Y_{v}|\,
    S(u) \overline{S(v)}
\right)\right|^{2}\right]$ from Equation~\eqref{eq:exact bound tensor sketch z}, we get  
\begin{align}
\mathbb{E}  \left|\left(
\sum_{u \neq v }
    |X_{u}|\, |Y_{v}|\,
    S(u) \overline{S(v)}
\right)\right|^{2}      &=  \left(\langle \mathbf{x}, \mathbf{y} \rangle^{2} + \|\mathbf{x}\|_2^{2}\, \|\mathbf{y}\|_2^{2}
- \sum_{i=1}^{d} x_i^{2} y_i^{2}\right)^{p}  - \left\langle \mathbf{x,y}\right\rangle^{2p}. \label{eq:exact bound for tensor sketch non diagonal terms}
\end{align}
Further we substitute this value in Equation~\eqref{eq:second moment bound for tensor sketch with bucket hash}, we get
\begin{align}
    &\mathbb{E}\!\left[
\left|\left(
\sum_{u \ne v}
    X_{u} Y_{v} 
    S(u) \overline{S(v)} 
    \mathbf{1}_{[ H(u) = H(v) ]}
\right)\right|^{2}
\right] \notag\\
&\qquad\leq \frac{1}{D}\left(\left(\langle \mathbf{x}, \mathbf{y} \rangle^{2} + \|\mathbf{x}\|_2^{2}\, \|\mathbf{y}\|_2^{2}
- \sum_{i=1}^{d} x_i^{2} y_i^{2}\right)^{p}  - \left\langle \mathbf{x,y}\right\rangle^{2p}\right).
\end{align}
Now we can compute the second moment using Equation~\eqref{eq: second moment for tensor sketch to be used for variance} as follows,
\begin{align}
  \mathbb{E}\left[  | \hat{k}_{C}(\mathbf{x}, \mathbf{y})|^{2}\right] &= \left\langle X,Y \right\rangle^{2} +    \mathbb{E}\!\left[
\left|\left(
\sum_{u \ne v}
    X_{u} Y_{v} 
    S(u) \overline{S(v)} 
    \mathbf{1}_{[ H(u) = H(v) ]}
\right)\right|^{2}
\right],\\
&\leq \left\langle \mathbf{x},\mathbf{y} \right\rangle^{2p} + \frac{1}{D}\left(\left(\langle \mathbf{x}, \mathbf{y} \rangle^{2} + \|\mathbf{x}\|_2^{2}\, \|\mathbf{y}\|_2^{2}
- \sum_{i=1}^{d} x_i^{2} y_i^{2}\right)^{p}  - \left\langle \mathbf{x,y}\right\rangle^{2p}\right).
\end{align}
Now, compute variance as follows
\begin{align}
    \operatorname{Var}(\hat{k}_{C}(\mathbf{x}, \mathbf{y})) &= \mathbb{E}\left[  | \hat{k}_{C}(\mathbf{x}, \mathbf{y})|^{2}\right] - \left|\mathbb{E}\left[   \hat{k}_{C}(\mathbf{x}, \mathbf{y})\right]\right|^{2}, \\
    &\leq \left\langle \mathbf{x},\mathbf{y} \right\rangle^{2p} + \frac{1}{D}\left(\left(\langle \mathbf{x}, \mathbf{y} \rangle^{2} + \|\mathbf{x}\|_2^{2}\, \|\mathbf{y}\|_2^{2}
- \sum_{i=1}^{d} x_i^{2} y_i^{2}\right)^{p}  - \left\langle \mathbf{x,y}\right\rangle^{2p}\right) - \langle \mathbf{x}, \mathbf{y} \rangle^{2p},\\
    &= \frac{1}{D}\left(\left(\langle \mathbf{x}, \mathbf{y} \rangle^{2} + \|\mathbf{x}\|_2^{2}\, \|\mathbf{y}\|_2^{2} - \sum_{i=1}^{d} x_i^{2} y_i^{2}\right)^{p}  - \left\langle \mathbf{x,y}\right\rangle^{2p}\right). \label{eq:exact variance bound for tensor sketch}
\end{align}
We can upper bound the above equation by using inequality $\langle\mathbf{x,y}\rangle^{2} \leq \|\mathbf{x}\|^{2}_{2}\|\mathbf{y}\|^{2}_{2}$, then
\begin{align}
   \operatorname{Var}(\hat{k}_{C}(\mathbf{x}, \mathbf{y})) &\leq  \frac{1}{D}\left(\left( 2\|\mathbf{x}\|_2^{2}\, \|\mathbf{y}\|_2^{2} \right)^{p}  - \|\mathbf{x}\|_2^{2p}\, \|\mathbf{y}\|_2^{2p}\right),\\
   &= \frac{\left( 2^{p}-1 \right)}{D}\|\mathbf{x}\|_2^{2p}\, \|\mathbf{y}\|_2^{2p}.
\end{align}

\end{proof}
\begin{remark}[Sketching time for Complex \texttt{TensorSketch}]
Let $\mathbf{x} \in \mathbb{R}^{d}$ and let $p \ge 1$ be an integer.
The Complex \texttt{TensorSketch} of $\mathbf{x}^{\otimes p}$ with sketch dimension $D$
can be computed in $O(p (\mathrm{nnz}(\mathbf{x}) + D \log D))$ time. This follows because \texttt{TensorSketch} avoids explicitly forming the
tensor $\mathbf{x}^{\otimes p}$.
Instead, it applies $p$ independent Complex \texttt{CountSketch} to $\mathbf{x}$,
each takes $O(\operatorname{nnz}(\mathbf{x}))$ time, and combines the resulting $p$
sketches using circular convolution, which is implemented via FFT in
$O(pD \log D)$ time.
\end{remark}

\begin{lemma}
\label{lem:complex AMS}
Let $\mathbf{x}, \mathbf{y} \in \mathbb{R}^{d}$, let $p > 1$ be an integer, and let 
$s_{1}, \ldots, s_{p} : [d] \to \{1, \omega, \omega^{2}, \omega^{3}\}$ 
be independent functions, each taking values uniformly from the four fourth roots of unity.  
Define
\begin{align}
    Z \;=\; \prod_{j=1}^{p} Z_{s_j}(\mathbf{x}) \, \overline{Z_{s_j}(\mathbf{y})},
\end{align}
Where
\begin{align}
    Z_{s_j}(\mathbf{x}) &= \sum_{i=1}^{d} x_i\, s_j(i), 
    &
    Z_{s_j}(\mathbf{y}) &= \sum_{i=1}^{d} y_i\, s_j(i).
\end{align}
Then,
\begin{align}
    \mathbb{E}[Z] &= \langle \mathbf{x}, \mathbf{y} \rangle^{p}, \\
    \mathrm{Var}[Z] 
        &= \left( \langle \mathbf{x}, \mathbf{y} \rangle^{2} 
           + \|\mathbf{x}\|_{2}^{2}\, \|\mathbf{y}\|_{2}^{2}
           - \sum_{i=1}^{d} x_i^{2} y_i^{2} \right)^{p}
           \;-\; \langle \mathbf{x}, \mathbf{y} \rangle^{2p}, \\
        &\leq 2^{p}\, \|\mathbf{x}\|_{2}^{2p}\, \|\mathbf{y}\|_{2}^{2p}.
\end{align}
\end{lemma}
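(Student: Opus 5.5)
The plan is to use the product structure of $Z$. Write $W_j := Z_{s_j}(\mathbf{x})\,\overline{Z_{s_j}(\mathbf{y})}$, so that $Z=\prod_{j=1}^p W_j$. Each $W_j$ depends only on $s_j$, and the $s_j$ are independent, so the $W_j$ are independent. Hence
\[
\mathbb{E}[Z]=\prod_{j=1}^p\mathbb{E}[W_j],
\qquad
\mathbb{E}\bigl[|Z|^2\bigr]=\prod_{j=1}^p\mathbb{E}\bigl[|W_j|^2\bigr],
\]
where the second identity uses $|Z|^2=\prod_j |W_j|^2$. Once the two moments of a single factor are known, the lemma follows. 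Throughout I treat the values $s_j(i)$, $i\in[d]$, as i.i.d. uniform on $\{1,\omega,\omega^2,\omega^3\}$. Then $\mathbb{E}[s]=0$, $\mathbb{E}[s^2]=0$, $|s|=1$, and distinct indices are independent. Four-wise independence would suffice for everything below.

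\textbf{First moment of a factor.} Expand
\[
W_j=\sum_{i,r} x_i y_r\, s_j(i)\overline{s_j(r)} .
\]
For $i\neq r$ we have $\mathbb{E}[s_j(i)\overline{s_j(r)}]=0$, and for $i=r$ we have $|s_j(i)|^2=1$. Therefore $\mathbb{E}[W_j]=\langle\mathbf{x},\mathbf{y}\rangle$, and consequently $\mathbb{E}[Z]=\langle\mathbf{x},\mathbf{y}\rangle^p$.

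\textbf{Second moment of a factor.} Expand
\[
|W_j|^2=\sum_{i,r,p,q}x_iy_rx_py_q\,s(i)\overline{s(r)}\,\overline{s(p)}\,s(q),
\]
dropping the subscript $j$. This is the step needing care. The expectation of each term is nonzero only if the unconjugated index multiset $\{i,q\}$ coincides with the conjugated multiset $\{r,p\}$. It is essential that $\mathbb{E}[s^2]=0$: this kills the pairing $i=q\neq r=p$, which would contribute $\mathbb{E}[s(i)^2\overline{s(r)}^2]$. That pairing survives in the real Rademacher case, which is exactly where the complex version gains.

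The surviving configurations are the following.
\begin{itemize}
\item $i=r$ and $p=q$ (including $i=p$). These contribute $\sum_{i,p}x_iy_ix_py_p=\langle\mathbf{x},\mathbf{y}\rangle^2$.
\item $i=p\neq r=q$. These contribute $\sum_{i\neq r}x_i^2y_r^2=\|\mathbf{x}\|_2^2\|\mathbf{y}\|_2^2-\sum_i x_i^2y_i^2$.
\end{itemize}
The all-equal case $i=r=p=q$ has $|s|^4=1$ and must be counted only once; it is already included in the first bullet. This gives
\[
\mathbb{E}\bigl[|W_j|^2\bigr]=\langle\mathbf{x},\mathbf{y}\rangle^2+\|\mathbf{x}\|_2^2\|\mathbf{y}\|_2^2-\sum_i x_i^2y_i^2 .
\]

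\textbf{Assembling the variance and the bound.} Using the complex variance $\mathrm{Var}[Z]=\mathbb{E}[|Z|^2]-|\mathbb{E}[Z]|^2$ and the product formulas yields the stated exact expression. For the upper bound, use $\langle\mathbf{x},\mathbf{y}\rangle^2\le\|\mathbf{x}\|_2^2\|\mathbf{y}\|_2^2$ (Cauchy--Schwarz) and drop the nonnegative terms $\sum_i x_i^2y_i^2$ and $\langle\mathbf{x},\mathbf{y}\rangle^{2p}$. The base is then at most $2\|\mathbf{x}\|_2^2\|\mathbf{y}\|_2^2$, which gives $\mathrm{Var}[Z]\le 2^p\|\mathbf{x}\|_2^{2p}\|\mathbf{y}\|_2^{2p}$.

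The only real obstacle is the case bookkeeping in the second-moment expansion. Both points there must be handled correctly: excluding the $i=q$, $r=p$ pairing via $\mathbb{E}[s^2]=0$, and not double-counting the diagonal $i=r=p=q$.
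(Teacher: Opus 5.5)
Your proposal is correct and follows essentially the same route as the paper. Both arguments factor $\mathbb{E}[Z]$ and $\mathbb{E}[|Z|^2]$ over the independent $s_j$, and both compute the per-factor second moment by pairing indices. The paper's three cases ($i=k=i'=k'$, $i=k\neq i'=k'$, $i=i'\neq k=k'$) are exactly your two bullets with the diagonal separated out, and both then finish with Cauchy--Schwarz.
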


\begin{proof}
Following the approach of~\cite{braverman2010ams4wiseindependenceproduct}, adapted from~\cite{pham2013fast}[Lemma 8], we compute the expectation and variance of $Z$. First, we consider the expectation. For each $j$, we note that
\begin{align}
\mathbb{E}\!\left[ Z_{s_j}(\mathbf{x})\, \overline{Z_{s_j}(\mathbf{y})} \right]
&= 
\mathbb{E}\!\left[
\left( \sum_{i=1}^{d} x_i\, s_j(i) \right)
\left( \sum_{k=1}^{d} y_k\, \overline{s_j(k)} \right)
\right], \\
&= \sum_{i=1}^{d}\sum_{k=1}^{d} x_i y_k\, \mathbb{E}[s_j(i)\overline{s_j(k)}], \\
&= \sum_{i=1}^{d} x_i y_i\, \mathbb{E}[|s_j(i)|^2] +  \sum_{i\neq k} x_i y_k\, \mathbb{E}[s_j(i)\overline{s_j(k)}], \\ 
&= \langle \mathbf{x}, \mathbf{y} \rangle,
\end{align}
Where, $\mathbb{E}[s_j(i)\overline{s_j}(k)] = 0 , \forall i\neq k$ and $\mathbb{E}[|s_j(i)|^2] = 1,  \forall\  i\in [d]$.

\noindent Since the functions $s_j$ are independent across different $j$, we have
\begin{align}
\mathbb{E}[Z] 
= \prod_{j=1}^{p} \mathbb{E}[Z_{s_j}(\mathbf{x}) \overline{Z_{s_j}(\mathbf{y})}]
= \langle \mathbf{x}, \mathbf{y} \rangle^{p}.
\end{align}

\noindent Next, to bound the variance,
\begin{align}
\mathrm{Var}(Z) = \mathbb{E}[|Z|^{2}] - |(\mathbb{E}[Z])|^{2}.
\end{align}

\noindent Because functions is independent across different $j$, we may write
\begin{align}\label{complex second moment ams}
\mathbb{E}[|Z|^{2}]
= \prod_{j=1}^{p} \mathbb{E}\!\left[ |\left( Z_{s_j}(\mathbf{x})\, \overline{Z_{s_j}(\mathbf{y})} \right)|^{2} \right].
\end{align}

\noindent For each $j$, expanding the square gives
\begin{align}
&\mathbb{E}\!\left[ |\left( Z_{s_j}(\mathbf{x})\, \overline{Z_{s_j}(\mathbf{y})} \right)|^{2} \right] \notag\\
&=
\mathbb{E}\!\left[
\left( \sum_{i=1}^{d} x_i\, s_j(i) \right)
\left( \sum_{k=1}^{d} y_k\, \overline{s_j(k)} \right)\left( \sum_{i=1}^{d} x_i\, \overline{s_j(i)} \right)
\left( \sum_{k=1}^{d} y_k\, s_j(k) \right)
\right] ,\\
&=
\sum_{i=1}^{d} \sum_{i'=1}^{d} \sum_{k=1}^{d} \sum_{k'=1}^{d}
x_i\, x_{i'}\, y_k\, y_{k'}\;
\mathbb{E}\!\left[ s_j(i)\overline{s_j(k)}\overline{s_j(i')}s_j(k') \right].
\end{align}

\noindent Observing that 
$\mathbb{E}[s_j(i)\overline{s_j(k)}\overline{s_j(i')}s_j(k')]$ 
is nonzero only when the indices form pairs (including the possibility that all four are identical), we have
\begin{align}
\mathbb{E}[s_j(i)\overline{s_j(k)}\overline{s_j(i')}s_j(k')]
=
\begin{cases}
1, & \text{if } i = k = i' = k', \\[4pt]
1, & \text{if } i = k \ne i' = k', \\[4pt]
1, & \text{if } i = i' \ne k = k', \\[4pt]
0, & \text{otherwise}.
\end{cases}
\end{align}
\noindent The contribution from terms with $i = k = i' = k'$ is 
\begin{align}
\sum_{i=1}^{d} x_i^{2} y_i^{2}.
\end{align}
\noindent Terms with $i = k \ne i' = k'$ contribute
\begin{align}
\sum_{i \ne i'} x_i y_i\, x_{i'} y_{i'}
= \left( \sum_{i=1}^{d} x_i y_i \right)^{2}
 - \sum_{i=1}^{d} x_i^{2} y_i^{2}
= \langle \mathbf{x}, \mathbf{y} \rangle^{2} - \sum_{i=1}^{d} x_i^{2} y_i^{2}.
\end{align}
Finally, for $i = i' \ne k = k'$ we obtain
\begin{align}
\sum_{i \ne k} x_i^{2} y_k^{2}
= \|\mathbf{x}\|_2^{2}\, \|\mathbf{y}\|_2^{2}
 - \sum_{i=1}^{d} x_i^{2} y_i^{2}.
\end{align}
Thus, summing these contributions, we have
\begin{align}
\mathbb{E}\!\left[| \left( Z_{s_j}(\mathbf{x})\, Z_{s_j}(\mathbf{y}) \right) |^{2} \right]
&=
\sum_{i=1}^{d} x_i^{2} y_i^{2}
+ \left( \langle \mathbf{x}, \mathbf{y} \rangle^{2} + \|\mathbf{x}\|_2^{2}\, \|\mathbf{y}\|_2^{2}
 - 2\sum_{i=1}^{d} x_i^{2} y_i^{2} \right),
\\[4pt]
&=
 \langle \mathbf{x}, \mathbf{y} \rangle^{2} + \|\mathbf{x}\|_2^{2}\, \|\mathbf{y}\|_2^{2}
- \sum_{i=1}^{d} x_i^{2} y_i^{2}.
\end{align}
Substituting this bound into Equation~\eqref{complex second moment ams} yields
\begin{align}    
\mathbb{E}[|Z|^{2}]
= \left(\langle \mathbf{x}, \mathbf{y} \rangle^{2} + \|\mathbf{x}\|_2^{2}\, \|\mathbf{y}\|_2^{2}
- \sum_{i=1}^{d} x_i^{2} y_i^{2}\right)^{p}, \label{eq:second moment x^p}
\end{align}
Which completes the proof since
\begin{align}
\mathrm{Var}(Z)
&= \mathbb{E}[|Z|^{2}] - \langle \mathbf{x}, \mathbf{y} \rangle^{2p},\\
&= \left( \langle \mathbf{x}, \mathbf{y} \rangle^{2} + \|\mathbf{x}\|_2^{2}\, \|\mathbf{y}\|_2^{2}
- \sum_{i=1}^{d} x_i^{2} y_i^{2}\right)^{p} - \langle \mathbf{x}, \mathbf{y} \rangle^{2p} \label{eq:variance quality tensor kronecker p times}.
\end{align}
Using the Cauchy--Schwarz inequality, 
$\langle \mathbf{x}, \mathbf{y} \rangle^{2} \le \|\mathbf{x}\|_2^{2}\, \|\mathbf{y}\|_2^{2}$,  
and noting that 
$\sum_{i=1}^{d} x_i^{2} y_i^{2} \ge 0$,  
it follows that
\begin{align}
\mathrm{Var}(Z) \leq  2^{p} \|\mathbf{x}\|_2^{2p}\|\mathbf{y}\|_2^{2p}.
\end{align}
\medskip

\end{proof}